\newenvironment{CompactEnumerate}{
\begin{list}{\arabic{enumi}.}{%
\usecounter{enumi}
\setlength{\leftmargin}{12pt}
\setlength{\itemindent}{3pt}
\setlength{\topsep}{3pt}
\setlength{\itemsep}{1pt}
}}
{\end{list}}
\newenvironment{CompactItemize}{
\begin{list}{$\bullet$}{%
\setlength{\leftmargin}{12pt}
\setlength{\itemindent}{1pt}
\setlength{\topsep}{1pt}
\setlength{\itemsep}{1pt}
}}
{\end{list}}
\newtheorem{theorem}{Theorem}[section]
\newtheorem*{non-theorem}{Informal Theorem}
\newtheorem{lemma}[theorem]{Lemma}
\newtheorem{claim}[theorem]{Claim}
\newtheorem{remark}[theorem]{Remark}
\newtheorem{corollary}[theorem]{Corollary}
\newtheorem{definition}{Definition}
\def \y {\mathbf y}
\def \eps {\epsilon}
\def \co {\mathrm co}
\def \hinge {\mathrm{hinge}}
\def \z {\mathbf z}
\def \NNN {\mathcal{N}}
\def \u {\mathbf u}
\def \N {\mathbb N}
\def \offdiag {{\mathrm{offdiag}}}
\def \supp {\mathrm{supp}}
\def \DisC {\mathrm{DisC}}
\def \a {\mathbf a}
\def \b {\mathbf b}
\def \c {\mathbf c}
\def \e {\mathbf e}
\def \s {\mathbf s}
\def \r {\mathbf r}
\def \v {\mathbf v}
\def \hinge {\mathrm{hinge}}
\def \x {\mathbf x}
\def \w {\mathbf w}
\def \m {\mathbf m}
\def \R {\mathbb{R}}
\def \N {\mathbb{N}}
\def \MM {\mathcal{M}}
\def \NN {\mathcal{N}}
\DeclareMathOperator{\E}{\mathbb{E}}
\DeclareMathOperator{\diag}{diag}
\DeclareMathOperator{\sr}{\mathbf{sr}}
\DeclareMathOperator{\tr}{tr}
\DeclareMathOperator{\rank}{rank}
\def \HS {\mathrm{F}}
\def \< {\left \langle}
\def \> {\right \rangle}
\def \etc {,\ldots,}
\newcommand{\norm}[1]{\left \| #1 \right \|}
\newcommand{\Prob}[2][]{\mathrm{Pr}_{#1} \left[ #2 \rule{0mm}{3mm}\right]}
\def \< {\left \langle}
\def \> {\right \rangle}
\begin{document}
\title{Restricted Isometry Property under High Correlations}
 \author{Shiva Prasad Kasiviswanathan\thanks{Amazon, USA. \href{mailto:kasivisw@gmail.com}{kasivisw@gmail.com}.} \and Mark Rudelson\thanks{University of Michigan, Ann Arbor, MI, USA. \href{mailto:rudelson@umich.edu }{rudelson@umich.edu }.  Partially supported by NSF grant, DMS-1464514.}}
\date{}
\maketitle
\begin{abstract}
 \normalsize
Matrices satisfying the Restricted Isometry Property (RIP) play an important role in the areas of compressed sensing and statistical learning. RIP matrices with optimal parameters are mainly obtained via probabilistic arguments, as explicit constructions seem hard. In this paper, we try to bridge this gap between random and deterministic designs by introducing a new model for restricted isometry designs that incorporates a fixed matrix into the construction.
%It is therefore interesting to ask whether a fixed matrix can be incorporated into a construction of restricted isometries. In this paper, we introduce a new broad ensemble of  matrices that satisfy the restricted isometry property.  
Our construction starts with a fixed (deterministic) matrix $X$ satisfying some simple stable rank condition, and we show that the matrix $XR$, where $R$ is a random matrix drawn from various popular probabilistic models (including, subgaussian, sparse, low-randomness, satisfying convex concentration property), satisfies the RIP with high probability. These theorems have various applications in signal recovery, deep learning, random matrix theory, dimensionality reduction, etc. Additionally, motivated by an application for understanding the effectiveness of word vector embeddings popular in natural language processing and machine learning applications, we investigate the RIP of the matrix $XR^{(\ell)}$ where $R^{(\ell)}$ is formed by taking all possible (disregarding order) $\ell$-way entrywise products of the columns of a random matrix~$R$.

%are formed by taking all possible  of an underlying set of d
%random vectors from 
%
%Additionally, we also investigate a new family of correlated  
\end{abstract}
%\thispagestyle{empty}
%\newpage
%\setcounter{page}{1}

\section{Introduction} \label{sec:intro}
A vector $\x \in \R^d$ is said to be $k$-sparse if it has at most $k$ nonzero coordinates.  Sparsity is a structure of wide applicability, with a broad literature dedicated to its study in various scientific fields (see, e.g.,~\citep{foucart2017mathematical,eldar2012compressed}). Given an $\eps \in (0,1)$, an $n \times d$ matrix $M$ (typically with $n << d$) is said to satisfy the $(k,\eps)$-Restricted Isometry Property (RIP)~\citep{candes2005decoding} if it approximately preserves the Euclidean norm in the following sense: for every $k$-sparse vector, we have
$$ (1-\eps) \| \x \| \leq \| M \x \| \leq (1+\eps) \| \x\|.$$
RIP is a fundamental property of a matrix that enables recovery of a sparse high-dimensional signal from its compressed measurement. Given this, matrices satisfying the restricted isometry property have found many interesting applications in high-dimensional statistics, machine learning, and compressed sensing~\citep{foucart2017mathematical,eldar2012compressed,wainwright2019high}.  Restricted isometry is also closely related to other matrix properties such as restricted nullspace, restricted eigenvalue, and pairwise incoherence~\citep{wainwright2019high}.

Various probabilistic models are known to generate random matrices that satisfy the restricted isometry property with a value of $k$ which is (almost) linear $n$.  For example, generating entries of $M$ i.i.d.\ from a common distribution (like satisfying subgaussianity) and then normalizing the columns of $M$ to unit norm, guarantees RIP with high probability provided $n = \Omega((k/\eps^2) \log(d/k))$~\citep{baraniuk2008simple}.  
%Up to the constant, lower bounds for Gelfand widths of $L_1$-balls show that this dependence on $d$ and in $k$ is optimal~\citep{foucart2010gelfand}. 
We refer the reader to~\citep{V11} for additional references to the probabilistic RIP literature.  The restricted isometry property also holds for a rich class of structured random matrices,  where usually the best known bounds for $n$ have additional log factors in $d$~\citep{foucart2017mathematical}. The use of randomness still remains pivotal for near-optimal results. 

At the same time, verifying whether a given matrix satisfies the restricted isometry property is tricky, as the problem of certifying RIP of a matrix in the worst case is NP-hard~\citep{bandeira2013certifying}. Furthermore, even determining the RIP value $\eps$ up to a certain approximation factor is hard in the average-case sense, as shown by~\citep{wang2016average} using a reduction from the Planted Clique Problem. 
While there are constructions of RIP matrices when $k = O(\sqrt{n})$, most methods however break down when $k$ is at least some constant times $\sqrt{n}$, see~\citep{bandeira2013road} for a survey of deterministic methods. The best unconditional explicit construction to date is due to~\citet{bourgain2011explicit}  which gives a RIP guarantee for $k = \Theta(n^{1/2+\delta})$ for some unspecified small constant $\delta > 0$, in the regime $n = \Omega(d^{1-\eps})$ and with matrix containing complex valued entries.~\citet{gamarnik2018explicit} recently showed why explicit RIP matrix construction is a ``hard'' challenge, by connecting it to a question in the field of extremal combinatorics which has been open for decades.

So on the one hand while it is easy to generate matrices satisfying the restricted isometry property through random designs, designing natural families of deterministic matrices satisfying RIP seem to run into hard barriers. The main conceptual contribution of this paper is a step towards bridging this gap. We construct families of practically motivated not-completely random matrices that satisfy RIP. We focus on a natural way of incorporating a fixed (deterministic) matrix into RIP constructions, raising the question of whether there are other interesting models that lie in the intersection of random and deterministic designs.
%
%In particular, we show how one could incorporate a fixed (deterministic) matrix into RIP constructions. 
%We start with this question: can we incorporate a fixed (deterministic) matrix into the RIP constructions?

\smallskip
\noindent\textbf{Our Contributions.}  We establish the restricted isometry property for a wide class of matrices, which can be factorized through (possibly non-i.i.d.) random matrices. In particular, we will be interested in the class of matrices which have a  $XR$-factorization, where $X$ is a fixed (deterministic) $n \times p$ matrix and $R$ is a $p \times d$ random matrix. The $XR$-model (product of a deterministic and random matrix) in the context of RIP captures a variety of applications some of which we discuss later. The main challenge in establishing RIP comes from the fact that the entries in the matrix $XR$ could be highly correlated, even if the entries in $R$ are independent.\!\footnote{Note that w.r.t.\ RIP condition, the $XR$-model behaves quite differently from the $RX$-model as we explain in Appendix~\ref{app:prelim}.}.

Our main result is that if we start with any deterministic matrix $X$, satisfying a very mild easy to check condition, the matrix $XR$ satisfies RIP (with high probability) for a $R$ constructed from a variety of popular probabilistic models. All we need is that the stable rank (or numerical rank) of $X$ is not ``too small''. Stable rank of a matrix $X$ (denoted by $\sr(X)$), defined as the squared ratio of Frobenius and spectral norms of $X$ is a commonly used robust surrogate to usual matrix rank in linear algebra. Stable rank of a matrix is at most its usual rank. Computing the stable rank of a matrix is a polynomial time operation, meaning that given the factorization $XR$ one could easily verify whether the required conditions on $X$ is satisfied. We investigate many common constructions of the random matrix~$R$:\footnote{All the results are high probability statements, and for simplicity we omit the dependence on certain parameters such as $\eps$, $\psi_2$-norm of subgaussian vectors, etc.}  
\begin{CompactItemize} \label{col:a}
\item {\em Columns of $R$ are independent subgaussian random vectors:} In this setting, we obtain RIP on the $XR$ matrix, if the stable rank of $X$ satisfies $\sr(X) =\Omega(k \log(d/k))$ (see Theorem~\ref{thm:rand}). Note that this setting includes the (standard random matrix) case where $R$ is an i.i.d.\ subgaussian random matrix. The dependence on $d,k$ in this stable rank condition cannot be improved in general.
\item {\em Generated by $l$-wise independent distributions:} We ask: can we reduce the amount of randomness in $R$? We answer in affirmative by showing that one can achieve restricted isometry with the same condition on $\sr(X)$ as above, by only requiring that $R$ be generated from a $2\sr(X)$-wise independent distribution (see Theorem~\ref{thm:lownoise}). 
%thereby reducing the amount of randomness needed in $R$ significantly.
\item {\em Sparse-subgaussian matrix:} Sparsity is a desirable property in $R$ as it leads to computational speedup when working with sparse signals. We use the standard Bernoulli-Subgaussian process for generating a sparse matrix $R$, where $R$ is defined as the Hadamard matrix product of an i.i.d.\ Bernoulli$(\beta)$ random matrix and an i.i.d.\ subgaussian random matrix. In this case, we get RIP if the stable rank of $X$ satisfies, $\sr(X) =\Omega(k \log(d/k)/\beta)$, where $\beta$ is the Bernoulli parameter (see Theorem~\ref{thm:sparse}).
\item {\em Columns of $R$ are independent vectors satisfying convex concentration:} Our result holds for distributions that satisfy the so-called convex concentration property. The convex concentration property of a random vector was first observed by Talagrand who first proved it for the uniform measure on the discrete cube and for general product measures with bounded support~\citep{talagrand1988isoperimetric,talagrand1995concentration}. Vectors satisfying convex concentration are regularly used in statistical analysis, as it includes random vectors drawn from a centered multivariate Gaussian distribution with arbitrary covariance matrix, random vectors uniformly distributed on the sphere,  random vectors satisfying the {\em logarithmic Sobolev} inequality, among many others~\citep{ledoux2001concentration}. Ignoring the dependence on the convex concentration constant, again we get RIP on $XR$ matrix if the stable rank of $X$ satisfies, $\sr(X) =\Omega(k \log(d/k))$ (see Theorem~\ref{thm:ccp}). 
\item {\em $\ell$-way column Hadamard-product construction}: Motived by an application in understanding the effectiveness of word vector embeddings (also referred to as just word embeddings) in linear classification tasks, we investigate a correlated random matrix, formed by taking all possible (disregarding order) $\ell$-way entrywise products\footnote{The entrywise product of $\ell$ vectors $\v_1,\dots,\v_\ell \in \R^p$ is the vector $\hat{\v} \in \R^p$ with $j$ entry equaling $\hat{v}_j = \prod_{i=1}^{\ell} v_{i_j}$.} of the columns of an  i.i.d.\ centered bounded random matrix $R$ (see Definition~\ref{defn:3}). Let $R^{(\ell)} \in \R^{p \times \binom{d}{\ell}}$ denote this constructed matrix starting from $R$ (with $R^{(1)} = R$).  We establish RIP on $XR^{(\ell)}$ for $\ell = 2$ if $\sr(X) = \Omega(k^2 \log(d^2/k))$ (see Theorem~\ref{thm: RIP}), and for $\ell \geq 3$ if $\sr(X) = \Omega(k^3 \log(d^\ell/k))$ (see Theorem~\ref{thm: RIP2}). Notice that the dependence on $\sr(X)$ on the sparsity parameter $k$ is worse than in the previous cases. However, the value of $k$ used in the motivating application is typically small. 
%We conjecture that these bounds can be improved to have a better dependence on~$k$.  
\end{CompactItemize}

Our proofs rely on the Hanson-Wright inequality which provides a large deviation bound for quadratic forms of i.i.d.\ subgaussian random variables~\citep{RVHanson-Wright}, along with its recent extensions~\citep{zhou2015sparse,adamczak2015note}. While the proof is simple when the columns of $R$ are independent subgaussian random vectors, various challenges arise when dealing with other models of the random matrix.  One general idea is get to a concentration bound on $\|X Z \u\|$, where $X$ is a fixed matrix, $Z$ is some random matrix, and $\u$ is a fixed $k$-sparse vector, and then use a net argument over sparse vectors on the sphere. In the case of column Hadamard-product, getting this concentration bound is tricky as it involves analyzing some high order homogeneous chaos in terms of the random variables, and we use a different idea based on bounding the $\psi_2$-norm. 

Throughout the proofs the challenge comes in dealing with dependences that arise both in the random matrix, and in the product matrix. This is especially true for the Hadamard-product construction, where {\em without} $X$ (i.e., $X$ is the identity matrix), the problem is much simpler and the matrix $R^{(\ell)}$ satisfies RIP under a milder condition that $p=\Omega(k \log(d^\ell/k))$~\citep{rudelson2008sparse,arora2018compressed}. Intuitively in the $R^{(\ell)}$ case one has only to ensure that $\|R^{(\ell)} \u \|$ is close to  a constant with high probability for a fixed sparse unit vector $\u$. In the case of $XR^{(\ell)}$, in addition to it, one has to guarantee that the direction of the vector $R^{(\ell)} \u$ is more or less uniformly distributed over the sphere, making the problem significantly harder to analyze.
So while it is tempting to conjecture that the right dependence of $k$ in $\sr(X)$ in the Hadamard-product case should also be linear, it appears challenging to obtain this linear bound, and it is plausible that this superlinear dependence on $k$ might in fact be unavoidable.
%So it is possible that this this superlinear dependence on $k$ in $\sr(X)$, as we obtained, may not be avoidable.

%A point to note is that the complexity of the problem due to presence of $X$ . The problem is much simpler for $R^{(\ell)}$, and the bound would be \Omega(k \log^c k) regardless of p.
%In the case of  X=I, one has only to ensure that \|R^{(2)}y\|_2 is close to  a constant with high probability. In the case of a general X, in addition to it, one has to guarantee that the direction of the vector R^{(2)}y is more or less uniformly distributed over the sphere. I don't know if this would require a larger k.

%\smallskip
%\noindent\textbf{Our Techniques.} [To be Filled]
%capturing linear transformation that are commonly applied.  
%Our research started with the goal of understanding the restricted isometry property in random matrices under high correlations 
%some of the best constructions are due to~\citet{bourgain2011explicit} and~\citet{bandeira2017conditional}, both of which give RIP guarantee for $k = \Theta(n^{1/2+\delta})$ for some unspecified small constant $\delta$ (the latter construction is conditional on a number-theoretic conjecture being true).

\smallskip
\noindent\textbf{Applications.} The restricted isometry property is widely utilized in compressed sensing and statistical learning literature. Here we mention a few interesting applications of our restricted isometry results. 
\begin{list}{{\bf (\arabic{enumi})}}{\usecounter{enumi}
\setlength{\leftmargin}{8pt}
\setlength{\listparindent}{3pt}
\setlength\parindent{0pt}
\setlength{\parsep}{3pt}}
\item\textbf{Effectiveness of Word/Sequence Embeddings.} \label{app1} 
%\!\footnote{While our discussion is focused on word/sequence embeddings, we believe that our general idea can possibly be translated to other popular embeddings too.}
Consider a vocabulary set $\mathcal{W}$ (say, all words in a particular language). Word embeddings, which associates with each word $\w$ in a vocabulary set $\mathcal{W}$ a vector representation $\v_{w} \in \R^p$, is a basic building block in Natural Language Processing pipelines and algorithms. Word embeddings were recently popularized via embedding schemes such as {\em word2vec}~\citep{mikolov2013distributed} and {\em GloVe}~\citep{pennington2014glove}. Word embeddings pretrained on large sets of raw text have demonstrated remarkable success when used as features to a supervised learner in various applications ranging from question-answering~\citep{zhou2015learning} to sentiment classification of text documents~\citep{maas2011learning}. The general intuition behind word embeddings is that they are known to capture the ``similarities'' between words. There has also been recent work on creating representations for {\em word sequences} such as {\em phrases} or {\em sentences} with methods ranging from simple composition of the word vectors to sophisticated architectures as {\em recurrent neural networks} (see e.g.,~\citep{arora2016simple} and references therein)
%There has also been recent work on creating representations for word sequences such as phrases or sentences  with methods ranging from simple additional composition of the word vectors to sophisticated architectures such as convolutional neural networks and recurrent neural networks (see e.g.,~\citep{arora2016simple} and references therein). 

\noindent Understanding the theoretical properties of these word embeddings is an area of active interest. In a recent result,~\citet{arora2018compressed} introduced the scheme of {\em Distributed Cooccurrence} (DisC) embedding (Definition~\ref{defn:2}) for a word sequence that produces a compressed representation of a Bag-of-$L$-cooccurrences vector.\footnote{An $\ell$-cooccurrence is a set of $\ell$ words. Bag-of-$L$-cooccurrences for a word sequence counts the number of times any possible $\ell$-cooccurrence (from the vocabulary set $\mathcal{W}$) for $\ell \in [L]$ appears in the word sequence. See Definition~\ref{defn:1}. Linear classification models are empirically known to perform well over these simple representations~\citep{wang2012baselines,arora2018compressed}.}
%\!\footnote{An $\ell$-concurrence is the set of $\ell$ words. Bag-of-$L$-cooccurrences for a word sequence counts the number of times any $\ell$-cooccurrence for $\ell \in [L]$  
They showed that if one uses i.i.d.\ $\pm 1$-random vectors as word embeddings, then a linear classifier trained on these compressed DisC embeddings performs ``as good as'' as a similar classifier trained on the original Bag-of-$L$-cooccurrences vectors. This was the first result that provided provable quantification of the power of any text embedding.\footnote{\citet{arora2018compressed} have additional results on the powerfulness of low-memory LSTM (Long Short Term Memory) network embeddings, by showing that the LSTMs under certain initialization can simulate these DisC embeddings. Our more general RIP results can be easily used to generalize these results too (details omitted).}
%least as good as every linear classifier operating on the full BonG vector.
% low-dimensional embeddings derived from a low-memory LSTM are provably at least ``as good as'' on classification tasks (up to a small error) as a linear classifier operating on a Bag-of-$L$-grams vector.\footnote{Bag-of-$L$-grams representation is the $L$-gram extension of bag-of-words representation which counts the number of times any $i$-gram for $i \leq l$ appears in a document.} 
They achieve this result by connecting this problem with the theory of compressed sensing, an idea that we build upon here. Let $V \in \R^{p \times d}$ be a matrix whose columns are the embeddings for all the words in $\mathcal{W}$.~\citet{arora2018compressed} result relies on establishing the restricted isometry property on the matrix $V^{(\ell)} \in \R^{p \times \binom{d}{\ell}}$ where $V$ is an i.i.d.\ $\pm 1$-random matrix (i.e., random vectors are used as word embeddings).

\noindent Linear transformations are regularly used to transfer between different embeddings or to adapt to a new domain~\citep{bollegala2017learning,arora2018linear}. The linear transformation can encode {\em contextual information}, an idea utilized recently by~\citep{khodak2018carte} who applied a linear transformation on the DisC embedding scheme to construct a new embedding scheme (referred to as {\em \`a la carte} embedding), and empirically showed that it outperforms many other popular word sequence embedding schemes. Now akin to~\citep{arora2018compressed} result on DisC embeddings, our results shed some theoretical insights into the performance on linear transformations of DisC embeddings. In particular, our RIP results on $X V^{(\ell)}$, where is $V \in \R^{p \times d}$ is an i.i.d.\ centered bounded random matrix, provides provable performance guarantees on linear transformation (defined by $X$) of DisC embeddings in a linear classification setting, under a stable rank assumption on $X$ (see Corollary~\ref{cor:emb}). We expand on this application in Section~\ref{sec:word}.

\item\textbf{Deep Linear Networks.} A deep linear network $\mathcal{D}\,:\, \R^d \rightarrow \R^n$ is a neural network that has multiple hidden layers but have no nonlinearities between layers (typically $n \ll d$). That is, for a given datapoint $\x$, the output $\mathcal{D}(\x)$ is computed via a series $\mathcal{D}(\x) = W_t W_{t-1} \dots W_1 \x$ of matrix multiplications. Such models are commonly used as a tool for theoretically understanding deep neural networks~\citep{arora2018convergence,laurent2018deep}.  A simple question that arises is whether low-dimensional output of a deep linear network  can be used as a surrogate representation for classification, i.e., what happens if instead of $(\x_1,\upsilon_1),\dots,(\x_b,\upsilon_b)$ where $\upsilon_i$ is the label on $\x_i$, $(\mathcal{D}(\x_1),\upsilon_1),\dots,(\mathcal{D}(\x_b),\upsilon_b)$ is used for classification? Our RIP results help answer this question. For example, if $W_1$ is a matrix from one of the above random matrix families\footnote{Randomness is commonly used in deep networks for initializing weights~\citep{arora2018convergence} before training, and sometimes random weights are used without training~\citep{saxe2011random}.}, and the matrix $X = W_t W_{t-1} \dots W_2$ satisfies the stable rank condition, then we get that the matrix $W_t W_{t-1} \dots W_1$ satisfies RIP with high probability. Then the known results in compressed learning (see for example Theorem~\ref{thm:arloss}) can be used to relate the performance of a linear classifier trained on the compressed representation $(\mathcal{D}(\x_1),\upsilon_1),\dots,(\mathcal{D}(\x_b),\upsilon_b)$ to a similar classifier trained on the original representation $(\x_1,\upsilon_1),\dots,(\x_b,\upsilon_b)$. 
%To the best of our knowledge this is the first result of this kind for deep linear networks.

\item\textbf{Linear Transformation and Johnson-Lindenstrauss Embedding.} {\em Johnson-Lindenstrauss (JL) embedding lemma} states that any set of $m$ points in high dimensional Euclidean space $\R^d$ can be embedded into $p = O(\log(m)/\epsilon^2)$ dimensions, without distorting the distance between any two points by more than a factor between $1-\epsilon$ and $1+\epsilon$~\citep{johnson1984extensions}. JL lemma has become a valuable tool for dimensionality reduction~\citep{TCS-060}. Typically, the embedding is constructed through a random linear map referred to as an $\epsilon$-JL matrix. Let $R \in \R^{p \times d}$ be an $\eps$-JL matrix.
Consider a set of $m$ points $\a_1,\dots,\a_m \in \R^d$, and let $R\a_1,\dots,R \a_m$ be their low-dimensional embedding. A natural question to ask is: under what fixed linear transformations of $R \a_1,\dots,R \a_m$ does this distance preservation property still hold? More concretely, let $X \in \R^{n \times p}$ be a linear transformation. Does the JL distance preservation property still hold for $XR \a_1,\dots,XR \a_m$? Our results answers this question because of the close connection between JL-embedding and the restricted isometry property.~\citet{krahmer2011new} showed that, when the columns of a matrix satisfying RIP are multiplied by independent random signs, any $(O(\log m),O(\epsilon))$-RIP matrix becomes an $\epsilon$-JL matrix for a fixed set of $m$ vectors with probability at least $1- m^{-\Omega(1)}$.\!\footnote{The connection in other direction going from JL-embedding to RIP is also well-known~\citep{baraniuk2008simple}.} This means, one can now use our RIP results discussed above. For example, if the entries of $R$ are drawn i.i.d.\ from a centered symmetric subgaussian distribution, then for a $X$ (scaled to have  unit Frobenius norm), if $\sr(X) = \tilde{\Omega}(\log(m)/\epsilon^2)$\footnote{The $\tilde{\Omega}$ notation hides polylog factors in $d,1/\eps$.}, we have that $XR \a_1,\dots,XR \a_m$ satisfies the JL-embedding property with high probability (as multiplying with random signs does not change the distribution $R$).

\item\textbf{Compressed Sensing under Linear Transformations.} Compressed sensing algorithms are designed to recover approximately sparse signals, and a popular sufficient condition for a matrix to succeed for the purposes of compressed sensing is given by the restricted isometry property (see, e.g., Theorem~\ref{thm:candes}). 
Another question that can be addressed using our results is how does the recovery guarantee hold if we apply a linear transformation on the compressed signals. 
%In this case, $X$ denotes a linear transformation applied on the compressed signal $R \x$ (where $\x$ is a sparse vector).  
Our RIP results establishes conditions on $X$, for the class of random matrices $R$ described earlier, under which given $\y = X R \x + \e$ where $\e$ is a noise vector, one can recover an approximately sparse vector $\x$.

\item\textbf{Source Separation.} Separation of underdetermined mixtures is an important problem in signal processing. Sparsity has often been exploited for source separation. The general goal with source separation is that given a signal matrix $S \in \R^{n_1 \times c}$,  a mixing matrix $M \in \R^{n_1 \times a}$, and a dictionary $\Phi \in \R^{b \times c}$,  is to find a matrix of coefficients $C \in \R^{a \times b}$ such that $S \approx MC\Phi$ and $C$ is as sparse as possible.  The dictionary is generally overcomplete, i.e., $b > c$. The connection between source separation and compressed sensing was first noted by~\citep{blumensath2007compressed}.\footnote{In a variant of this problem, called the {\em blind source separation}, even the mixing matrix $M$ is assumed to be unknown.}  It is easy to recast (see Appendix~\ref{app:SS} for details) the source separation problem as a compressed sensing problem of the form: $\s = XR\c$, where the goal is to estimate the sparse $\c$ (entries of the matrix $C$). Here, $\s \in \R^n$, $X \in \R^{n \times p}$, $R \in \R^{p \times d}$, and $\c \in \R^d$, with $n = n_1c$, $d = ab$, $p = ac$. 
Hence, our RIP results establishes conditions on the mixing matrix $M$, for the classes of random matrices (dictionaries) described earlier, under which we can get recovery guarantees on $\c$ (entries of the matrix $C$).

%A consequence of our RIP results is that, if the dictionary matrix $\Phi$ is a random matrix drawn whose elements are either drawn from a $l$-wise independent distribution, or drawn independently from a subgaussian distribution or a sparse subgaussian distribution,\!\footnote{The dictionary matrix $R$ is often chosen to be from these families~\citep{chen2016compressed}.} then we get the RIP property on $XR$, which in turn implies recovery guarantees on $\c$ (entries of the matrix $C$). Note that that mixing matrix $M$ can be any matrix as long as $X$ which is derived from $R$ satisfies a stable rank condition.

%A simple consequence of our RIP result is that given $\y$ where $\y = X R \x + \e$ and $\e$ is a noise vector, one can recover $\x$ (see Theorem~\ref{thm:candes} for a precise statement), given that $R$ is drawn from the $X$ satisfies the necessary stable rank condition and $R$ is drawn from a wide class of random measurement matrices.

\item\textbf{Singular Values of Correlated Random Matrices.}  Next application is a simple consequence of the restricted isometry property.
Understanding the singular values of random matrices is an important problem, with lots of applications in machine learning and in the field of non-asymptotic theory of random matrices~\citep{vershynin2016high}. 
%The singular values of a random matrix are an important topic of study in random matrix theory. A small sample includes covariance estimation~\citep{vershynin2012close}, stability in numerical analysis~\citep{von1961collected}, and quantum state tomography~\citep{gross2010quantum}. In our case, we are interested in correlated random matrices. 
Restricted isometry property of $M=XR$ can be interpreted in terms of the extreme singular values of submatrices of $M$. Indeed, the restricted isometry property (assuming for simplicity that $X$ has unit Frobenius norm\footnote{Otherwise results can be appropriately scaled.}) equivalently states that the inequality 
\begin{align} \label{eqn:boundsall}
\sqrt{1-\epsilon} \leq \lambda_{\min}(M_I^\top M_I) \leq \lambda_{\max}(M_I^\top M_I) \leq \sqrt{1+\epsilon}.
\end{align} 
holds for all $n \times k$ submatrices $M_I= XR_I$, those formed by the columns of $M$ indexed by sets $I$ of size $k$.  If the columns of $R$ are independent random vectors drawn from distribution $\mathcal{R}$ over $\R^p$, we can think of our restricted isometry results as bounds on the singular values of matrices formed from by linear transformation of random vectors (or equivalently, the eigenvalues of the sample covariance matrix drawn from the distribution $X\mathcal{R}$). 
For example, let $Q \in \R^{p \times k}$ be a matrix whose columns are independent random vectors from a subgaussian distribution or satisfying convex concentration property, then our RIP results through~\eqref{eqn:boundsall} bounds {\em all} the singular values on $XQ$ (with high probability) under the condition that $\sr(X) =\Omega(k\log(1/\epsilon)/\epsilon^2)$. 
Previously only a spectral norm bound on $XQ$ for the cases where $Q$ has independent entries drawn from a subgaussian or heavy-tailed distributions was known~\citep{RVHanson-Wright,vershynin2011spectral}.
\end{list}

\section{Preliminaries} \label{sec:prelim}
\noindent\textbf{Notation.} We use $[n]$ to denote the set $\{1 \etc n\}$. We denote the Hadamard (elementwise) product by $\odot$. For a set $I \subseteq [d]$, $I^{\co}$ denotes its complement set.

Vectors used in the paper are by default column vectors and are denoted by boldface letters. For a vector $\v$, $\v^\top$ denotes its transpose, $\|\v\|$ and $\|\v\|_1$ denote the $L_2$ and $L_1$- norm respectively, and $\supp(\v)$ its support. $\e_i \in \R^d$ denote the standard basis with $i$th coordinate $1$. For a matrix $M$, $\|M\|$ denotes its spectral norm, $\|M\|_F$ denotes its Frobenius norm, and $M_{ij}$ denotes its $(i,j)$th entry, $\diag(M)$ denotes the diagonal of $M$, and $\offdiag(M)$ denotes the off-diagonal of $M$. $\mathbb{I}_n$ represents the identity matrix in dimension $n$.  For a vector $\x$ and set of indices $S$, let $\x_S$ be the vector formed by the entries in $\x$ whose indices are in $S$, and similarly, $M_S$ is the matrix formed by columns of $M$ whose indices are in $S$. 
 
The Euclidean sphere in $\R^d$ centered at origin is denoted by $S^{d-1}$. We call a vector $\a \in \R^d$, {\em $k$-sparse}, if it has at most $k$ non-zero entries. Denote by $\Sigma_k$ the set of all vectors $\a \in S^{d-1}$ with support size at most $k$: $\Sigma_k=\{ \a \in S^{d-1} \,: \, | \supp(\a) | \leq k\}$. 
Stable rank (denoted by $\sr()$) of a matrix $M$ is defined as: 
$$\sr(M) = \norm{M}_\HS^2 / \norm{M}^2.$$
Stable rank cannot exceed the usual rank. The stable rank is a more robust notion than the usual rank because it is largely unaffected by tiny singular values.  

Throughout this paper $C,c,C'$, also with subscripts, denote positive absolute constants, whose value may change from line to line. In Appendix~\ref{app:prelim} we discuss additional preliminaries about subgaussian/subexponential random variables,  sparse recovery, and Hanson-Wright inequality.

\noindent\textbf{Restricted Isometry.}~\citet{candes2005decoding} introduced the following isometry condition on matrices $M$. It is perhaps one of the most popular property of a matrix which makes it ``good'' for compressed sensing. 
\begin{definition} \label{def:RIP}
Let $M$ be an $n \times d$ matrix with real entries. Let $\epsilon \in (0,1)$ and let $k < d$ be an integer. We say that $M$ satisfies $(k,\epsilon)$-RIP if for every $k$-sparse vector $\u \in S^{d-1}$ (i.e., $\u \in \Sigma_k$)
$$ (1-\epsilon)  \leq \| M \u \|  \leq (1+\epsilon).$$
\end{definition}
Thus, $M$ acts almost as an isometry when we restrict attention to $k$-sparse vectors. 
%Note that the above RIP condition is equivalent to saying that all the eigenvalues of $M_I^\top M_I$ are in the range $[(1-\epsilon)^2,(1+\epsilon)^2]$, for any subset $I \subset [d], |I| = k$.
Geometrically, the restricted isometry property guarantees that the geometry of $k$-sparse vectors $\u$ is well preserved by the measurement matrix $M$. In turns out that in this case, given a (noisy) compressed measurement $M\x$, for approximately sparse $\x$, one can recover $\x$ using a simple convex program. Theorem~\ref{thm:candes} provides a bound on the worst-case recovery performance for uniformly bounded noise. Similar recovery results using RIP on the measurement matrix $M$ are also well-known under other interesting settings  (we refer the reader to the survey on this topic in~\citep{eldar2012compressed}).

We investigate the restricted isometry property of matrices $M \in \R^{n \times d}$ that can be factored as $M = XR$, where $R$ will be some random matrix and $X$ is a fixed matrix. Now RIP is not invariant under scaling, i.e., given a RIP matrix $M$, changing $M$ to some $a M$ for some $a \in \R$ scales both the left and right hand side of the inequality in the Definition~\ref{def:RIP} by $a$.  So while working in the $XR$-model we have to adjust for the scaling introduced by $X$, and we use a generalization of  Definition~\ref{def:RIP} appropriate for this setting. %It is easy to see that the correct scaling factor in this case will be $\|X\|_F$. Therefore, in this paper, we operate under the following RIP definition.
\begin{definition} [RIP Condition] \label{defn:RIPmod}
A matrix $M = XR$ satisfies $(k,\epsilon)$-RIP if for every $k$-sparse vector $\u \in S^{d-1}$ (i.e., $\u \in \Sigma_k$)
$$ (1-\epsilon) \|X\|_F  \leq \| XR \u \|  \leq (1+\epsilon) \| X \|_F.$$
\end{definition}
This scaling by $\|X\|_F$ is unavoidable, because even if $R$ is a matrix with centered uncorrelated entries of unit variance, then $\E[\|XR\u\| ^2]=\|X\|_F^2$ for any $\u$ with $\|\u\|=1$.  Note that it is easy to reconstruct standard RIP scenarios by choosing $X$ appropriately. For example, if $R$ contains i.i.d.\ subgaussian random variables with zero mean and unit variance, then setting $X = \mathbb{I}_n/\sqrt{n}$, leads to standard setting of $XR= R/\sqrt{n}$ and $\|X\|_F=1$. 
%Similarly, in the case of a random Fourier matrix with $X = \mathbb{I}_n/\sqrt{d}$.

\noindent\textbf{Hanson-Wright Inequality.} An important concentration tool used in this paper is the Hanson-Wright inequality (Theorem~\ref{thm: HW}) that investigates concentration of a quadratic form of independent centered subgaussian random variables, and its recent extensions~\citep{zhou2015sparse,adamczak2015note}. A slightly weaker version of this inequality was first proved in~\citep{hanson1971bound}. 

\begin{theorem}[Hanson-Wright Inequality~\citep{RVHanson-Wright}] \label{thm: HW}
Let $\x = (x_1,\ldots,x_n) \in \R^n$ be a random vector with independent components $x_i$ which satisfy $\E[x_i] = 0$ and $\|x_i\|_{\psi_2} \leq K$. Let $M$ be an $n \times n$ matrix. Then  for every $t \ge 0$,
\[ \Prob{ \left |\x^\top M \x - \E[\x^\top M \x] \right | > t} \le 2 \exp \Big( - c \min \Big \{ \frac{t^2}{K^4 \norm{M}_\HS^2}, \frac{t}{K^2 \norm{M}} \Big \} \Big ). \]
\end{theorem}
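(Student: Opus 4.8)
The plan is to prove this by the exponential-moment (Chernoff) method, after splitting the quadratic form into its diagonal and off-diagonal contributions, following the decoupling argument of Rudelson--Vershynin. Writing $\x^\top M \x - \E[\x^\top M\x] = D + S$ with $D = \sum_i M_{ii}(x_i^2 - \E[x_i^2])$ and $S = \sum_{i\neq j}M_{ij}x_ix_j$, I would bound the tails of $D$ and $S$ separately and combine them with a union bound, splitting the level $t$ into two halves.

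For the diagonal part, each $x_i^2 - \E[x_i^2]$ is a centered subexponential variable with $\psi_1$-norm $O(K^2)$, so $D$ is a sum of independent centered subexponentials. Bernstein's inequality then gives $\Prob{|D|>t} \le 2\exp\bigl(-c\min\{t^2/(K^4\sum_i M_{ii}^2),\, t/(K^2\max_i|M_{ii}|)\}\bigr)$; since $\sum_i M_{ii}^2 \le \norm{M}_\HS^2$ and $\max_i|M_{ii}|\le\norm{M}$, this already has the required form. The off-diagonal part $S$ is the heart of the argument. I would fix $\lambda$ with $|\lambda|\le c/(K^2\norm{M})$ and bound the MGF $\E\exp(\lambda S)$. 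First, decoupling (the exponential-moment form of the de la Pe\~na--Montgomery-Smith inequality) replaces $S$ by its decoupled version: with $\x'$ an independent copy of $\x$, one has $\E\exp(\lambda S)\le\E\exp(4\lambda\,\x^\top M\x')$. Conditioning on $\x$ and using that the $x'_j$ are independent centered subgaussians makes $\x^\top M\x'=\sum_j(M^\top\x)_j x'_j$ a linear form, so $\E_{\x'}\exp(4\lambda\,\x^\top M\x')\le\exp(C\lambda^2K^2\norm{M^\top\x}^2)$.

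It then remains to take the expectation over $\x$ of $\exp(\mu\,\x^\top B\x)$, where $B=MM^\top\succeq 0$ and $\mu=C\lambda^2K^2$ --- a positive-semidefinite quadratic form in a non-Gaussian subgaussian vector. This is the step I expect to be the main obstacle, since $\x$ is not Gaussian and the form has no closed MGF. To handle it I would linearize the square by a Gaussian integral: $\exp(\mu\norm{M^\top\x}^2)=\E_\g\exp(\sqrt{2\mu}\,\langle M^\top\x,\g\rangle)$ for $\g$ a standard Gaussian vector independent of $\x$. Swapping the order of expectation and treating $\langle\x,M\g\rangle$ as a linear (hence subgaussian) form in $\x$ gives $\E_\x\exp(\mu\,\x^\top B\x)\le\E_\g\exp(C\mu K^2\norm{M\g}^2)$, and now the inner variable $\norm{M\g}^2=\g^\top M^\top M\g$ is a genuine Gaussian quadratic form whose MGF is the explicit product $\prod_i(1-2C\mu K^2\sigma_i^2)^{-1/2}$ over the singular values $\sigma_i$ of $M$.

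For $\mu\le c/(K^2\norm{M}^2)$ this product is at most $\exp(C'\mu K^2\sum_i\sigma_i^2)=\exp(C'\mu K^2\norm{M}_\HS^2)$, using $\sum_i\sigma_i^2=\norm{M}_\HS^2$. Substituting back through $\mu=C\lambda^2K^2$ yields $\E\exp(\lambda S)\le\exp(C''\lambda^2K^4\norm{M}_\HS^2)$ on the range $|\lambda|\le c/(K^2\norm{M})$, which is exactly the range in which the Gaussian-quadratic MGF converged. A standard Chernoff optimization over $\lambda$ --- taking $\lambda\propto t/(K^4\norm{M}_\HS^2)$ in the subgaussian regime where this stays in range, and the endpoint $\lambda\asymp 1/(K^2\norm{M})$ in the subexponential regime --- then produces $\Prob{|S|>t}\le 2\exp\bigl(-c\min\{t^2/(K^4\norm{M}_\HS^2),\, t/(K^2\norm{M})\}\bigr)$. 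Combining the diagonal and off-diagonal tails gives the stated inequality.
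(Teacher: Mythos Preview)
The paper does not give its own proof of this theorem: it is quoted in the preliminaries as a known result from \citep{RVHanson-Wright} and used as a black box throughout. So there is no ``paper's proof'' to compare against here.

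That said, your outline is a faithful and correct sketch of the Rudelson--Vershynin argument itself: the diagonal/off-diagonal split, Bernstein for the subexponential diagonal part, decoupling for the off-diagonal chaos, the Gaussian linearization $\exp(\mu\norm{z}^2)=\E_{\g}\exp(\sqrt{2\mu}\,\langle z,\g\rangle)$ to pass from a subgaussian quadratic form to a genuine Gaussian one, and the explicit product formula $\prod_i(1-2C\mu K^2\sigma_i^2)^{-1/2}$ for the Gaussian MGF, followed by Chernoff optimization on the restricted range $|\lambda|\le c/(K^2\norm{M})$. The paper in fact invokes exactly this MGF bound for the off-diagonal part (stated as Claim~\ref{claim:rv}, attributed to \citep{RVHanson-Wright}) inside the proof of Theorem~\ref{thm:lownoise}, and uses the same diagonal/off-diagonal decomposition there; your sketch is consistent with that usage.
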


\section{Restricted Isometry of $XR$ with ``Random'' $R$} \label{sec:randR}
In this section, we investigate the restricted isometry property for the class of $XR$ matrices, for various classes of random matrices $R \in \R^{p \times d}$. Missing details from this section are collected in Appendix~\ref{app:randR}.

As a warmup, we start with the simplest case where $R$ is a centered i.i.d.\ subgaussian random matrix, and build on this result, where we consider various other general families of $R$ such as those constructed using low-randomness, with sparsity structure, or satisfying a convex concentration condition. 

Theorem~\ref{thm:rand} presents the result in case $R$ is an i.i.d.\ subgaussian random matrix. The proof idea here is quite simple, but provides a framework that will be helpful later. Under the stable rank condition, a net argument, along with Hanson-Wright inequality implies, $\Pr[\exists I \subset [d], \ |I|=k, \ \ \|XR_I\| \ge C_0 \eps \|X\|_{F}] \le \exp (-c_0 \eps^2 \sr(X)/K^4)$, where $R_I$ is the $p \times k$ submatrix of $R$ with columns from the set $I$. Also, by the same Hanson-Wright, for any $\u \in S^{d-1}$, $\Pr[|\| XR \u \| - \| X \|_F | \ge \eps \|X\|_{F}] \le \exp (-c_1 \eps^2 \sr(X)/K^4)$. Using a bound on the net size for sparse vectors gives all the required ingredients for the following theorem.

\begin{theorem} \label{thm:rand}
Let $X$ be an $n \times p$ matrix. Let $R=(R_{ij})$ be a $p \times d$ matrix whose entries are with independent entries such that $\E[R_{ij}] = 0$, $\E[R_{ij}^2] = 1$, and $\|R_{ij}\|_{\psi_2} \leq K$. Let $\epsilon \in (0,1)$, and let $k \in \N$ be a number satisfying $\sr(X)\ge CK^4 \frac{k}{ \epsilon^2} \log \left ( \frac{d}{k} \right )$. Then with probability at least $1-\exp(-c \epsilon^2 \sr(X)/K^4)$, the matrix $XR$ satisfies $(k, \epsilon)$-RIP, i.e.,
 \[\forall \u \in \Sigma_k, \; (1-\epsilon) \norm{X}_{\HS} \le \|XR\u\| \le (1+\epsilon) \norm{X}_{\HS}.\]
\end{theorem}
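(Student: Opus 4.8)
The plan is to establish a sharp pointwise concentration inequality for a single sparse direction via Hanson--Wright, and then promote it to a uniform statement over all of $\Sigma_k$ by a covering argument, exactly along the lines sketched before the theorem. To set up the pointwise step I would fix a unit vector $\u$ and write $\|XR\u\|^2 = \w^\top (X^\top X)\,\w$ with $\w = R\u \in \R^p$. The coordinates $w_i = \sum_j R_{ij}u_j$ are independent across $i$ (distinct rows of $R$ use disjoint independent entries), have mean zero and variance $\sum_j u_j^2 = 1$, and satisfy $\|w_i\|_{\psi_2} \le CK$. Hence $\E[\w^\top X^\top X \w] = \tr(X^\top X) = \norm{X}_\HS^2$, and Theorem~\ref{thm: HW} applied to the $p\times p$ matrix $M = X^\top X$ with $t = \epsilon\norm{X}_\HS^2$ gives a deviation governed by $\min\{t^2/(K^4\|X^\top X\|_\HS^2),\, t/(K^2\|X^\top X\|)\}$. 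The decisive move is to read off the stable rank: since $\|X^\top X\| = \|X\|^2$ and $\|X^\top X\|_\HS \le \|X\|\,\norm{X}_\HS$, both branches of the minimum are at least a constant times $\epsilon^2\sr(X)/K^4$, with the quadratic branch dominating because $\epsilon<1\le K^2$. Taking square roots and using $\sqrt{1-\epsilon}\ge 1-\epsilon$, $\sqrt{1+\epsilon}\le 1+\epsilon$ converts this into $\Prob{(1-\epsilon)\norm{X}_\HS \le \|XR\u\| \le (1+\epsilon)\norm{X}_\HS} \ge 1 - 2\exp(-c\epsilon^2\sr(X)/K^4)$ for each fixed $\u$.

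Next I would run the net argument over sparse vectors. For each support set $I\subset[d]$ with $|I|=k$, take an $\eta$-net $\mathcal{N}_I$ of the unit sphere of the coordinate subspace indexed by $I$, with $|\mathcal{N}_I|\le (3/\eta)^k$, and set $\mathcal{N} = \bigcup_I \mathcal{N}_I$, so $|\mathcal{N}|\le \binom{d}{k}(3/\eta)^k$. Applying the pointwise bound with a slightly smaller target accuracy $\delta \asymp \epsilon$ and a union bound over $\mathcal{N}$, all net points simultaneously satisfy the two-sided estimate except with probability at most $\binom{d}{k}(3/\eta)^k \cdot 2\exp(-c\delta^2\sr(X)/K^4)$. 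Since $\log\!\big(\binom{d}{k}(3/\eta)^k\big) \le k\log(C'd/(k\eta)) \lesssim k\log(d/k)$ (absorbing the $\eta$-dependence into the constant), the hypothesis $\sr(X)\ge CK^4(k/\epsilon^2)\log(d/k)$ makes the net cardinality negligible against the concentration exponent, leaving a net-level failure probability of $\exp(-c'\epsilon^2\sr(X)/K^4)$.

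Finally I would transfer the net estimate to all of $\Sigma_k$ and identify the genuine obstacle. Fixing $I$, the standard bound $\sup_{\u}\|XR_I\u\| \le (1-\eta)^{-1}\max_{\u\in\mathcal{N}_I}\|XR_I\u\|$ upgrades the per-point upper estimates into $\|XR_I\|\le (1-\eta)^{-1}(1+\delta)\norm{X}_\HS \le C\norm{X}_\HS$; the lower bound for an arbitrary $k$-sparse unit $\u$ then follows by approximating $\u$ by the nearest $\u'\in\mathcal{N}_I$ and writing $\|XR_I\u\|\ge \|XR_I\u'\| - \|XR_I\|\,\eta \ge (1-\delta)\norm{X}_\HS - C\eta\norm{X}_\HS$. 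Choosing $\eta$ and $\delta$ as small constant multiples of $\epsilon$ lands both sides within $(1\pm\epsilon)\norm{X}_\HS$, giving $(k,\epsilon)$-RIP on the stated event. The main obstacle is the pointwise step, not the (routine) net machinery: one must extract $\sr(X)=\norm{X}_\HS^2/\|X\|^2$ as the governing quantity, and it is exactly the inequality $\|X^\top X\|_\HS\le\|X\|\,\norm{X}_\HS$ that turns the Hanson--Wright exponent into $\epsilon^2\sr(X)/K^4$; checking that the quadratic rather than the linear branch of the minimum controls the deviation is the one place where care is needed.
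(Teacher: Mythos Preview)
Your proposal is correct and follows essentially the same route as the paper: Hanson--Wright for a fixed sparse direction (writing $\|XR\u\|^2=\w^\top(X^\top X)\w$ with $\w=R\u$ having independent centered subgaussian coordinates), then a net over $\Sigma_k$ and an approximation step. The only cosmetic difference is that the paper invokes a separate corollary of Hanson--Wright (Corollary~\ref{cor: product norm}) to control $\|XR_I\|$ uniformly over $|I|=k$, whereas you extract this operator-norm bound directly from the net via $\|XR_I\|\le(1-\eta)^{-1}\max_{\u'\in\mathcal N_I}\|XR_I\u'\|$; the two are equivalent.
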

Note that under the assumption the stable rank, the probability $1-\exp(-c \epsilon^2 \sr(X)/K^4)$ is at least $1-\exp(-c' k)$.

\begin{remark}
Notice that there is no direct condition on $n$, except that comes through the stable rank assumption on $X$, as $\sr(X) \leq \rank(X) \leq \min\{n,p\}$. Indeed one should expect this to happen. For example,  if we take a matrix $X$ and add a bunch of zero rows to it this would increase $n$, but should not change the recovery properties. This suggests that we need a notion of a ``true'' dimension of the range of $X$, and the stable rank is the one. 
\end{remark}

This assumption on the stable rank is optimal up to constant factors. For example, when $X=\mathbb{I}_n$ is the identity matrix and $R$ is a standard Gaussian matrix, then $\sr(X)=\rank(X)=n$, and therefore the stable rank condition just becomes $n = \Omega((k/\eps^2) \log(d/k))$. We know that up to constant, this dependence of $n$ on $d$ and in $k$ is optimal~\citep{foucart2010gelfand}. This shows that the lower bound on the stable rank in Theorem~\ref{thm:rand} cannot be improved in general.

In the following, we investigate other popular families of random matrices, extending the result in Theorem~\ref{thm:rand} in various directions. 
%Note that the class of matrices generated by the below list of mentioned distributions are not completely disjoint.
\begin{list}{{\bf (\alph{enumi})}}{\usecounter{enumi}
\setlength{\leftmargin}{8pt}
\setlength{\listparindent}{0pt}
\setlength{\parsep}{0pt}}
\item{\textbf{Independent Subgaussian Columns:}} Consider the case where the columns of the matrix $R$ are independently drawn isotropic subgaussian random vectors. In this case the proof proceeds as in Theorem~\ref{thm:rand}, by applying in this case an extension of Hanson-Wright inequality (Theorem~\ref{thm: HW}) to subgaussian random vectors~\citep{vershynin2016high}, and by noting that for any fixed vector $\u \in S^{d-1}$, $R\u$ will be an isotropic subgaussian random vector as it is a linear combination of columns of $R$ which are all independent.  
\item{\textbf{Low Randomness:}} Optimal use of randomness is an important consideration when designing the matrix $R$. In the dimensionality reduction literature much attention has been given in obtaining explicit constructions of $R$ minimizing the number of random bits used (see, e.g.,~\citep{kane2011almost} and references therein). For a fixed $X$, we show that as long as $R$ satisfies $2 \sr(X)$-wise independence, then $XR$ satisfies RIP under the same (up to constant) stable rank condition on $X$ as in Theorem~\ref{thm:rand}. So in effect, one can reduce the number of random bits from $O(pd)$ (in Theorem~\ref{thm:rand}) to $O(\sr(X) \log (pd))$. 

Again we start as in Theorem~\ref{thm:rand} with the Hanson-Wright inequality, but in this case the proof has to deal with the lack of independence in $R$. Our general strategy in this proof will be to rely on higher moments where we can treat certain variables as independent. The main technical step is to derive a concentration bound for $\| X R \u\|$ for a fixed $\u \in S^{d-1}$, which in this case is obtained through analyzing certain higher moments of $\|XR \u \|^2 - \E[\|XR \u \|^2 ]$ using moment generating functions. For $2\sr(X)$-wise independent $R$, we establish $(k,\eps)$-RIP on $XR$ if $\sr(X) = \Omega((k/\eps^2) \log(d/k))$. See Theorem~\ref{thm:lownoise} for a formal statement. 

\item{\textbf{Sparse-Subgaussian:}} Sparsity is a desirable property in the measurement matrix because it leads to faster computation matrix-vector product.  For example, if $R$ is drawn from a distribution over matrices having at most $s$ non-zeroes per column, then $R \x$ can be computed in time $s  \| \x \|_0$. We use the sparse-subgaussian model of random matrices. 

Our result here (Theorem~\ref{thm:sparse}) utilizes the recently introduced sparse Hanson-Wright inequality~\citep{zhou2015sparse}. Similar to Hanson-Wright inequality, sparse Hanson-Wright inequality provides a large deviation bound for a quadratic form. However, in this case, the quadratic form is sparse, and is of the form $(\x \odot \zeta)^\top M (\x \odot \zeta)$ where $\x$ is an random vector with independent centered subgaussian components ($\psi_2$-norm bounded by $K$) and $\zeta$ contains independent Bernoulli($\beta$) random variables. We use the sparse Hanson-Wright inequality to obtain the necessary subgaussian concentration bounds (Lemmas~\ref{lem:sparsefixed},~\ref{lem:sparsenrom}) in a proof framework similar to Theorem~\ref{thm:rand}. For a Bernoulli-subguassian matrix $R$, we establish $(k,\eps)$-RIP on $XR$ if $\sr(X) = \Omega((K^4 k/(\beta \eps^2)) \log(d/k))$. See Theorem~\ref{thm:sparse} for a formal statement.

\item{\textbf{Under Convex Concentration:}}  Convex concentration property is a generalization of standard concentration property (such as Gaussian concentration) by requiring concentration to hold only for 1-Lipschitz convex functions.
\begin{definition} [Convex Concentration Property] \label{defn:ccp}
Let $\x$ be a random vector in $\R^d$. We will say that $\x$ has the convex concentration property (c.c.p) with constant $K$ if for every $1$-Lipschitz convex function $\phi \,:\, \R^d \rightarrow \R$, we have $\E[|\phi(\x)|] < \infty$ and for every $t > 0$,
$$\Pr[|\phi(\x) - \E[\phi(\x)]| \geq t]  \leq 2 \exp(-t^2/K^2).$$
\end{definition}
\noindent The class of distributions satisfying c.c.p is extremely broad~\citep{ledoux2001concentration}. Some examples include: \begin{inparaenum}[i)]  \item Gaussian random vectors drawn from $N(0,\Sigma)$ have c.c.p\ with $K^2 = 2\| \Sigma \|$, \item random vectors that are uniformly distributed on the sphere $\sqrt{d} S^{d-1}$ have c.c.p\ with constant $K = 2$, \item subclass of logarithmically concave random vectors, \item random vectors with possibly dependent entries which satisfy a {\em Dobrushin} type condition, and \item random vectors satisfying the {\em logarithmic Sobolev} inequality. \end{inparaenum}  

The main technical part here is to show that a linear combination of independent vectors having a convex concentration property would have this property as well (Lemma~\ref{lem:ccp}). To this end, we start with a fixed $\u=(u_1,\dots,u_d) \in S^{d-1}$ and construct a martingale with variables $\E [ \phi ( \sum_{i=1}^d \r_i u_i  ) \,|\, \r_1,\dots,\r_j  ]$ where $\phi \, : \, \R^p \rightarrow \R$ is a 1-Lipschitz  convex function and $\r_i$ is the $i$th column in $R$, and then apply  Azuma's inequality to it. Once we have this established, verifying RIP consists of applying the Hanson-Wright inequality of~\citep{adamczak2015note} for isotropic vectors having convex concentration property in a proof framework similar to Theorem~\ref{thm:rand}. For a matrix $R$ with independent columns satisfying c.c.p with constant $K$, we establish $(k,\eps)$-RIP on $XR$ if $\sr(X) = \Omega((K^4 k/\eps^2) \log(d/k))$. See Theorem~\ref{thm:ccp} for a formal statement. 

%Our RIP result here (Theorem~\ref{thm:ccp}), utilizes a recently introduced Hanson-Wright inequality for random vectors satisfying c.c.p~\citep{adamczak2015note}.  
%In this case too, if we take $R$ as a matrix with independent columns satisfying the convex concentration property, then one obtains qualitatively similar stable rank conditions on $X$ (see Theorem~\ref{thm:ccp}) as in the i.i.d.\ subgaussian case for ensuring RIP on the $XR$ matrix. 
%a random vector $\x \in \R^d$ with a density proportional to $\exp(-u(\y))$, where the Hessian satisfies $\nabla^2 u(\y)  \succeq \eta \mathbb{I}_d$ for some $\eta > 0$ uniformly in $\y \in \R^d$, has c.c.p with constant K = $\sqrt{2/\eta}$ (such random vectors form an important subclass of the logarithmically concave random vectors)
\end{list}
%We remark while the class of matrices generated by the above list of mentioned distributions are not completely disjoint, they are also not completely subsumed by one another.

\section{Restricted Isometry under $\ell$-way Column Hadamard-product} \label{sec:had}
In this section, we investigate the restricted isometry property for a class of correlated random matrices motivated by theoretically understanding the effectiveness of word vector embeddings. Missing details from this section are collected in Appendix~\ref{app:had}.  

To introduce this setting, let us start with the definition of a matrix product operation introduced by~\citep{arora2018compressed} to construct their distributed cooccurrence (DisC) word embeddings. 

\begin{definition} [$\ell$-way Column Hadamard-product Operation] \label{defn:3}
Let $M$ be an $p \times d$ matrix, and let $\ell \in \N$. The $\ell$-way column Hadamard-product operation constructs a $p \times \binom{d}{\ell}$ matrix $M^{(\ell)}$ whose columns indexed by a sequence $1\leq i_1 < i_2 \dots < i_\ell \leq d$ is the elementwise product of the $i_1, i_2, \dots i_\ell$-th columns of $M$, i.e., $(i_1,i_2,\dots,i_\ell)$-th column in $M^{(\ell)}$ is $\m_{i_1} \odot \dots \odot \m_{i_\ell}$, where $\m_j$ for $j \in [d]$ is the $j$th column in $M$.
\end{definition}

\citet{arora2018compressed}, based on an application of a result of~\citep{foucart2017mathematical} on RIP for {\em bounded orthonormal systems}, showed that if $R \in \R^{n \times d}$ is an i.i.d.\ random sign matrix, and if $n = \Omega((k/\eps^2) \log(d^\ell/\gamma))$,  then with probability at least $1-\gamma$, $R^{(\ell)}$ satisfies restricted isometry property. In this paper, we investigate RIP on $X R^{(\ell)}$. It is not hard to see that achieving RIP for $R^{(\ell)}$ is {\em simpler} than achieving RIP for $XR^{(\ell)}$. Intuitively, in the $R^{(\ell)}$ case one has only to ensure that $\|R^{(\ell)} \u \|$ is close to  a constant with high probability for a fixed sparse unit vector $\u$. In the case of $XR^{(\ell)}$, in addition to it, one has to guarantee that the direction of the vector $R^{(\ell)} \u$ is more or less uniformly distributed over the sphere. 

For $\ell = 1$, Theorem~\ref{thm:rand} holds, and we get RIP under the condition $\sr(X) = \Omega((k/\eps^2) \log(d/k))$. 
Below, we first analyze the case of $\ell=2$, and then for larger $\ell$'s. In the motivating application discussed in Section~\ref{sec:word}, $\ell$  is generally a small constant.
%The problem is much simpler for R^{(\ell)}, and the bound would be \Omega(k \log^c k) regardless of p. This follows from a theorem in the paper with Vershynin on random Fourier measurements. 
%I don't know if this would require a larger k.

\smallskip
\noindent\textbf{Analysis for $\ell=2$.} Let $R$ be a random matrix with centered $\tau$-bounded entries.\!\footnote{The centered assumption is necessary because to hope for RIP with high probability, we must have it in average. The boundedness assumption is an artifact of our proof approach. The product of subgaussians is not subgaussian. However, the class of bounded random variables is closed under product.} We prove that the matrix $XR^{(2)}$ satisfies RIP with high probability provided that $\sr(X)$ is sufficiently large. Note that we do not require the entries in $R$ to be identically distributed.
\begin{theorem}  \label{thm: RIP}
Let $X$ be an $n \times p$ matrix, and let $R$ be a $p \times d$ random matrix with independent entries $R_{ij}$ such that $\E[R_{ij}]=0, \E[R_{ij}^2]=1, \text{and } |R_{ij}| \le \tau \text{ almost surely}$.
Let $\eps \in (0,1)$, and let $k \in \NNN$ be a number satisfying $\sr(X) \ge \frac{C \tau^8 k^2}{\eps^2}  \log \left( \frac{d^2}{k} \right)$.  Then with probability at least $1- \exp(-c \eps^2 \sr(X)/(k\tau^8))$, the matrix $XR^{(2)}$ satisfies the $(k,\eps)$-RIP property, i.e.,  for any $\u \in S^{\binom{d}{2}-1}$ with $|\supp(\u)| \le k$,
\[(1-\eps) \norm{X}_{\HS} \le \norm{XR^{(2)}\u} \le  (1+\eps) \norm{X}_{\HS}.\]
\end{theorem}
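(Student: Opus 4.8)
The plan is to run the net-plus-concentration scheme used for Theorem~\ref{thm:rand}, replacing the Hanson--Wright step by one adapted to the second-order chaos $R^{(2)}\u$. Fix a $k$-sparse unit vector $\u$ and set $\y=R^{(2)}\u\in\R^p$, whose $m$-th coordinate is $y_m=\sum_{i<j}u_{ij}R_{mi}R_{mj}$, a quadratic form in the $m$-th row of $R$ only. The crucial structural fact is that, because the rows of $R$ are independent, the coordinates $y_1,\dots,y_p$ are themselves \emph{independent}, with $\E[y_m]=0$, $\E[y_m^2]=\sum_{i<j}u_{ij}^2=1$, and $|y_m|\le\tau^2\norm{\u}_1\le\tau^2\sqrt k$ almost surely. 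In particular the off-diagonal terms cancel and $\E[\norm{X\y}^2]=\tr(X^\top X)=\norm{X}_\HS^2$, which is exactly the centering demanded by Definition~\ref{defn:RIPmod}.

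For this fixed $\u$ the goal is a tail bound $\Prob{\big|\norm{XR^{(2)}\u}-\norm{X}_\HS\big|>\eps\norm{X}_\HS}\le\exp(-c\eps^2\sr(X)/(k\tau^8))$. I would deliberately avoid expanding $\norm{X\y}^2=\y^\top(X^\top X)\y$ and applying Hanson--Wright to it: that route pays the Frobenius norm $\norm{X^\top X}_\HS\le\norm{X}\,\norm{X}_\HS$, which inflates the variance proxy and yields only a $k^3$-type stable-rank requirement (precisely the rate one is forced into for $\ell\ge3$ in Theorem~\ref{thm: RIP2}). Instead I would use that $\y$ has \emph{independent, almost surely bounded} coordinates and that $\y\mapsto\norm{X\y}$ is convex and $\norm{X}$-Lipschitz. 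Talagrand's convex-concentration inequality for bounded independent coordinates --- the same property exploited in Theorem~\ref{thm:ccp} --- then shows $\norm{X\y}$ is subgaussian about its median with $\psi_2$-norm at most $\tau^2\sqrt k\,\norm{X}$, i.e.\ variance proxy of order $\tau^4k\,\norm{X}^2$. Dividing $\eps\norm{X}_\HS$ by this proxy gives the exponent $\eps^2\norm{X}_\HS^2/(\tau^4k\norm{X}^2)=\eps^2\sr(X)/(\tau^4k)$; the single power of $k$ --- and hence the $k^2$ in the hypothesis --- is produced precisely by the a.s.\ bound $\norm{\u}_1\le\sqrt k$ on the coordinates of $\y$ (the extra powers of $\tau$ in the stated $\tau^8$ are absorbed by the uniform step below).

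Two things remain. First, the median of $\norm{X\y}$ must be pinned at $\norm{X}_\HS$; this is where one certifies that the \emph{direction} $\y/\norm{\y}$ is generic with respect to $X$, not merely that $\norm{\y}$ concentrates. I would obtain it from the second moment: since $\E[\norm{X\y}^2]=\norm{X}_\HS^2$ and the concentration above forces $\Var(\norm{X\y})\lesssim\tau^4k\norm{X}^2=\tau^4k\norm{X}_\HS^2/\sr(X)$, the stable-rank hypothesis makes the variance at most $(\eps/2)^2\norm{X}_\HS^2$, so the mean, and hence the median, lies within $(\eps/2)\norm{X}_\HS$ of $\norm{X}_\HS$. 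Second, I would pass from one vector to all of $\Sigma_k$ by the standard approximation argument from Theorem~\ref{thm:rand}: union-bound the per-vector tail over a constant-scale net of the $k$-sparse sphere in $S^{\binom{d}{2}-1}$, of cardinality $\exp(O(k\log(d^2/k)))$, and control the approximation error through an a-priori bound on $\max_{|S|=k}\norm{XR^{(2)}_S}$ derived from the same convex-concentration estimate. The hypothesis $\sr(X)\ge C\tau^8k^2\eps^{-2}\log(d^2/k)$ is exactly what makes the net exponent $k\log(d^2/k)$ negligible against $c\eps^2\sr(X)/(k\tau^8)$.

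The main obstacle is the fixed-vector step. Because each $y_m$ is a genuine second-order chaos --- subexponential rather than subgaussian --- the quadratic-form/Hanson--Wright approach that sufficed for Theorem~\ref{thm:rand} is too lossy here, and the correct $k^2$ rate is recovered only by bounding the $\psi_2$-norm of the convex Lipschitz statistic $\norm{X\y}$ directly through its independent bounded coordinates. Intertwined with this is the requirement of locating its center at $\norm{X}_\HS$, which is the analytic form of the heuristic that $R^{(2)}\u$ must be spread more or less uniformly over the sphere relative to $X$.
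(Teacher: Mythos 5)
Your proposal is correct and reaches the stated $k^2$ stable-rank requirement, but the engine behind the fixed-vector concentration is genuinely different from the paper's. The paper's proof hinges on Lemma~\ref{lem: psi_2}: after a decoupling step, an inductive peeling of $\supp(\u)$ (alternately removing a row and a column of the support pattern, which produces at most $O(\sqrt k)$ blocks each having one nonzero per row or per column, each handled by Hoeffding after conditioning) shows that each coordinate $y_l$ of $\y=R^{(2)}\u$ is subgaussian with $\norm{y_l}_{\psi_2}\le C\tau^2k^{1/4}\norm{\u}$; feeding this into the subgaussian concentration corollary of Hanson--Wright (Corollary~\ref{cor:subgauss}) yields the tail $\exp(-c\eps^2\sr(X)/(\tau^8k))$, and the rest is the same net-plus-submatrix-norm argument you describe. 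You instead keep only the trivial almost-sure bound $|y_l|\le\tau^2\norm{\u}_1\le\tau^2\sqrt k$ and apply Talagrand's convex concentration to the convex, $\norm{X}$-Lipschitz functional $\y\mapsto\norm{X\y}$ of independent bounded coordinates; since that inequality charges the coordinate range only quadratically rather than through the fourth power of a $\psi_2$-norm, the exponent $\eps^2\sr(X)/(\tau^4k)$ comes out with the same power of $k$ (and in fact a better power of $\tau$). Your centering step --- pinning the median at $\norm{X}_\HS$ via $\E\norm{X\y}^2=\norm{X}_\HS^2$ together with the variance bound forced by the concentration --- is sound under the stable-rank hypothesis and replaces what the paper gets for free from Corollary~\ref{cor:subgauss}. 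Two small remarks. First, your diagnosis of why the naive route loses is slightly off: applying Hanson--Wright to $\y^\top X^\top X\y$ with the triangle-inequality estimate $\norm{y_l}_{\psi_2}\lesssim\tau^2\sqrt k$ costs $K^4\asymp\tau^8k^2$ in the exponent, and it is this fourth power --- not the factorization $\norm{X^\top X}_\HS\le\norm{X}\norm{X}_\HS$, which is harmless and still yields $\eps^2\sr(X)/K^4$ --- that produces the $k^3$ rate of Theorem~\ref{thm: RIP2}. Second, your argument uses nothing special about $\ell=2$: the same a.s.\ bound $\tau^\ell\sqrt k$ on the coordinates of $R^{(\ell)}\u$ would give $\sr(X)\gtrsim k^2\tau^{2\ell}\eps^{-2}\log(d^\ell/k)$ for every $\ell$, which would actually improve Theorem~\ref{thm: RIP2}; the paper's finer $k^{1/4}$ lemma is the ingredient that is specific to $\ell=2$.
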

\begin{proof}
Let $\u \in \R^{\binom{d}{2}}$ be a vector with $|\supp(\u)| \le k$. Let $u_{ij}$ be the $(i,j)$th element in $\u$ with $(i,j) \in \binom{[d]}{2}$.
The random variable $\norm{XR^{(2)}\u}^2$ is order 4 homogenous chaos in terms of the random variables $R_{ij}$. Establishing concentration for this chaos can be a difficult task, so we will approach the problem from a different angle.
Let $l \in [p]$, and define 
\[y_l=\sum_{(i,j) \in \binom{[d]}{2} } R_{li} R_{lj} u_{ij}.\]
Note that the random variables $y_l, \ l \in [p]$ are independent.
We will estimate the $\psi_2$-norm of $y_l$ and use the Hanson-Wright inequality (Theorem~\ref{thm: HW}) to establish the concentration for the norm of $XR^{(2)}\u = X \y$ (where $\y=(y_1,\dots,y_p)$). Note that the support of $\u$ contains the pair $(i,j)$ with $u_{ij} \neq 0$. 
Directly from the triangle inequality, one can get 
\begin{align} \label{eqn:triangle}
\norm{\sum_{(i,j) \in \binom{[d]}{2} }  R_{li} R_{lj} u_{ij} }_{\psi_2} \le \sum_{(i,j) \in \supp(\u)}  \norm{R_{li} R_{lj} u_{ij} }_{\psi_2} = O(\tau^2 \norm{\u}_1) = O(\tau^2 \sqrt{k} \norm{\u}),
\end{align}
since $|\supp(\u)| \le k$. This estimate is, however, too wasteful. We will prove a more precise one using a special decomposition of the vector $\u$. It is based on a novel induction procedure. For simplicity, we ignore the subscript $l$, and denote $R_{li}$ by $r_i$ and $R_{lj}$ by $r_j$. We explain the induction idea here, deferring the entire proof to Appendix~\ref{app:had}.

The support of vector $\u$ can be viewed as an $I \times I^{\co}$ matrix with at most $k$ non-zero entries. If each row of this matrix contains at most one non-zero entry, we can condition on $r_j, j \in I^{\co}$ and get a bound on the $\psi_2$-norm which does not depend on $k$ using Hoeffding's inequality. The same is true if each column of the matrix contains at most one non-zero entry. This suggests that intuitively, the worst case scenario occurs when the non-zero entries of $\u$ form a $\sqrt{k} \times \sqrt{k}$ submatrix. This submatrix can be split into the sum of  $\sqrt{k}$ rows, which in combination with Cauchy-Schwarz inequality allows to bound the $\psi_2$-norm by $k^{1/4}$. But it is not clear how to extend this splitting to the case of $\u$ with an arbitrary support. However, the $\sqrt{k} \times \sqrt{k}$ matrix can be split in a slightly different way. Namely, we separate the first row of the $\sqrt{k} \times \sqrt{k}$ matrix of support of $\u$, so that the remaining part is a $(\sqrt{k}-1) \times \sqrt{k}$ matrix. After that, we separate the first column of the remaining matrix which leaves a $(\sqrt{k}-1) \times (\sqrt{k}-1)$ matrix. Alternating between rows and columns, we get a $(\sqrt{k}-t) \times (\sqrt{k}-t)$ matrix after $2t$ steps. This process ends in $2 \sqrt{k}$ steps, which yields the same $k^{1/4}$ bound for the $\psi_2$-norm. We show below that this method can be extended to a vector $\u$ with an arbitrary support.

\begin{lemma} \label{lem: psi_2}
Let $\u \in \R^{\binom{d}{2}}$ be a vector with $|\supp(\u)| \le k$. Let $r_i, \ i \in [d]$ be independent random variables with $\E[r_{i}]=0, \E[r_{i}^2]=1, \text{and } |r_{i}| \le \tau \text{ almost surely}$. Then
\[ \norm{\sum_{(i,j) \in \binom{[d]}{2} }  r_i r_j u_{ij} }_{\psi_2} \le 16 \tau^2 k^{1/4} \norm{\u}.\]
\end{lemma}
From Lemma~\ref{lem: psi_2}, we finish the proof of the theorem by combining the Hanson-Wright inequality (Theorem~\ref{thm: HW}) and a net argument. 

Recall that for any $k$-sparse vector $\u \in \R^{\binom{d}{2}}$,  $XR^{(2)}\u=X\y$, where $\y = (y_1,\dots,y_{p})$ is a random vector with independent coordinates such that for all $l \in [p]$
\[ \E[y_l]=0, \quad \E[y_l^2]= \norm{\u}^2, \quad \text{and } \norm{y_l}_{\psi_2} \le C \tau^2 k^{1/4} \norm{\u} \;\text{(from Lemma~\ref{lem: psi_2})}.\]
By the volumetric estimate we can choose an $(1/2C_2)$-net $\NN$ in the set of all $k$-sparse vectors in $S^{\binom{d}{2}-1}$ such that 
\[ |\NN| \le \binom{\binom{d}{2}}{k} \left(6 C_2 \right)^k \le \exp \left(k \log \left ( \frac{C_0 d^2}{k} \right ) \right ).\]
Using Corollary~\ref{cor:subgauss}, for any $\u \in S^{\binom{d}{2}-1}$ with $|\supp(\u)| \le k$ (and $\y = R^{(2)} \u$), 
\begin{align*}
 \Pr \left[ \left| \norm{X\y} - \norm{X}_{\HS} \right| > \eps \norm{X}_{\HS} \right]
 &\le 2 \exp \left( - \frac{C \eps^2}{\max_l \norm{\y_l}_{\psi_2}^4} \sr(X) \right) \le 2 \exp \left( - \frac{C_1 \eps^2}{\tau^8 k} \sr(X) \right).
\end{align*}
Combining this with the union bound over $\u \in \NN$ and using the assumption on $\sr(X)$, we get
\begin{align} \label{eqn:1a}
 \Pr \left[ \exists \u \in \NN, \  \left| \norm{XR^{(2)}\u} - \norm{X}_{\HS} \right| > \eps \norm{X}_{\HS} \right]
 &\le \exp \left(k \log \left ( \frac{C_0 d^2}{k} \right ) \right ) \cdot 2 \exp \left( - \frac{C_1 \eps^2}{\tau^8 k} \sr(X) \right).
\end{align}
Similar from Hanson-Wright (Corollary~\ref{cor: product norm}), we can derive that
\begin{align} \label{eqn:1b}\Prob{ \exists I \in \binom{[d]}{2}, \ |I| = k, \ \| X R^{(2)}_I\| > C_1\eps \norm{X}_\HS} \le \exp \left( - \frac{c_1 \eps^2}{\tau^8 k} \sr(X) \right).\end{align}
Now using~\eqref{eqn:1a} and~\eqref{eqn:1b} and an approximation idea (as in Theorem~\ref{thm:rand}) completes the proof of Theorem~\ref{thm: RIP}.
\end{proof}

%\begin{remark}
% In the case of, one has only to ensure that $\|R^{(\ell)} \u\|$ is close to a constant with high probability. In the case of a general $X$, in addition to it, one has to guarantee that the direction of the vector $R^{(\ell)}\u$ is more or less uniformly distributed over the sphere. So while $k^2$ can be an artifact of the proof, it may be also genuine.
%\end{remark}

\noindent\textbf{Analysis for $\ell \geq 3$.} While extending the stronger $\psi_2$-norm estimate given by Lemma~\ref{lem: psi_2} to these larger $\ell$'s seems tricky, the looser bound obtained through a triangle inequality argument (as in~\eqref{eqn:triangle}) still holds. This leads to the following theorem.
\begin{theorem}  \label{thm: RIP2}
Let $X$ be an $n \times p$ matrix, and let $R$ be a $p \times d$ random matrix with independent entries $R_{ij}$ such that $\E[R_{ij}]=0, \E[R_{ij}^2]=1, \text{and } |R_{ij}| \le \tau \text{ almost surely}$.
Let $\ell \geq 3$ be a constant. Let $\eps \in (0,1)$, and let $k \in \NNN$ be a number satisfying $\sr(X) \ge \frac{C \tau^{4\ell} k^3}{\eps^2} \log \left( \frac{d^\ell}{k} \right)$.  Then with probability at least $1- \exp(-c \eps^2 \sr(X)/(k^2\tau^{4\ell}))$, the matrix $XR^{(\ell)}$ satisfies the $(k,\eps)$-RIP property, i.e.,  for any $\u \in S^{\binom{d}{\ell}-1}$ with $|\supp(\u)| \le k$,
\[(1-\eps) \norm{X}_{\HS} \le \norm{XR^{(\ell)}\u} \le  (1+\eps) \norm{X}_{\HS}.\]
\end{theorem}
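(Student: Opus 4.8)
The plan is to follow the template of the proof of Theorem~\ref{thm: RIP} verbatim, the only essential change being that the sharp $\psi_2$-estimate of Lemma~\ref{lem: psi_2} is replaced by the cruder triangle-inequality bound, since the refined induction behind that lemma does not extend to $\ell \ge 3$. Fix $\u \in \R^{\binom{d}{\ell}}$ with $|\supp(\u)| \le k$, write $u_{i_1\cdots i_\ell}$ for the coordinate indexed by $1 \le i_1 < \cdots < i_\ell \le d$, and for each $l \in [p]$ set
\[ y_l = \sum_{1 \le i_1 < \cdots < i_\ell \le d} R_{li_1}\cdots R_{li_\ell}\, u_{i_1\cdots i_\ell}, \]
so that $XR^{(\ell)}\u = X\y$ with $\y = (y_1,\dots,y_p)$ having independent coordinates (each $y_l$ depends only on row $l$ of $R$). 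As in the case $\ell=2$, the hypotheses $\E[R_{ij}]=0$ and $\E[R_{ij}^2]=1$ together with independence of the entries give $\E[y_l]=0$ and $\E[y_l^2]=\norm{\u}^2$: in the expansion of $\E[y_l^2]$ only the diagonal terms survive, because any index lying in the symmetric difference of two distinct $\ell$-sets appears to an odd power and is killed by the vanishing first moment. Consequently $\E[\norm{X\y}^2] = \norm{\u}^2\norm{X}_\HS^2$, and the target of the concentration is $\norm{X}_\HS$.

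First I would bound the $\psi_2$-norm of each coordinate. Since $|R_{li_1}\cdots R_{li_\ell}| \le \tau^\ell$ almost surely, each product is $O(\tau^\ell)$-subgaussian, and the triangle inequality (exactly as in~\eqref{eqn:triangle}) yields
\[ \norm{y_l}_{\psi_2} \le \sum_{(i_1,\dots,i_\ell) \in \supp(\u)} \norm{R_{li_1}\cdots R_{li_\ell}\,u_{i_1\cdots i_\ell}}_{\psi_2} = O\!\left(\tau^\ell \norm{\u}_1\right) = O\!\left(\tau^\ell \sqrt{k}\,\norm{\u}\right), \]
the last step using $\norm{\u}_1 \le \sqrt{k}\,\norm{\u}$ for $k$-sparse $\u$. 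Feeding $\max_l \norm{y_l}_{\psi_2}^4 = O(\tau^{4\ell}k^2)$ (for unit $\u$) into the fixed-vector concentration bound (Corollary~\ref{cor:subgauss}) would give, for every $k$-sparse $\u \in S^{\binom{d}{\ell}-1}$,
\[ \Pr\!\left[\,\big|\norm{X\y} - \norm{X}_\HS\big| > \eps\norm{X}_\HS\right] \le 2\exp\!\left(-\frac{C_1\eps^2}{\tau^{4\ell}k^2}\,\sr(X)\right). \]

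Next I would discretize and take a union bound. A volumetric estimate produces a $(1/2C_2)$-net $\NN$ of the $k$-sparse vectors in $S^{\binom{d}{\ell}-1}$ with $|\NN| \le \binom{\binom{d}{\ell}}{k}(6C_2)^k \le \exp\!\big(k\log(C_0 d^\ell/k)\big)$; combining this with the displayed tail bound and the stable-rank hypothesis $\sr(X) \ge C\tau^{4\ell}k^3\eps^{-2}\log(d^\ell/k)$ controls the norm simultaneously over $\NN$ with the residual probability $\exp(-c\eps^2\sr(X)/(k^2\tau^{4\ell}))$. In parallel, the Hanson-Wright spectral estimate (Corollary~\ref{cor: product norm}) bounds $\norm{XR^{(\ell)}_I}$ for all $I \in \binom{[d]}{\ell}$ of size $k$ with the same exponent, exactly as in~\eqref{eqn:1b}. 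The approximation argument of Theorem~\ref{thm:rand}---passing from the net to all of $\Sigma_k$ using the submatrix spectral bound---then completes the proof. The main obstacle is conceptual rather than computational: the induction behind Lemma~\ref{lem: psi_2}, which exploited the two-dimensional matrix structure of $\supp(\u)$ to save a factor of $k^{1/4}$, has no evident analogue once the support is an $\ell$-dimensional array, so one is forced back onto the lossy triangle inequality. This is precisely what degrades the sparsity dependence from $k^2$ to $k^3$, and recovering a better power of $k$ here would appear to require a genuinely new $\psi_2$-estimate for $\ell$-linear forms in bounded independent variables.
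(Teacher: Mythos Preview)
Your proposal is correct and follows essentially the same approach as the paper's own proof: define the independent coordinates $y_l$, bound $\norm{y_l}_{\psi_2}$ by the crude triangle inequality $O(\tau^\ell\sqrt{k}\norm{\u})$ in place of Lemma~\ref{lem: psi_2}, and then rerun the Hanson--Wright plus net machinery from Theorem~\ref{thm: RIP} with a net in $S^{\binom{d}{\ell}-1}$. Your write-up is in fact slightly more explicit than the paper's (you spell out why $\E[y_l^2]=\norm{\u}^2$ and articulate the obstruction to improving the $k$-dependence), but the argument is the same.
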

\begin{remark}
It is tempting to conjecture  that the dependence on $k$ in the $\sr(X)$ condition in both Theorems~\ref{thm: RIP} and~\ref{thm: RIP2} should be linear, but as mentioned above analyzing these matrix families appears challenging, and it is plausible that a superlinear dependence on $k$ might in fact be unavoidable. 
\end{remark}
%While it is plausible that the dependence on $k$ in both Theorems~\ref{thm: RIP} and~\ref{thm: RIP2}, 
%\begin{remark}
%In both Theorems~\ref{thm: RIP} and~\ref{thm: RIP2}, we expect that a better dependence on $k$ in the $\sr(X)$ condition is possible, although it is presently not clear how this can be achieved.
%\end{remark}
\subsection{Application to Understanding Effectiveness of Word Embeddings} \label{sec:word}
Word embeddings which represent the ``meaning'' of each word via a low-dimensional vector, have been widely utilized for many natural language processing and machine learning applications. We refer the reader to~\citep{mikolov2013distributed,pennington2014glove} for additional background about word vector embeddings. Individual word embeddings can be extended to embed word sequences (such as a phrase or sentence) in multiple ways. There has been a recent effort to better understand the effectiveness of these embeddings, in terms of the information they encode and how this relates to performance on downstream tasks~\citep{arora2016simple,arora2018compressed,khodak2018carte}. In this paper, we work with a recently introduced word sequence embedding scheme, called {\em distributed cooccurrence} (DisC) embedding, that has been shown to be empirically effective for downstream classification tasks and also supports some theoretical justification~\citep{arora2018compressed}. 

We in fact investigate linear transformations of these DisC embedding vectors (Definition~\ref{defn:4}). Linear transformations are commonly applied over existing embeddings to construct new embeddings in the context of {\em domain adaptation}, {\em transfer learning}, etc. For example, recently~\citep{khodak2018carte} applied a (learnt) linear transformation on the DisC embedding vectors to construct a new embedding scheme, referred to as {\em \`a la carte} embedding, which they empirically show regularly outperforms the DisC embedding. 
Our results show, under some conditions, these linearly transformed  DisC embeddings have provable performance guarantees for linear classification.
Before stating our result formally, we need some definitions.
%\!\footnote{While the discussion in this paper is centered around word embeddings, the ideas could be useful for understanding vector embeddings of other objects (such as graphs) too. }

Let $\mathcal{W}$ denote the vocabulary set (collection of some words) with $|\mathcal{W}|=d$. We assume each word $\w \in \mathcal{W}$ has a vector representation $\v_{w} \in \R^p$. Let $V \in \R^{p \times d}$ denote the matrix whose columns are these word embeddings. A $\ell$-gram is a {\em contiguous} sequence of $\ell$ words in a longer word sequence. Here, $\ell=2$ is referred commonly to as bigram, $\ell=3$ is referred to as trigram, etc.\!\footnote{For example, if the word sequence equals (``the'', ``cow'', ``jumps'' ,``over'', ``the'', ``moon''), then it has 5 different words. The collection of $2$-grams (bigrams) for this sequence would be ``the cow'', ``cow jumps",...,``the moon''. Similarly, the collection of $3$-grams (trigrams) would be ``the cow jumps'', ``cow jumps over'',....,``over the moon''.  In this case, the collection of $2$-cooccurrences and $3$-cooccurrences are same as the collection of $2$-grams and $3$-grams respectively.} The Bag-of-$L$-grams representation of a word sequence is a vector that counts the number of times any possible $\ell$-gram (from the vocabulary set $\mathcal{W}$) for some $\ell \in [L]$ appears in the sequence. In practice, $L$ is set to a small value, typically $\leq 4$. Tweaked versions of this simple representation is known to perform well for many downstream classification tasks~\citep{wang2012baselines}. It is common to ignore the ordering of words in an $\ell$-gram and to define $\ell$-gram as an unordered collection of $\ell$ words (also referred to as $\ell$-cooccurrence). Also as in~\citep{arora2018compressed} for simplicity, we assume that each $\ell$-cooccurrence contains a word at most once (as noted by~\citep{arora2018compressed} this can be ensured by merging words during a preprocessing step). 

\begin{definition} [Bag-of-$L$-cooccurrences] \label{defn:1}
Given a $k$-word sequence $S = (w_1,\dots,w_k)$ and $L \in \N$, we define its Bag-of-$L$-cooccurrences vector $\c_S$ as the concatenation of vectors $\c^{(1)},\dots,\c^{(L)}$ where 
$$\forall \ell \in [L], \c^{(\ell)} \in \R^{\binom{d}{\ell}} \mbox{ defined as }  \c^{(\ell)}  = \sum_{t=1}^{k-\ell+1} \e_{\{w_t,\dots,w_{t+\ell-1}\}},$$
Here, $\c_S$ is a $\sum_{\ell=1}^L \binom{d}{\ell}$ dimensional vector. Here, $\binom{d}{\ell}$ is number of possible $\ell$-cooccurrences in a $d$ word vocabulary set. 
\end{definition}
In practice, one would also consider only small word sequences, and therefore $k$ will be a small number. The embedding of an $\ell$-cooccurrence is defined as the elementwise product of the embeddings of its constituent words.
\begin{definition} [DisC Embedding~\citep{arora2018compressed}] \label{defn:2}
Given a $k$-word sequence  $S = (w_1,\dots,w_k)$ and $L \in \N$, we define its $L$-DisC embedding $\v_{\DisC}$ as the $pL$-dimensional vector formed by concatenation of vectors $\v^{(1)},\dots,\v^{(L)} \in \R^{p}$ where 
$$\forall \ell \in [L],  \v^{(\ell)} \in \R^p \mbox{ defined as } \v^{(\ell)} = \sum_{t=1}^{k-\ell+1} \v_{w_t} \odot \dots \odot \v_{w_{t+\ell-1}},$$
i.e., $\v^{(\ell)}$ is the sum of the $\ell$-cooccurrence embeddings of all $\ell$-cooccurrences in the document.
\end{definition}

%Using the results from Theorem~\ref{thm: RIP}, we prove that 
%Our results could be interpreted as giving a theoretical justification to these 
%There are naIn this section, we show how the results from Theorem~\ref{thm: RIP} can be utilized to understand 
%
%Note that due to the elementwise multiplication DisC embedding is based on unordered $\ell$-gram information (also called $\ell$-occurrence), i.e., it ignores the ordering of the words in the $\ell$-gram. 

From Definitions~\ref{defn:3},~\ref{defn:1}, and~\ref{defn:2}, it can be observed that if the columns of $V \in \R^{p \times d}$ are the words embeddings of the words in vocabulary set $\mathcal{W}$, then
$$ \v^{(\ell)} = V^{(\ell)} \c^{(\ell)},$$
where $V^{(\ell)}$ is the $\ell$-way column Hadamard-product constructed out of $V$. The columns of $V^{(\ell)}$ contain the DisC embedding of all possible $\ell$-cooccurrences in the vocabulary set $\mathcal{W}$ (and thus $V^{(1)} = V$).
\begin{definition} [Linearly Transformed DisC Embedding] \label{defn:4}
Given a matrix of word vectors $V \in \R^{p \times d}$ and a set of matrices $X^{(1)},\dots,X^{(L)} \in \R^{n \times p}$. A linearly transformed $L$-DisC embedding matrix $\mathbb{V}^{(L)}$ of $V$ is a block-diagonal matrix with blocks $X^{(\ell)} V^{(\ell)}, \ell \in [L]$ 
\begin{align} \label{eqn:V}
\mathbb{V}^{(L)} = \begin{bmatrix} X^{(1)}V^{(1)} & \mathbf{0}_{n \times \binom{d}{2}} & \dots  &  \mathbf{0}_{n \times \binom{d}{L}} \\
\mathbf{0}_{n \times d} & X^{(2)} V^{(2)} &  \dots  & \mathbf{0}_{n \times \binom{d}{L}} \\
\dots & \dots & \dots & \dots \\ 
\mathbf{0}_{n \times d} &  \mathbf{0}_{n \times \binom{d}{2}} & \dots &  X^{(L)}V^{(L)}
\end{bmatrix}
\end{align}
For a $k$-word sequence  $S = (w_1,\dots,w_k)$, the linearly transformed DisC embedding is then defined as $\v_{\widetilde{\DisC}} = \mathbb{V}^{(L)} \c_S$, where $\c_S$ is the Bag-of-$L$-cooccurrences vector.
\end{definition}
Note that $X^{(1)},\dots,X^{(L)}$ are fixed matrices, and can be used to encode any contextual or prior information. If they are all identity matrices, then $\v_{\widetilde{\DisC}}  = \v_{\DisC}$. 
%From Definitions~\ref{defn:1},~\ref{defn:2},~\ref{defn:3}, with this notation, $\v_S = \mathbb{V}^{(L)} \c_S$ (as for each $\ell \in [L], \v^{(\ell)} = V^{(\ell)} \c^{(\ell)}$).

\smallskip
\noindent\textbf{Compressed Learning Problem.} Notice that the dimensionality of $\c_S$ is $\sum_{\ell=1}^L \binom{d}{\ell}$, whereas that of $\v_{\widetilde{\DisC}}$ is only $nL$, i.e., $\v_{\widetilde{\DisC}}$ is a compressed representation of $\c_S$. Therefore, there is inherent computational advantage of working with $\v_{\widetilde{\DisC}}$ than $\c_S$, and $\v_{\widetilde{\DisC}}$ can also be computed efficiently in practice~\citep{khodak2018carte}. The question then arises is: How well does a linear classifier trained on this compressed representation perform say compared to a similar classifier trained on the uncompressed (original) representations?\footnote{As mentioned earlier, linear classifiers trained on Bag-of-$L$-grams or Bag-of-$L$-cooccurrences representations perform well in practice.}  For theoretically answering this question, we use random vectors as word embeddings, instead of pretrained vectors, as also considered by~\citet{arora2018compressed} in their analysis of DisC embeddings.\!\footnote{\citet{arora2018compressed} also provide some empirical evidence on the performance of using random signs for word embeddings.} Our analysis uses a result from~\citep{arora2018compressed} (Theorem~\ref{thm:arloss}), that bounds the loss of a linear classifier trained on the compressed domain compared to a similar classifier trained on the original domain, along with the RIP result established in Theorems~\ref{thm:rand} ($\ell=1$),~\ref{thm: RIP} ($\ell=2$), and~\ref{thm: RIP2} ($\ell \geq 3$).  To formally state the results, we need some more notation. $L$ is assumed to be a constant below. Some additional background about this problem of compressed learning is provided in Appendix~\ref{app:comp}.

%Our analysis is based on the RIP result established in Theorem~\ref{thm: RIP} . \citet{arora2018compressed} also provide some empirical evidence on the performance of using random signs for word embeddings.
%\noindent\textbf{Application to Word/Sequence Embeddings. } 
%We now describe the application of Theorem~\ref{thm: RIP} to the understanding effectiveness of word/sequence embeddings. 
We consider the standard binary classification task using labeled data. Using the notation introduced earlier, 
%The goal is to the bound the loss of a linear classifier in the compressed domain compared to the best classifier in the original domain
%with labeled data coming from a domain: $\mathcal{X} = \{(\z, y) : \z \in \R^d , \z \mbox{ is $k$-sparse }, \|\z\| \leq \alpha, y \in \{-1, 1\}\}$, with $y$ indicating the label on $\z$. The following corollary follows by using the RIP result from Theorem~\ref{thm:sparse} along with Theorem 4.2 from~\citet{arora2018compressed}. For a distribution $\mathcal{D}$ over $\mathcal{X}$, and a loss function $l: \mathcal{X} \times \R^d \rightarrow \R$, $f_\mathcal{D}(\theta) = \E_{(\z,y) \sim \mathcal{D}}[f((\z,y);\theta)]$
let $\mathcal{X}$ denote the set $\{(\c, \upsilon) \,:\, \c \mbox{ is a Bag-of-$L$-cooccurrences vector of a word sequence of length at most $k$}, \|\c\| \leq \alpha, \upsilon \in \{-1, 1\}\}$, with $\upsilon$ indicating the label on $\c$. Note that $\c$ is $h = \sum_{\ell=1}^L \binom{d}{\ell}$ dimensional vector.
%, \c \mbox{ is $k$-sparse }, \|\c\| \leq \alpha, \upsilon \in \{-1, 1\}\}$, with $\upsilon$ indicating the label on $\c$
Let $\mathcal{D}$ be a distribution over $\mathcal{X}$.  Consider a labeled dataset $(\c_1,\upsilon_1),\dots,(\c_b,\upsilon_b)$ drawn i.i.d.\ from $\mathcal{D}$. Consider a linear loss function, $f(\langle \c,\theta \rangle;\upsilon)$ for $\theta \in \R^h$, where $f \,: \,  \R \times \{-1,1\} \rightarrow \R$ is assumed to be convex and $\lambda$-Lipschitz in the first parameter.  Define, $f_\mathcal{D}(\theta) = \E_{(\c,\upsilon) \sim \mathcal{D}}[f(\langle \c,\theta\rangle;\upsilon)]$ as the generalization (distributional) loss on the original Bag-of-$L$-cooccurrences domain for parameter $\theta$. Let $\theta^\star \in \mbox{argmin}_{\theta \in \R^h}\, f_\mathcal{D}(\theta)$. 
Under this notation, the following corollary follows from combining Theorems~\ref{thm:rand},~\ref{thm: RIP},~\ref{thm: RIP2}, and~\ref{thm:arloss}. It shows (ignoring Lipschitz and scale parameter) that the difference in the generalization loss by operating on the compressed domain vs.\ operating on the original domain is $\approx O(\sqrt{\epsilon})$. Note that if a word sequence has length at most $k$, then all its $\ell$-cooccurrences will have cardinality at most $k$.
\begin{corollary} \label{cor:emb}
Let $V \in \R^{p \times d}$ be an i.i.d.\ centered $\tau$-bounded random matrix. Let $X^{(1)},\dots,X^{(L)} \in \R^{n \times p}$ be a set of matrices (picked independent of $V$) with $\sr(X^{(1)}) = \Omega((k \tau^4/\eps^2) \log(d/k))$, $\sr(X^{(2)}) = \Omega((k^2 \tau^8/\eps^2) \log(d^2/k))$,  $\sr(X^{(\ell)}) = \Omega((k^3 \tau^{4\ell}/\eps^2) \log(d^\ell/k))$ for $3 \le \ell \le L$. Let $\hat{\vartheta}$ be the minimizer of a classifier trained on the linearly transformed DisC embeddings $(\mathbb{V}^{(L)} \c_1,\upsilon_1),\dots,(\mathbb{V}^{(L)} \c_b,\upsilon_b)$ defined as $\hat{\vartheta} \in \mbox{argmin}_{\vartheta \in \R^{nL}}\, \frac{1}{b} \sum_i f(\langle \mathbb{V}^{(L)} \c_i,\vartheta \rangle;\upsilon_i) + \frac{1}{2C} \| \vartheta\|^2$ (for an appropriate choice of $C$), then with probability at least $(1-L\exp(-k))(1-\delta)$, we have
$$f_\mathcal{D}(\hat{\vartheta}) \leq f_\mathcal{D}(\theta^\star) + O \left (\lambda \alpha \| \theta^\star\| \sqrt{\epsilon + \frac{1}{b}\log\frac 1 \delta} \right ),$$
where $f_\mathcal{D}(\vartheta) =  \E_{(\c,\upsilon) \sim \mathcal{D}}[f(\langle \mathbb{V}^{(L)} \c,\vartheta \rangle;y)]$ is the generalization loss on the compressed domain for parameter $\vartheta$.
\end{corollary}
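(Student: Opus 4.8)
The plan is to treat the statement as an application of the compressed-learning result Theorem~\ref{thm:arloss}: I would verify that its RIP hypothesis holds for the block-diagonal measurement matrix $\mathbb{V}^{(L)}$ on the relevant class of signals, and then invoke it as a black box. The only genuine work is upgrading the per-block restricted isometry guarantees supplied by Theorems~\ref{thm:rand}, \ref{thm: RIP}, and~\ref{thm: RIP2} into a single norm-preservation statement for $\mathbb{V}^{(L)}$ acting on Bag-of-$L$-cooccurrences vectors $\c_S$; everything learning-theoretic is then inherited from Theorem~\ref{thm:arloss}.

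First I would record the sparsity structure. For a word sequence of length at most $k$, each block $\c^{(\ell)} \in \R^{\binom{d}{\ell}}$ is $k$-sparse, since the sequence contains at most $k-\ell+1 \le k$ distinct $\ell$-cooccurrences; hence every admissible $\c_S$ is blockwise $k$-sparse. I would then apply the matching RIP theorem to each block: Theorem~\ref{thm:rand} to $X^{(1)}V^{(1)}$, Theorem~\ref{thm: RIP} to $X^{(2)}V^{(2)}$, and Theorem~\ref{thm: RIP2} to $X^{(\ell)}V^{(\ell)}$ for $3 \le \ell \le L$. The stable-rank hypotheses imposed on the $X^{(\ell)}$ in the corollary are exactly the ones these theorems require, so under them each block satisfies $(k,\epsilon)$-RIP with failure probability at most $\exp(-ck)$ (the constants in the $\Omega(\cdot)$ conditions can be tuned so the exponent is at least $k$). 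A union bound over $\ell \in [L]$ then makes all $L$ blocks simultaneously $(k,\epsilon)$-RIP with probability at least $1-L\exp(-k)$.

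Next I would assemble the blockwise guarantees. Since $\mathbb{V}^{(L)}$ is block-diagonal, for any blockwise $k$-sparse $\c_S$ the Pythagorean identity gives
\[
\norm{\mathbb{V}^{(L)}\c_S}^2 = \sum_{\ell=1}^{L} \norm{X^{(\ell)}V^{(\ell)}\c^{(\ell)}}^2 .
\]
Applying Definition~\ref{defn:RIPmod} to each summand, and putting the $X^{(\ell)}$ on a common Frobenius scale (say $\norm{X^{(\ell)}}_\HS = 1$, which is harmless since RIP is scale-covariant and $\theta^\star$ can be rescaled blockwise), sandwiches the right-hand side between $(1-\epsilon)^2\norm{\c_S}^2$ and $(1+\epsilon)^2\norm{\c_S}^2$. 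Thus $\mathbb{V}^{(L)}$ approximately preserves the Euclidean norm, and by polarization the inner products $\langle \c, \theta\rangle$, of all Bag-of-$L$-cooccurrences vectors coming from length-$\le k$ sequences. This is precisely the restricted isometry input consumed by Theorem~\ref{thm:arloss}.

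Finally I would feed this into Theorem~\ref{thm:arloss}, which produces the regularized empirical risk minimizer $\hat\vartheta$ on the transformed embeddings and bounds its generalization loss against $f_\mathcal{D}(\theta^\star)$ by $O\big(\lambda\alpha\norm{\theta^\star}\sqrt{\epsilon + \tfrac{1}{b}\log\tfrac{1}{\delta}}\big)$ on an event of probability at least $1-\delta$ arising from the uniform-convergence part of its argument. Since the $X^{(\ell)}$ and $V$ are chosen independently of the i.i.d.\ dataset, the RIP event and the learning event are independent, and intersecting them yields the claimed probability $(1-L\exp(-k))(1-\delta)$. The main obstacle is the assembly step: one must check that the per-block scalings by $\norm{X^{(\ell)}}_\HS$ in Definition~\ref{defn:RIPmod} are compatible, so that the block-diagonal product inherits a clean $(k,\epsilon)$-RIP on the structured sparse vectors rather than a reweighted quadratic form. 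Once the blocks are placed on a common Frobenius scale this is immediate, but it is the one place where the $XR$-scaling convention must be handled with care.
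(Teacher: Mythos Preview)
Your proposal is correct and follows exactly the route the paper indicates: the paper does not give a detailed proof of this corollary but simply states that it ``follows from combining Theorems~\ref{thm:rand},~\ref{thm: RIP},~\ref{thm: RIP2}, and~\ref{thm:arloss},'' and your write-up is a faithful elaboration of that combination, including the blockwise sparsity observation, the union bound over $\ell\in[L]$, and the Pythagorean assembly of the block-diagonal RIP. Your flagging of the Frobenius-norm normalization across blocks as the one point requiring care is apt and is implicit (though not spelled out) in the paper's treatment.
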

We end this discussion by noting that we started with random vectors for word embeddings but added the flexibility of incorporating contextual information through some fixed linear transform. While this is a good starting theoretical model for understanding the effectiveness of these embeddings, actual pretrained embeddings used in practice would not satisfy the restricted isometry property (as they will have high coherence). Bridging this gap is an interesting research direction.

\begin{small}
%\bibliographystyle{plainnat}
%\bibliography{pacparity}

\end{small}
\appendix
\section{Source Separation Construction} \label{app:SS} In this section, we explain how the source separation problem can be recast into a setting where we have the product of fixed and random matrices. Let $S = MC\Phi$, where $M \in \R^{n_1 \times a}$, $C \in \R^{a \times b}$ , and $\Phi \in \R^{b \times c}$. Let $\s_1^\top,\dots \s_{n_1}^\top$ be the rows of the matrix $S$. Let $\c_1^\top,\dots,\c_a^\top$ be the rows of the matrix $C$. Let $M=(M_{ij})$. Let $A \in \R^{n_1 \times ac}$ be a matrix such that $M_{ij}$ equals $A_{i,(j-1)c+1}$ for all $i\in [n_1]$ and $j \in [a]$, and all other entries of $A$ equals $0$. It follows from this construction that:
\[
\begin{bmatrix} \s_1 \\ \vdots \\ \s_{n_1}\end{bmatrix}_{n_1c}  =  \begin{bmatrix} A  \\ \vdots \\A
\end{bmatrix}_{n_1c \times ac}  \begin{bmatrix}
\Phi^\top &  & 0 \\
& \ddots & \\
0 &  & \Phi^\top
\end{bmatrix}_{ac \times ab} \begin{bmatrix} \c_1 \\ \vdots \\ \c_a \end{bmatrix}_{ab}.
\]
Let us further define matrices $X_i \in \R^{n_1c \times c}$ as
\[ \begin{bmatrix} A  \\ \vdots \\A
\end{bmatrix} = \begin{bmatrix} X_1 | X_2| \dots | X_a \end{bmatrix}.\]
Now it is easy to observe that
\[  \begin{bmatrix} A  \\ \vdots \\A
\end{bmatrix}  \begin{bmatrix}
\Phi^\top &  & 0 \\
& \ddots & \\
0 &  & \Phi^\top
\end{bmatrix} = \underbrace{\begin{bmatrix} X_1 | X_2| \dots | X_a \end{bmatrix}}_{X}  \underbrace{\begin{bmatrix}
\Phi^\top &  & 0 \\
& \ddots & \\
 0 &  & \Phi^\top
\end{bmatrix}}_{R} = \begin{bmatrix} X_1 \Phi^\top | X_2 \Phi^\top| \dots | X_a \Phi^\top  \end{bmatrix}.  \]
Notice that if the dictionary matrix $\Phi$ is a random matrix, then each $X_i \Phi^\top$ is a product of a fixed and random matrix. Therefore, each of them fit in the product model considered in this paper. Also it follows, that if each $X_i \Phi^\top$ satisfies $(k,\epsilon)$-RIP with probability $1-\gamma$, then $\begin{bmatrix} X_1 \Phi^\top | X_2 \Phi^\top| \dots | X_a \Phi^\top  \end{bmatrix}$ satisfies $(k,\epsilon)$-RIP with probability $1-a\gamma$. 

\section{Missing Preliminaries} \label{app:prelim}
\noindent\textbf{Subgaussian and Subexponential Random Variables.} The class of subgaussian random variables is natural and quite wide. Let us start by formally defining subgaussian random variables and vectors. A number of equivalent definitions are used in the literature.
\begin{definition} [Subgaussian Random Variable and Vector]\label{def:subgauss} 
We call a random variable $x \in \R$ subgaussian if there exists  a constant $C > 0$ if $\Pr[ |x| > t]  \leq 2 \exp(-t^2/C^2)$ for all $t \ge 0$. We say that a random vector $\x \in \R^d$ is subgaussian if the one-dimensional marginals $\langle \x,\y \rangle$ are subgaussian random variables for all $\y \in \R^d$.
\end{definition}
The class of subgaussian random variables includes many random variables that arise naturally in data analysis, such as standard normal, Bernoulli, spherical, bounded (where the random variable $x$ satisfies $| x | \leq M$ {\em almost surely} for some fixed $M$). The natural generalizations of these random variables to higher dimension are all subgaussian random vectors. For many {\em isotropic convex sets}\footnote{A convex set $\mathcal{K}$  in $\R^d$ is called isotropic if a random vector chosen uniformly from $\mathcal{K}$ according to the volume is isotropic. A random vector $\x \in \R^d$ is isotropic if for all $\y \in \R^d$, $\E [ \langle \x,\y \rangle^2] = \| \y \|^2.$}
$\mathcal{K}$ (such as the hypercube), a random vector $\x$ uniformly distributed in $\mathcal{K}$ is subgaussian.

\begin{definition}[$\psi_2$-norm of a Subgaussian Random Variable and Vector]
The $\psi_2$-norm of a subgaussian random variable $x \in \R$, denoted by $\| x \|_{\psi_2}$ is:
$$ \| x \|_{\psi_2} =  \sup_{a \geq 1}a^{-1/2} (\E[|x|^a])^{1/a}. $$
The $\psi_2$-norm of a subgaussian random vector $\x \in \R^d$ is:
$$ \| \x \|_{\psi_2} = \sup_{\y \in S^{d-1}} \; \| \langle \x,\y \rangle \|_{\psi_2}.$$
\end{definition}

We also will work with a class of subexponential random variables, those with at least an exponential tail.
\begin{definition} [Subexponential Random Variable  and $\psi_1$-norm]
A random variable $x$ that satisfies $\Pr [|x| > t] \leq 2 \exp(-t/C)$ for all $t \ge 0$ is called a subexponential random variable. The subexponential norm of $x$, denoted $\| x \|_{\psi_1}$, is defined as $\| x \|_{\psi_1} = \sup_{a \geq 1}a^{-1} (\E[|x|^a])^{1/a}$.
\end{definition}

An immediate consequence from the above definitions of subgaussian and subexponential random variables is that,
\begin{eqnarray*}
&\mbox{For subgaussian } x: \; (E[|x|^a])^{1/a} \leq \|x\|_{\Psi_2} \sqrt{a},  \; \forall a \geq 1& \\
&\mbox{For subexponential } x: \; (E[|x|^a])^{1/a} \leq \|x\|_{\Psi_1} a, \; \forall a \geq 1. & 
\end{eqnarray*}

\noindent\textbf{Corollaries from Hanson-Wright Inequality (Theorem~\ref{thm: HW}).} A simple corollary of Hanson-Wright inequality from Theorem~\ref{thm: HW} is a concentration inequality for random vectors with independent subgaussian components.
\begin{corollary} [Subgaussian Concentration~\citep{RVHanson-Wright}]  \label{cor:subgauss}
Let $M$ be a fixed $n \times d$ matrix. Let $\x = (x_1,\ldots,x_n) \in \R^n$ be a random vector with independent components $x_i$ which satisfies $\E[x_i] = 0$, $\E[x_i^2] = 1$  and $\|x_i\|_{\psi_2} \leq K$.  Then  for every $t \ge 0$,
$$ \Pr[|\| M \x \| - \| M \|_F| > t] \leq 2\exp\left ( \frac{-ct^2}{K^4 \| M \|^2} \right ).$$
\end{corollary}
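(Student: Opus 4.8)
The plan is to reduce the statement to the Hanson-Wright inequality (Theorem~\ref{thm: HW}) applied to the quadratic form $\|M\x\|^2 = \x^\top A \x$ with $A = M^\top M$, and then to convert the resulting concentration of $\|M\x\|^2$ around $\|M\|_F^2$ into concentration of $\|M\x\|$ around $\|M\|_F$.

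First I would record the mean and the relevant norms of $A$. Since the components $x_i$ are independent with $\E[x_i]=0$ and $\E[x_i^2]=1$, expanding $\x^\top A \x = \sum_{i,j} A_{ij} x_i x_j$ and taking expectations annihilates the off-diagonal terms and leaves $\E[\|M\x\|^2] = \tr(A) = \|M\|_F^2$. The spectral and Frobenius norms of $A$ are controlled by those of $M$: $\|A\| = \|M^\top M\| = \|M\|^2$ and $\|A\|_F = \|M^\top M\|_F \le \|M\|\,\|M\|_F$ (using $\|BC\|_F \le \|B\|\,\|C\|_F$). Feeding these into Theorem~\ref{thm: HW} gives, for every $s \ge 0$,
$$\Pr[\,|\,\|M\x\|^2 - \|M\|_F^2\,| > s\,] \le 2\exp\left(-c\min\left\{\frac{s^2}{K^4\|M\|^2\|M\|_F^2},\ \frac{s}{K^2\|M\|^2}\right\}\right).$$

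The crux is then the passage from the squared deviation to the deviation itself. Writing $z = \|M\x\| \ge 0$ and $b = \|M\|_F$, I would use the elementary implication that $|z - b| > t$ forces $|z^2 - b^2| > \max(t^2, tb)$: indeed $|z^2-b^2| = |z-b|\,(z+b) > t(z+b)$, and one checks separately in the cases $z > b+t$ (where $z+b > 2b+t$) and $z < b - t$ (which forces $b > t$, and $z+b \ge b$) that the right side exceeds both $t^2$ and $tb$. Hence it suffices to apply the displayed bound with $s = \max(t^2, tb)$.

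The main obstacle, and the only place needing care, is verifying that the resulting $\min$ of two terms dominates the target exponent $t^2/(K^4\|M\|^2)$ in all regimes. I would split into $t \le \|M\|_F$ (where $s = t\|M\|_F$, so the first term equals exactly $t^2/(K^4\|M\|^2)$) and $t > \|M\|_F$ (where $s = t^2$, so the first term is $t^4/(K^4\|M\|^2\|M\|_F^2) \ge t^2/(K^4\|M\|^2)$). In each regime the second term is shown to dominate the same target after using that unit variance forces $K \ge \|x_i\|_{\psi_2} \ge c_0$ for an absolute constant $c_0$ (since $(\E[x_i^2])^{1/2} \le \sqrt{2}\,\|x_i\|_{\psi_2}$), so that trading a factor $K^2$ against $K^4$ costs only an absolute constant. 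Absorbing all such constants into a single $c$ yields the claimed bound; the degenerate case $M = 0$ is trivial.
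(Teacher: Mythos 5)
The paper states this corollary without proof, citing \citep{RVHanson-Wright}, and your argument is precisely the standard derivation from that source: apply Theorem~\ref{thm: HW} to $A = M^\top M$ with $\|A\| = \|M\|^2$, $\|A\|_\HS \le \|M\|\,\|M\|_\HS$, use the elementary implication $|z-b|>t \Rightarrow |z^2-b^2| > \max(t^2, tb)$ for $z,b\ge 0$, and handle the two regimes $t \le \|M\|_\HS$ and $t > \|M\|_\HS$, with the unit-variance assumption forcing $K \ge 1/\sqrt{2}$ so that the subexponential branch of the $\min$ can be traded against the target exponent at the cost of an absolute constant. All steps check out (including the degenerate case $M=0$), so the proposal is correct and takes essentially the same route as the cited proof.
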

Another corollary of this inequality is a bound on the spectral norm of product of deterministic and random matrices.
\begin{corollary}[Spectral Norm of the Product~\citep{RVHanson-Wright}]	\label{cor: product norm}
Let $B$ be a fixed $n \times p$ matrix, and let $G=(G_{ij})$ be a $p \times d$ random matrix with independent entries that satisfy: $\E[G_{ij}] = 0$, $\E[G_{ij}^2] = 1$, and $\|G_{ij}\|_{\psi_2} \leq K$. Then for any $a,b > 0$,
\[ \Prob{ \norm{BG} > C K^2 (a\norm{B}_\HS+  b\sqrt{d} \norm{B}) } \le 2 \exp(- a^2\sr(B) -  b^2 d) \]
\end{corollary}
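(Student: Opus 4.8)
The plan is to control the operator norm $\norm{BG} = \max_{\u \in S^{d-1}} \norm{BG\u}$ by a standard two-step scheme: first bound $\norm{BG\u}$ for a fixed direction $\u$ using the subgaussian concentration inequality of Corollary~\ref{cor:subgauss}, then discretize the sphere $S^{d-1}$ with an $\eps$-net and take a union bound. The observation that makes the first step go through is that for a fixed $\u \in S^{d-1}$ the vector $\x := G\u \in \R^p$ has exactly the structure required by Corollary~\ref{cor:subgauss}: its $i$th coordinate is $\sum_{j=1}^d G_{ij} u_j$, and since all entries of $G$ are independent, distinct coordinates of $\x$ depend on disjoint families $\{G_{ij}\}_j$ and are therefore independent. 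Each coordinate is centered, has variance $\sum_j u_j^2 \E[G_{ij}^2] = \norm{\u}^2 = 1$, and by the subgaussian rotation inequality $\norm{\sum_j u_j G_{ij}}_{\psi_2}^2 \lesssim \sum_j u_j^2 \norm{G_{ij}}_{\psi_2}^2 \le K^2$ has $\psi_2$-norm at most $c_1 K$.

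With this in hand, applying Corollary~\ref{cor:subgauss} to the fixed matrix $B$ and the random vector $\x = G\u$ yields, for every $s \ge 0$, a bound of the form $\Pr[\norm{BG\u} > \norm{B}_\HS + s] \le 2\exp(-c_2 s^2/(K^4 \norm{B}^2))$. Writing $\norm{B}_\HS = \sqrt{\sr(B)}\,\norm{B}$, one sees that a deviation at the Frobenius scale $s \sim a\norm{B}_\HS$ produces the exponent $a^2\sr(B)$, which is the first of the two terms in the claimed bound. I would then choose a $(1/2)$-net $\NN$ of $S^{d-1}$, with $|\NN| \le 5^d = \exp(d\log 5)$, and use the standard estimate $\norm{BG} \le 2\max_{\u \in \NN}\norm{BG\u}$ to reduce the supremum over the sphere to a maximum over $\NN$.

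The final step is the union bound $\Pr[\exists\,\u\in\NN : \norm{BG\u} > \norm{B}_\HS + s] \le |\NN|\cdot 2\exp(-c_2 s^2/(K^4\norm{B}^2))$, and the main obstacle is precisely the balancing of the net cardinality against the concentration exponent. The entropy of the net contributes a factor $\exp(d\log 5)$, i.e.\ a term of order $d$ in the exponent, and this is exactly what the second threshold term $b\sqrt{d}\,\norm{B}$ is designed to absorb: taking $s \asymp a\norm{B}_\HS + b\sqrt{d}\,\norm{B}$ gives $c_2 s^2/(K^4\norm{B}^2) \gtrsim a^2\sr(B) + b^2 d$ (using $(x+y)^2 \ge x^2 + y^2$), so enlarging the absolute constant $C$ in the threshold $CK^2(a\norm{B}_\HS + b\sqrt{d}\,\norm{B})$ makes the $b^2 d$ part dominate $d\log 5$ while still leaving the desired $a^2\sr(B) + b^2 d$ in the exponent. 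Converting the event $\max_{\u\in\NN}\norm{BG\u} > \norm{B}_\HS + s$ back to $\norm{BG} > CK^2(a\norm{B}_\HS + b\sqrt{d}\,\norm{B})$ via the factor-$2$ net estimate, and absorbing the lower-order $\norm{B}_\HS$ offset into $C$, then completes the argument and delivers the stated probability $2\exp(-a^2\sr(B) - b^2 d)$.
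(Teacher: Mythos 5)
Your proposal follows essentially the same route as the proof in the cited source \citep{RVHanson-Wright}: for fixed $\u \in S^{d-1}$ note that $G\u$ has independent centered unit-variance coordinates with $\psi_2$-norm $O(K)$ (rows of $G$ use disjoint entries), apply the subgaussian concentration of Corollary~\ref{cor:subgauss} to $B(G\u)$, then pass to a $(1/2)$-net of cardinality at most $5^d$ with the standard estimate $\norm{BG} \le 2\max_{\u \in \NN}\norm{BG\u}$ and a union bound. All of these steps are correct and this is exactly how the result is proved there.

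One wrinkle deserves flagging: your absorption step, ``enlarging the absolute constant $C$ makes the $b^2 d$ part dominate $d\log 5$,'' requires $(c_2 C'^2 - 1)b^2 \ge \log 5$, and no absolute constant $C$ achieves this uniformly as $b \to 0$; the same remark applies to $a$ and the offset $\norm{B}_\HS$ you absorb into $C$. This is not a defect of your argument but of the statement as transcribed here: Rudelson and Vershynin state the bound for $s,t \ge 1$, and the version with arbitrary $a,b>0$ is in fact false. For instance, take $B=\mathbb{I}_n$ and $G$ an $n \times d$ standard Gaussian matrix, so $\sr(B)=n$, $\norm{B}_\HS=\sqrt n$, $\norm{B}=1$, and $\norm{G} \approx \sqrt n + \sqrt d$ with overwhelming probability; with $a=b=\delta$ small enough that $CK^2\delta < 1/2$, the event $\norm{G} > CK^2\delta(\sqrt n + \sqrt d)$ has probability close to $1$, while the claimed bound $2\exp(-\delta^2 n - \delta^2 d)$ is tiny once $\delta^2(n+d)$ is large. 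So your proof establishes precisely the correct statement (with $a,b \ge 1$, or equivalently with the threshold never dipping below $CK^2(\norm{B}_\HS + \sqrt d\,\norm{B})$), and that restriction should be recorded; note also that the paper later invokes this corollary with $a=b=\eps/K^2$, a regime where the restriction matters.
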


\noindent\textbf{RIP and Compressed Sensing.} Common random matrices such as Gaussian, Bernoulli, Fourier all satisfy the restricted isometry property. For example, if the entries of $M \in \R^{n \times d}$ are independent and identically distributed subgaussian random variables with zero mean and unit variance. Assume that $n \geq Ck \log(2d/k)$ where C depends only on $\epsilon,\gamma$, and the subgaussian moment. Then with probability at least $1 - \gamma$, the matrix $\tilde{M} = M/\sqrt{n}$ satisfies the restricted isometry property.

The following classic result, shows that one can recover from compressed signal (even in presence of noise) if the measurement matrix satisfies RIP.
\begin{theorem}[[Theorem 1.2,~\citep{candes2008restricted}] \label{thm:candes}
Suppose that $M$ satisfies the $(2k,\epsilon)$-RIP with $\epsilon < \sqrt{2}-1$ and let $\y = M\x + \e$ where $\| \e \| \leq \delta$. Let $\hat{\x} = \mbox{argmin}_{\z}\, \|\z\|_1$ subject to $\| M\z - \y \| \leq \delta$. Then $\hat{\x}$ obeys
$$\| \hat{\x} - \x \| \leq C_0 \frac{\sigma(\x)_1}{\sqrt{k}} + C_2 \delta \;\;\;\; \mbox{ where},$$
$$C_0 = 2\frac{1-(1-\sqrt{2})\epsilon}{1-(1+\sqrt{2})\epsilon}, \;\;\;\; C_2 = 4\frac{\sqrt{1+\epsilon}}{1-(1+\sqrt{2})\epsilon},  \;\;\;\;\mbox{and } \sigma(\x)_1 = \min_{k\mbox{-sparse} \;\; \x'}\; \|\x -\x'\|_1.$$
\end{theorem}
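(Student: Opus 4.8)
The plan is to run the standard geometric argument for $\ell_1$-minimization under the restricted isometry property, tracking constants carefully so as to recover the stated $C_0, C_2$. Write $\mathbf{h} = \hat{\x} - \x$ for the recovery error; the goal is to bound $\norm{\mathbf{h}}$. Let $T_0 \subseteq [d]$ index the $k$ largest-magnitude entries of $\x$, so that $\norm{\x_{T_0^{\co}}}_1 = \sigma(\x)_1$, and partition the complement $T_0^{\co}$ into blocks $T_1, T_2, \ldots$ of size $k$, chosen so that $T_1$ holds the $k$ largest-magnitude entries of $\mathbf{h}$ outside $T_0$, $T_2$ the next $k$, and so on.

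First I would extract the cone condition from optimality. Since $\x$ is feasible for the program (because $\norm{M\x - \y} = \norm{\e} \le \delta$) and $\hat{\x}$ is the minimizer, $\norm{\hat{\x}}_1 \le \norm{\x}_1$. Splitting the $\ell_1$-norm over $T_0$ and $T_0^{\co}$ and applying the triangle inequalities yields $\norm{\mathbf{h}_{T_0^{\co}}}_1 \le \norm{\mathbf{h}_{T_0}}_1 + 2\sigma(\x)_1$. Next comes the standard shelling estimate: for $j \ge 2$ every entry of $\mathbf{h}_{T_j}$ is dominated by the average magnitude on $T_{j-1}$, giving $\norm{\mathbf{h}_{T_j}} \le k^{-1/2}\norm{\mathbf{h}_{T_{j-1}}}_1$ and hence $\sum_{j \ge 2}\norm{\mathbf{h}_{T_j}} \le k^{-1/2}\norm{\mathbf{h}_{T_0^{\co}}}_1$. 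Combining this with the cone condition and $\norm{\mathbf{h}_{T_0}}_1 \le \sqrt{k}\,\norm{\mathbf{h}_{T_0 \cup T_1}}$ produces the tail bound $\sum_{j \ge 2}\norm{\mathbf{h}_{T_j}} \le \norm{\mathbf{h}_{T_0\cup T_1}} + 2k^{-1/2}\sigma(\x)_1$.

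The analytic core is to bound $\norm{\mathbf{h}_{T_0 \cup T_1}}$ using the $(2k,\epsilon)$-RIP. Writing $T = T_0 \cup T_1$ (so $|T| \le 2k$), the lower RIP bound gives $(1-\epsilon)\norm{\mathbf{h}_T}^2 \le \norm{M\mathbf{h}_T}^2 = \langle M\mathbf{h}_T, M\mathbf{h}\rangle - \sum_{j\ge 2}\langle M\mathbf{h}_T, M\mathbf{h}_{T_j}\rangle$. The first inner product is controlled by the tube constraint $\norm{M\mathbf{h}} \le \norm{M\hat{\x}-\y} + \norm{\y - M\x} \le 2\delta$ together with the upper RIP bound, yielding $|\langle M\mathbf{h}_T, M\mathbf{h}\rangle| \le 2\sqrt{1+\epsilon}\,\delta\,\norm{\mathbf{h}_T}$. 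Each cross term is handled by the near-orthogonality consequence of the $(2k,\epsilon)$-RIP, namely that disjointly supported $k$-sparse vectors $\mathbf{u},\mathbf{v}$ satisfy $|\langle M\mathbf{u}, M\mathbf{v}\rangle| \le \epsilon\norm{\mathbf{u}}\norm{\mathbf{v}}$; applied to the two sub-blocks $T_0, T_1$ of $T$ this gives $\sum_{j\ge2}|\langle M\mathbf{h}_T, M\mathbf{h}_{T_j}\rangle| \le \sqrt{2}\,\epsilon\,\norm{\mathbf{h}_T}\sum_{j\ge2}\norm{\mathbf{h}_{T_j}}$.

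Finally I would assemble the pieces. Dividing through by $\norm{\mathbf{h}_T}$ and inserting the tail bound gives $(1-\epsilon)\norm{\mathbf{h}_T} \le 2\sqrt{1+\epsilon}\,\delta + \sqrt{2}\,\epsilon\bigl(\norm{\mathbf{h}_T} + 2k^{-1/2}\sigma(\x)_1\bigr)$, which I solve for $\norm{\mathbf{h}_T}$; this is exactly where the hypothesis $\epsilon < \sqrt{2}-1$ enters, since it is precisely what makes the coefficient $1-(1+\sqrt{2})\epsilon$ positive. The total error then follows from $\norm{\mathbf{h}} \le \norm{\mathbf{h}_T} + \sum_{j\ge2}\norm{\mathbf{h}_{T_j}} \le 2\norm{\mathbf{h}_T} + 2k^{-1/2}\sigma(\x)_1$, and substituting the bound on $\norm{\mathbf{h}_T}$ and collecting the $\delta$ and $\sigma(\x)_1/\sqrt{k}$ terms produces exactly the stated $C_2 = 4\sqrt{1+\epsilon}/(1-(1+\sqrt2)\epsilon)$ and $C_0 = 2(1-(1-\sqrt2)\epsilon)/(1-(1+\sqrt2)\epsilon)$. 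The main obstacle is bookkeeping rather than conceptual: the near-orthogonality constant and the precise way the two blocks $T_0, T_1$ produce the $\sqrt{2}$ factor must be tracked exactly, since the sharp threshold $\epsilon < \sqrt{2}-1$ and the explicit forms of $C_0, C_2$ hinge on not being wasteful in these estimates.
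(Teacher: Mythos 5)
Your proposal is correct, but note that the paper does not prove this statement at all: Theorem~\ref{thm:candes} is imported verbatim from \citet{candes2008restricted} as background for the recovery applications, so there is no in-paper proof to compare against. What you have written is a faithful and accurate reconstruction of Cand\`es' original argument from the cited source --- the cone constraint from $\ell_1$-minimality, the shelling bound $\sum_{j\ge 2}\norm{\mathbf{h}_{T_j}} \le \norm{\mathbf{h}_{T_0\cup T_1}} + 2k^{-1/2}\sigma(\x)_1$, the tube constraint $\norm{M\mathbf{h}} \le 2\delta$, the parallelogram-identity near-orthogonality $|\langle M\mathbf{u},M\mathbf{v}\rangle| \le \epsilon\norm{\mathbf{u}}\norm{\mathbf{v}}$ for disjointly supported sparse vectors, and the $\sqrt{2}$ from splitting $T_0\cup T_1$ into two blocks --- and your bookkeeping does recover the exact constants $C_0$ and $C_2$ and the threshold $\epsilon < \sqrt{2}-1$ (one can verify: $(1-(1+\sqrt{2})\epsilon)\norm{\mathbf{h}_T} \le 2\sqrt{1+\epsilon}\,\delta + 2\sqrt{2}\,\epsilon k^{-1/2}\sigma(\x)_1$ combined with $\norm{\mathbf{h}} \le 2\norm{\mathbf{h}_T} + 2k^{-1/2}\sigma(\x)_1$ yields precisely the stated bounds).
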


\noindent\textbf{Restricted Isometry under the $XR$- vs. the $RX$-model.} Our results establish the restricted isometry condition for a matrix that can be factorized as $XR$, for a deterministic $X$ and a random $R$, under some conditions on $X$ and $R$. One could ask what happens if we change the model and work in a setting where the factorization is of the form $RX$ (again for random $R$, and deterministic $X$). Unfortunately, in this case one could not hope to get a mild assumption on $X$. In fact, a necessary and sufficient condition on $X$ for a matrix $RX$ to satisfy RIP is that $X$ satisfies RIP. The necessary condition is trivial. Indeed, if $\|X\u\|$ is essentially different from $\|X\v\|$ for two sparse unit vectors $\u,\v$, then $\E[\|R X \u\|]$ will be essentially different from $\E[\|R X \v\|]$ (unless, of course, $X$ satisfies RIP). The proof for the sufficiency is not much harder, and follows from standard techniques in the literature.

\section{Missing Details from Section~\ref{sec:randR}} \label{app:randR}
We provide missing details from Section~\ref{sec:randR}. We start with the proof of Theorem~\ref{thm:rand}.

\begin{theorem}[Theorem~\ref{thm:rand} Restated]
Let $X$ be an $n \times p$ matrix. Let $R=(R_{ij})$ be a $p \times d$ matrix whose entries are with independent entries such that $\E[R_{ij}] = 0$, $\E[R_{ij}^2] = 1$, and $\|R_{ij}\|_{\psi_2} \leq K$. Let $\epsilon \in (0,1)$, and let $k \in \N$ be a number satisfying $\sr(X)\ge CK^4 \frac{k}{ \epsilon^2} \log \left ( \frac{d}{k} \right )$. Then with probability at least $1-\exp(-c \epsilon^2 \sr(X)/K^4)$, the matrix $XR$ satisfies $(k, \epsilon)$-RIP, i.e.,
 \[\forall \u \in \Sigma_k, \; (1-\epsilon) \norm{X}_{\HS} \le \norm{XR\u} \le (1+\epsilon) \norm{X}_{\HS}.\]
\end{theorem}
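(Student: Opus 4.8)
The plan is to follow the concentration-plus-net strategy sketched in the body: show that $\norm{XR\u}$ concentrates around $\norm{X}_\HS$ for each fixed $k$-sparse unit vector, control the spectral norm of every submatrix $XR_I$ so that the estimate transfers from a net to all of $\Sigma_k$, and finish with a union bound whose cost is absorbed by the stable rank hypothesis.

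First I would fix a $k$-sparse $\u \in S^{d-1}$ and set $\y := R\u \in \R^p$. Since $(R\u)_i = \sum_j R_{ij}u_j$ and the rows of $R$ are independent with independent, centered, unit-variance, $\psi_2$-bounded entries, the coordinates of $\y$ are independent with $\E[y_i]=0$, $\E[y_i^2]=\norm{\u}^2=1$ and $\norm{y_i}_{\psi_2} \le CK$. Applying the subgaussian concentration bound of Corollary~\ref{cor:subgauss} to the fixed matrix $X$ and the vector $\y$, with $t = \epsilon\norm{X}_\HS$, gives
\[ \Pr\big[\, \big|\,\norm{XR\u} - \norm{X}_\HS\,\big| > \epsilon \norm{X}_\HS \,\big] \le 2\exp\!\Big(\frac{-c\,\epsilon^2 \norm{X}_\HS^2}{K^4 \norm{X}^2}\Big) = 2\exp\!\Big(\frac{-c\,\epsilon^2 \sr(X)}{K^4}\Big), \]
which is exactly where the stable rank enters, the exponent being $\epsilon^2 \norm{X}_\HS^2/\norm{X}^2 = \epsilon^2\sr(X)$.

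Next I would take, for each of the $\binom{d}{k}$ support sets $I$, a $(1/4)$-net of the unit sphere of $\R^I$; their union $\NN$ has $|\NN| \le \binom{d}{k} C^k \le \exp(C k \log(d/k))$ elements. A union bound of the display above over $\NN$ shows that every net point satisfies $|\,\norm{XR\u_0} - \norm{X}_\HS\,| \le \epsilon\norm{X}_\HS$ except with probability $\exp(Ck\log(d/k))\cdot 2\exp(-c\epsilon^2\sr(X)/K^4)$; the hypothesis $\sr(X) \ge CK^4(k/\epsilon^2)\log(d/k)$ with $C$ large makes the concentration exponent dominate, leaving failure probability $\exp(-c'\epsilon^2\sr(X)/K^4)$. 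To transfer this to all of $\Sigma_k$ I would bound the submatrix spectral norms: applying the product bound of Corollary~\ref{cor: product norm} with $B=X$, $G=R_I$ and choosing $a \sim \epsilon/K^2$, $b \sim \sqrt{\log(d/k)}$ yields $\norm{XR_I} \le C_0\epsilon\norm{X}_\HS$ with failure probability $2\exp(-a^2\sr(X)-b^2k)$, where $b^2 k \sim k\log(d/k)$ absorbs the union bound over the $\binom{d}{k}$ sets $I$. Then for arbitrary $\u\in\Sigma_k$ with support $I$ and nearest net point $\u_0$ (same support, $\norm{\u-\u_0}\le 1/4$),
\[ \big|\,\norm{XR\u} - \norm{X}_\HS\,\big| \le \norm{XR_I}\,\norm{\u-\u_0} + \big|\,\norm{XR\u_0} - \norm{X}_\HS\,\big| \le \big(\tfrac14 C_0 + 1\big)\epsilon\,\norm{X}_\HS, \]
and after rescaling $\epsilon$ by the absolute constant $\tfrac14 C_0 + 1$ we obtain $(k,\epsilon)$-RIP with the stated probability.

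I expect the only delicate point to be the spectral-norm control of $XR_I$: the single-vector concentration is immediate from Corollary~\ref{cor:subgauss}, but making the approximation step go through requires balancing the two contributions $a\norm{X}_\HS$ and $b\sqrt{k}\norm{X}$ in Corollary~\ref{cor: product norm}, and it is exactly in forcing the dimension-dependent term $b\sqrt{k}\norm{X} = b\sqrt{k/\sr(X)}\,\norm{X}_\HS$ below $\epsilon\norm{X}_\HS$ that the lower bound on $\sr(X)$ is indispensable.
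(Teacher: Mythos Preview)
Your proposal is correct and follows essentially the same route as the paper: pointwise concentration via the Hanson-Wright consequence (Corollary~\ref{cor:subgauss}) applied to the independent subgaussian vector $R\u$, a uniform spectral bound on $XR_I$ from Corollary~\ref{cor: product norm}, a net over $\Sigma_k$, and an approximation step. The only cosmetic difference is the parameter choice in Corollary~\ref{cor: product norm} (the paper takes $a=b=\eps/K^2$ and lets the $a^2\sr(X)$ term absorb the union bound, whereas you take $b\sim\sqrt{\log(d/k)}$ so that $b^2k$ does the absorbing), but both choices work and lead to the same conclusion.
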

\begin{proof}
First, we prove a spectral norm bound for every $n \times k$ submatrix of $XR$. Fix a subset $I \subset [d]$ with $|I| = k$. Let $R_I$ be the matrix $R$ restricted to the columns indexed by $I$. Using Corollary~\ref{cor: product norm} (with $a=\eps/K^2,b=\eps/K^2$), $\| X R_I\|$ satisfies
\[ \Prob{ \| X R_I\| > C_1 \eps (\norm{X}_\HS+ \sqrt{k} \norm{X}) } \le 2 \exp(-\sr(X)\eps^2/K^4 -  k\eps^2/K^4). \]
By assumption, $\sr(X) \geq  k \log(d/k)$, therefore $\| X \|_{F} \geq \sqrt{k \log (d/k)} \| X \| \geq \sqrt{ k} \| X \|$. Then
\[ \Prob{ \| X R_I\| > 2C_1\eps \norm{X}_\HS} \le 2 \exp \left (-\sr(X)\eps^2/K^4  \right ). \]
Using a union bound, and the fact that  $\binom{d}{k} \leq \left (\frac{d e}{k} \right )^k$,
\begin{align} \label{eqn:XRrandspec}
& \Prob{ \exists I \in [d], \ |I| = k, \ \| X R_I\| > 2C_1\eps \norm{X}_\HS}  \le  \binom{d}{k} \cdot 2 \exp \left ( \sr(X)\eps^2/K^4 \right ) \nonumber \\
& \le 2 \exp \left (-\sr(X)\eps^2/K^4 + k \log(e d/k) \right ) \le 2 \exp(-c_0 \sr(X)\eps^2/K^4),
\end{align}
for appropriate choice of constants.

Let us now fix an $\u \in S^{d-1}$. A simple consequence of the Hanson-Wright inequality (see Theorem 3.2, \citep{RVHanson-Wright}) is that for any $t \geq 0$,
\[ \Pr \left [ \left | \| XR \u \| - \| X \|_F \right | \geq t \right ] \leq 2 \exp\left ( \frac{-c_1 t^2}{K^4 \| X\|^2} \right ).\]
Setting $t =  \epsilon \| X \|_F$, we get
\[ \Pr \left [ \left | \| XR \u \| - \| X \|_F \right | \geq \epsilon \| X \|_F \right ] \leq 2 \exp \left (\frac{-c_1 \epsilon^2 \sr(X)}{K^4} \right ).\]
Let $\Sigma_k$ be the set of all $k$-sparse vectors in $S^{d-1}$. 
By a volumetric estimate, this set has an $(1/2C_2)$-net $\NN$ of cardinality smaller than 
\[\binom{d}{k} \cdot  (6C_2)^k \le \exp \left( k \log \frac{C_0d}{k} \right).\]
Taking the union bound over this net, we obtain
\begin{align} \label{eqn:event1}
&\Pr \left[ \forall \u \in \NN, \ \ | \norm{XR\u} - \norm{X}_{\HS} | \le \epsilon \norm{X}_{\HS} \right ] \ge 1- \exp \left( \frac{-c_1 \epsilon^2 \sr(X)}{K^4}+ k \log \frac{C_0d}{k} \right).
\end{align}
We now use a simple approximation idea for extending the above argument from the net to all $k$-sparse vectors. Let us first assume that events described in~\eqref{eqn:event1} and~\eqref{eqn:XRrandspec} happen. We can write any $\u \in \Sigma_k$ as $\u = \a + \b$, where $\a$ in $\NN$, and $\b$ is such that $|\supp(\b)| \leq k$ and $\| \b \| \leq 1/(2C_2)$. Let $I_\b = \supp(\b) \subset [d]$.  Let $\bm{\tilde{b}}$ be $\b$ restricted to $\supp(\b)$.
\begin{align*}
\|X R \u \| & = \| X R \a + X R \b \| \leq \| X R \a \| + \| X R \b \| = \| X R \a \| + \| X R_{I_\b} \bm{\tilde{b}} \| \\
& \leq \| X R \a \| + \| X R_{I_\b}\| \| \bm{\tilde{b}} \| \leq (1+\epsilon) \| X \|_F + \frac{1}{2C_2}  \| X R_{I_\b} \| \\
& \leq (1+\epsilon_1) \| X \|_F,
\end{align*}
where for the spectral norm bound for $\| X R_{I_\b} \|$ we use~\eqref{eqn:XRrandspec} and the bound on $\| X R \a \|$ follows from~\eqref{eqn:event1}. Similarly,
\begin{align*}
\|X R \u \| \geq (1 - \epsilon) \| X \|_F - \frac{\epsilon}{2C_2}  \| X R_{I_\b} \| \geq (1-\epsilon_2) \| X \|_F. \end{align*}
Adjusting the constants, and removing the conditioning completes the proof of the theorem.
%A simple approximation shows that if the event described above occurs, then $| \norm{XR \u} - \norm{X}_{\HS} | \le 2 \epsilon \norm{X}_{\HS}$ for all $\u \in \Sigma_k$. Using the assumption on $\sr(X)$ completes the proof.
\end{proof}

\subsection{Restricted Isometry of $XR$ with Low Randomness $R$} \label{sec:lowrandom}
In this section, we operate under a weaker randomness assumption on $R$. In particular, we will use the notion of $l$-wise independence to capture low randomness to construct $R$. When truly random bits are costly to generate or supplying them in advance requires too much space, the standard idea is to use $l$-wise independence which allows one to maintain a succinct data structure for storing the random bits. 
%We start by recalling the definition of $l$-wise independence.
\begin{definition} \label{def:lwise}
A sequence of random variables $x_1,\dots,x_m$ is called $l$-wise independent if every $l$ of them are independent. More formally, $x_1,\dots,x_m$ drawn from some distribution $\mathcal{D}$ over a range $\Upsilon$ if for all $i_1,i_2,\dots,i_l$ (all unique) and $t_1,\dots ,t_l \in \Upsilon$,
\[ \Pr_{x_1,\dots,x_m \sim \mathcal{D}}[x_{i_1} = t_1, \dots, x_{i_l} = t_l] = \Pr[x_{i_1} = t_1] \cdots \Pr[x_{i_l} = t_l]. \]
\end{definition}
For simplicity, we will work with Rademacher random variables (random signs). Constructing $n$ $l$-wise independent random signs from $O(l \log(n))$ truly independent random signs using simple families of hash functions is a well-known idea~\citep{motwani1995randomized}. 

Let $R$ satisfy $2\sr(X)$-wise independence. Our general strategy in this proof will be to rely on higher moments where we can treat certain variables as independent.  We start with $Q \in \R^{p \times k}$ with i.i.d.\ $\pm 1$ entries, and establish an bound on $\E[\| X Q \|^{2\sr(X)}]$. Using Markov's inequality for higher moments and a union bound gives our first result, $\Pr \left[\exists I \subset [d], \ |I|=k, \ \norm{XR_I} > C_0 \eps \norm{X}_{\HS} \right]  \le \exp(-c_0  \eps^2 \sr(X))$ (Lemma~\ref{lem:wisespec}). We then derive a concentration bound for $\| X R \u\|$ for a fixed $\u \in S^{d-1}$. For this, we investigate certain higher moments of $\|XR \u \|^2 - \E[\|XR \u \|^2 ]$ using moment generating functions. The result then follows using a net argument over the set of sparse vectors on the sphere.

\begin{lemma} \label{lem:wisespec}
 Let $X$ be an $n \times p$ matrix and $\eps \in (0,1)$. Let $R$ be a $p \times d$ matrix whose entries are $2 \sr(X)$-wise independent $\pm 1$-random variables. Let $k \in \N$ be a number satisfying $\sr(X)\ge C \frac{k}{\eps^2} \log \left ( \frac{d}{k} \right )$. Then 
\[\Pr \left[ \forall I \subset [d], \ |I|=k, \ \norm{XR_I} \ge C_0 \eps \norm{X}_{\HS} \right] \le \exp(-c_0 \eps^2 \sr(X)), \]
where $R_I$ the $p \times k$ submatrix of $R$ with columns from the set $I$. 
\end{lemma}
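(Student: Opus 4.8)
The plan is to control $\norm{XR_I}$ through a high even moment, taking advantage of the fact that a moment of order $2m$ only involves products of at most $2m$ entries of $R$; thus $2\sr(X)$-wise independence will let me replace $R$ by a genuinely i.i.d.\ sign matrix as long as $m \le \sr(X)$. The obstruction to doing this directly is that the spectral norm is not a polynomial in the entries, so the first step is to linearize it. Fixing $I$ with $|I| = k$ and writing $W = X^\top X \succeq 0$, I would use that $R_I^\top W R_I$ is $k \times k$ and positive semidefinite to bound, for any integer $m \ge 1$,
\[ \norm{XR_I}^{2m} = \lambda_{\max}\!\big(R_I^\top W R_I\big)^m \le \tr\!\big((R_I^\top W R_I)^m\big). \]
The right-hand side is now a polynomial of degree exactly $2m$ in the entries of $R_I$.

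Expanding this trace gives a sum of monomials, each a product of exactly $2m$ entries of $R_I$ and therefore involving at most $2m$ distinct random variables. Since the entries of $R$ have the same $\pm 1$ marginals as an i.i.d.\ matrix and are $2\sr(X)$-wise independent, every such monomial has the same expectation as in the i.i.d.\ case whenever $2m \le 2\sr(X)$. Hence for every $m \le \sr(X)$,
\[ \E\big[\tr((R_I^\top W R_I)^m)\big] = \E_Q\big[\tr((Q^\top W Q)^m)\big], \]
where $Q \in \R^{p \times k}$ has i.i.d.\ $\pm 1$ entries. Because $Q^\top W Q$ has at most $k$ nonzero eigenvalues, $\tr((Q^\top W Q)^m) \le k\,\norm{XQ}^{2m}$, so it remains to estimate $\E_Q\norm{XQ}^{2m}$. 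For this I would invoke Corollary~\ref{cor: product norm} with $B = X$, $G = Q$ (so $K = 1$ and the role of $d$ is played by $k$); choosing the two free parameters equal and using $\sqrt{k}\,\norm{X} \le \norm{X}_\HS$ (valid since $\sr(X) \ge k$), this gives the subgaussian-type tail $\Pr[\norm{XQ} > s\norm{X}_\HS] \le 2\exp(-c s^2 \sr(X))$ for all $s > 0$. Integrating the tail and applying Stirling's formula then yields $\E_Q\norm{XQ}^{2m} \le \norm{X}_\HS^{2m}\,(C m/\sr(X))^m$ up to a lower-order polynomial factor in $m$.

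Combining the three steps gives $\E\norm{XR_I}^{2m} \le k\,\norm{X}_\HS^{2m}\,(C m/\sr(X))^m$, and Markov's inequality at order $2m$ together with a union bound over the $\binom{d}{k} \le (ed/k)^k$ index sets gives
\[ \Pr\big[\exists I:\ \norm{XR_I} > C_0\eps\norm{X}_\HS\big] \le \exp\!\Big(k\log\tfrac{ed}{k} + m\log\tfrac{C m}{C_0^2\eps^2\sr(X)} + O(\log k)\Big). \]
The decisive choice is to take the moment order $m = \lfloor \alpha\eps^2\sr(X)\rfloor$ for a small absolute constant $\alpha$; crucially $m \le \sr(X)$, so the $2\sr(X)$-wise independence budget is respected. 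With this choice the middle term equals $m\log(C\alpha/C_0^2)$, which I would make at most $-2m = -\Omega(\eps^2\sr(X))$ by taking $C_0$ large, and the stable-rank hypothesis $\sr(X) \ge C\eps^{-2}k\log(d/k)$ ensures this beats $k\log(ed/k)$, leaving $\le \exp(-c_0\eps^2\sr(X))$.

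The step I expect to be the main obstacle is the calibration of the moment order. The naive choice $m = \sr(X)$ suggested by reading ``$2\sr(X)$-wise independence'' as a fixed moment does not produce a tail that decays for small $\eps$, since the $(C_0\eps)^{-2m}$ factor then overwhelms the gain; one must instead read $2\sr(X)$ as a budget and optimize within it, landing on $m \asymp \eps^2\sr(X)$. Getting the constants $\alpha$ and $C_0$ to cooperate with the combinatorial factor $\binom{d}{k}$, via the stable-rank assumption, is the delicate bookkeeping, while the conceptual content is the trace linearization that makes moment matching legitimate.
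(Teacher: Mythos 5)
Your architecture is the same as the paper's: linearize $\norm{XR_I}^{2m}$ by a trace, match moments with an i.i.d.\ sign matrix $Q$ using the $2\sr(X)$-wise independence budget, bound $\E\norm{XQ}^{2m}$ via Corollary~\ref{cor: product norm}, then apply Markov and a union bound. Your trace linearization $\lambda_{\max}(R_I^\top W R_I)^m \le \tr((R_I^\top W R_I)^m) \le k\norm{XQ}^{2m}$ (power inside the trace) is in fact cleaner than the paper's route through $\norm{XR_I}_{\HS}^{2m}$, and the moment-matching step is valid. The fatal problem is the moment estimate you extract from the tail. You assert $\Pr[\norm{XQ}>s\norm{X}_{\HS}]\le 2\exp(-cs^2\sr(X))$ for \emph{all} $s>0$ and integrate to get $\E\norm{XQ}^{2m}\le \norm{X}_{\HS}^{2m}(Cm/\sr(X))^m$. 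This cannot be correct: for any unit $\v$, $\norm{XQ}\ge\norm{XQ\v}$ and $\E\norm{XQ\v}^2=\norm{X}_{\HS}^2$, so by Jensen $\E\norm{XQ}^{2m}\ge\norm{X}_{\HS}^{2m}$, whereas your bound is $\ll\norm{X}_{\HS}^{2m}$ once $m\ll\sr(X)$. The underlying tail inequality is a \emph{deviation} bound above the level $CK^2(\norm{X}_{\HS}+\sqrt{k}\norm{X})\asymp\norm{X}_{\HS}$ and says nothing for $s$ below an absolute constant; the honest output of the integration is only $\E\norm{XQ}^{2m}\le(C\norm{X}_{\HS})^{2m}$. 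With that, Markov at the threshold $C_0\eps\norm{X}_{\HS}$ returns $(C/(C_0\eps))^{2m}\ge 1$, and the entire decay you were extracting from the factor $(C\alpha/C_0^2)^m$ disappears. So the decisive step of your argument fails, and your clever calibration $m\asymp\eps^2\sr(X)$ does not rescue it.

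The reason no calibration can work is that the target inequality is itself too strong for small $\eps$: $\norm{XR_I}\ge\norm{X\r_1}$ for a single column $\r_1$ of $R_I$, and Paley--Zygmund (using only $4$-wise independence) gives $\norm{X\r_1}\ge\tfrac12\norm{X}_{\HS}$ with probability at least an absolute constant, so $\Pr[\exists I:\norm{XR_I}>C_0\eps\norm{X}_{\HS}]$ is bounded below by a constant whenever $C_0\eps<1/2$. The version of Lemma~\ref{lem:wisespec} that is both true and sufficient for Theorem~\ref{thm:lownoise} replaces the threshold $C_0\eps\norm{X}_{\HS}$ by $C_0\norm{X}_{\HS}$; your moment computation (with the corrected bound $\E\norm{XQ}^{2m}\le(C\norm{X}_{\HS})^{2m}$ and $m\asymp\eps^2\sr(X)$, so that $2m$ dominates $k\log(ed/k)$ under the stable rank hypothesis) proves exactly that, and the factor $\eps$ is then reinstated in the approximation step of Theorems~\ref{thm:rand} and~\ref{thm:lownoise} by using a $c\eps$-net, whose entropy $k\log(Cd/(k\eps))$ is still absorbed by the stable rank assumption. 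You should be aware that the paper's own write-up stumbles at the very same point: its final display converts the Markov ratio $\eps^{-2m}$ into $e^{-2\eps^2 m}$, which is not a valid step, and its invocation of Corollary~\ref{cor: product norm} with parameters $a=b=\eps/K^2<1$ suffers from the same misreading of the deviation inequality. Your diagnosis that the moment order must be tuned was the right instinct; the correct repair is to weaken the lemma's conclusion, not to sharpen the moment bound.
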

\begin{proof}
Denote for shortness $m= \sr(X)$. First let us fix $I \subset [d],  |I|=k $ and introduce a $p \times k$ matrix $Q$ with i.i.d.\ $\pm 1$ entries. Then  expanding the trace of the power of a matrix and using the $2m$-wise independence,
\[ \E[\norm{XR_I}^{2m}] \leq  \E[\norm{XR_I}_F^{2m}]   \le \E[\tr (R_I^\top X^\top X R_I)^m]  = \E [\tr (Q^\top X^\top X Q)^m]  = \E [\| XQ \|_F^{2m}]  \le k \E [\norm{XQ}^{2m}].\]
Here, we use the fact that since expectation and trace are both linear, they commute, and also the fact the Frobenius norm is at most the rank times the spectral norm.

Since the matrix $Q$ is subgaussian, the last expectation can be estimated using Corollary~\ref{cor: product norm} (with $a=b=s$) as
\[ \Pr \left[ \norm{XQ} \ge C_1s(\norm{X}_{\HS}+ \sqrt{k} \norm{X}) \right] \le 2 \exp (-s^2 \sr(X) -s^2 k).\]
By the stable rank assumption on $X$, $\norm{X}_{\HS} \geq \sqrt{k} \norm{X}$. Setting $s = a/(2C_1\|X\|_F)$,
\[ \Pr \left [ \norm{XQ} \ge a \right ] \le 2 \exp(-c a^2/\| X \|^2).\]
Using this we get,
\begin{align} \label{eqn:uni}
\E[\norm{XQ}^{2m}]
&\le (C_1 \norm{X}_{\HS} )^{2m} +2\int_{C_1 \norm{X}_{\HS}}^{\infty} m a^{2m-1} \Pr \left[ \norm{XQ} \ge a \right] \, da  \nonumber \\
&\le (C_1 \norm{X}_{\HS} )^{2m} +2 \int_0^{\infty} m a^{2m-1} \exp \left( -c \frac{a^2}{\norm{X}^2} \right) \, da \nonumber \\
&\le (C_1 \norm{X}_{\HS} )^{2m} + C_2^{2m} m^m \norm{X}^{2m} \nonumber \\
& =  (C_1 \norm{X}_{\HS} )^{2m} +C_2^{2m} \left ( \frac{\|X\|_F}{\|X\|} \right )^{2m} \norm{X}^{2m} \nonumber\\
& \le (C_1' \norm{X}_{\HS} )^{2m},
\end{align}
where the last inequality follows from the assumption on $\sr(X)$. Using Markov's inequality and the union bound, we obtain
\begin{align*}  
\Pr \left[\exists I \subset [d], \ |I|=k, \ \norm{XR_I} > \eps C_1' \norm{X}_{\HS} \right]  &\le \binom{d}{k} \cdot \max_{|I|=k} \frac{\E[\norm{XR_I}^{2m}]}{(\eps C_1' \norm{X}_{\HS})^{2m}}  \\
& \le  k \exp \left( k \log \left( \frac{ed}{k} \right) - 2\eps^2 m \right)
\le \exp(-c_0 \eps^2 m),
\end{align*}
where we used the uniform bound on $\E[\norm{XR_I}^{2m}]$ as established in~\eqref{eqn:uni}.
\end{proof}

\begin{theorem}\label{thm:lownoise}
Let $X$ be an $n \times p$ matrix. Let $R$ be a $p \times d$ matrix whose entries are $2 \sr(X)$-wise independent $\pm 1$-random variables. Let $\epsilon \in (0,1)$, and let $k \in \N$ be a number satisfying $\sr(X)\ge C \frac{k}{\epsilon^2} \log \left ( \frac{d}{k} \right )$.
 Then with probability at least $1-\exp(-c \epsilon^2 \sr(X))$, the matrix $XR$ satisfies $(k, \epsilon)$-RIP, i.e.,
 \[ \forall \u \in \Sigma_k, \; (1-\epsilon) \norm{X}_{\HS} \le \norm{XR\u} \le (1+\epsilon) \norm{X}_{\HS}. \]
\end{theorem}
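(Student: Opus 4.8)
The plan is to follow the three-part template of Theorem~\ref{thm:rand}: a uniform spectral-norm bound over all $n\times k$ submatrices, a pointwise concentration estimate for $\norm{XR\u}$ at a fixed $k$-sparse unit vector, and finally a net-plus-approximation argument. The first part is already supplied by Lemma~\ref{lem:wisespec}, which gives $\Pr[\exists I,\,|I|=k,\,\norm{XR_I}\ge C_0\eps\norm{X}_\HS]\le\exp(-c_0\eps^2\sr(X))$ using only $2\sr(X)$-wise independence. The only genuinely new ingredient is the pointwise concentration, because the proof of Theorem~\ref{thm:rand} obtained it directly from the Hanson-Wright inequality (Corollary~\ref{cor:subgauss}), which presupposes fully independent coordinates. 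Here I must reproduce such a bound from $2\sr(X)$-wise independence alone.

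The key observation is a moment-matching principle. Fix $\u\in S^{d-1}$ with $|\supp(\u)|\le k$ and set $Z=\norm{XR\u}^2-\norm{X}_\HS^2$; since the entries of $R$ are centered with unit variance and pairwise independent, $\E[\norm{XR\u}^2]=\norm{X}_\HS^2$, so $\E[Z]=0$. Crucially, $Z=\u^\top R^\top X^\top X R\u-\norm{X}_\HS^2$ is a polynomial of degree $2$ in the entries $\{R_{ij}\}$. Therefore, for any integer $q\le\sr(X)$, expanding $Z^q$ produces monomials in the $R_{ij}$ of total degree at most $2q\le 2\sr(X)$, each involving at most $2\sr(X)$ distinct variables. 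Because these entries are $2\sr(X)$-wise independent, the expectation of every such monomial coincides with its value under full independence, whence $\E_R[Z^q]=\E_Q[Z_Q^q]$, where $Q\in\R^{p\times d}$ has i.i.d.\ $\pm1$ entries and $Z_Q=\norm{XQ\u}^2-\norm{X}_\HS^2$. In other words, all moments up to order $\sr(X)$ are exactly those of the fully independent model.

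It then remains to bound the fully independent moments and convert them to a tail bound. In the i.i.d.\ model, $\w=Q\u$ is an isotropic subgaussian vector with independent coordinates (of constant $\psi_2$-norm, as the entries are $\pm1$ and $\norm{\u}=1$), and $Z_Q=\w^\top(X^\top X)\w-\E[\w^\top(X^\top X)\w]$. The Hanson-Wright inequality (Theorem~\ref{thm: HW}) applied with $M=X^\top X$, together with $\norm{X^\top X}_\HS\le\norm{X}\,\norm{X}_\HS$ and $\norm{X^\top X}=\norm{X}^2$, yields the subexponential moment estimate $\E_Q[|Z_Q|^q]^{1/q}\le C(\norm{X}\,\norm{X}_\HS\sqrt{q}+\norm{X}^2 q)$. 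Taking $q$ an even integer, Markov's inequality with threshold $t=\eps\norm{X}_\HS^2$ gives
\[\Pr\big[|Z|>\eps\norm{X}_\HS^2\big]\le\frac{\E_Q[|Z_Q|^q]}{t^q}\le\Big(\frac{C'\sqrt{q}}{\eps\sqrt{\sr(X)}}+\frac{C'q}{\eps\,\sr(X)}\Big)^q,\]
where I used $\norm{X}/\norm{X}_\HS=\sr(X)^{-1/2}$. Choosing $q\asymp\eps^2\sr(X)$ makes each summand inside the bracket at most $1/4$, which is admissible since $\eps<1$ forces $q\le\sr(X)$, so the moment-matching of the previous paragraph applies; this produces $\Pr[|Z|>\eps\norm{X}_\HS^2]\le\exp(-c\eps^2\sr(X))$. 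Since $|Z|\le\eps\norm{X}_\HS^2$ implies $|\norm{XR\u}-\norm{X}_\HS|\le\eps\norm{X}_\HS$, this is exactly the pointwise estimate needed. Running the union bound over a net $\NN$ of $k$-sparse unit vectors (of size $\exp(k\log(C_0d/k))$) together with the submatrix spectral bound of Lemma~\ref{lem:wisespec} and the approximation step of Theorem~\ref{thm:rand} then completes the proof.

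The main obstacle is this pointwise concentration step, and specifically the interplay between the two parameters governing it: moment-matching is valid only up to order $\sr(X)$ because $Z$ has degree $2$, while extracting a deviation of relative size $\eps$ from a subexponential quadratic form requires moments of order $\asymp\eps^2\sr(X)$. That these are compatible---i.e.\ that $\eps^2\sr(X)\le\sr(X)$---is precisely what makes the degree-$2$ structure together with the $2\sr(X)$-wise independence assumption sufficient; had $Z$ been a higher-degree chaos (as in the Hadamard-product setting) this balance would break and a different argument would be required.
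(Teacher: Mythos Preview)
Your proposal is correct and follows essentially the same route as the paper: both use Lemma~\ref{lem:wisespec} for the uniform submatrix spectral bound, match moments of $Z=\norm{XR\u}^2-\norm{X}_\HS^2$ up to order $\asymp\eps^2\sr(X)$ with the fully i.i.d.\ model via $2\sr(X)$-wise independence, bound those moments using Hanson--Wright, and finish with Markov plus the net/approximation argument. The only cosmetic difference is that you invoke Hanson--Wright directly to obtain the $L^q$ bound $\E_Q[|Z_Q|^q]^{1/q}\lesssim\norm{X}\norm{X}_\HS\sqrt{q}+\norm{X}^2 q$, whereas the paper re-derives this same estimate by splitting the quadratic form into its diagonal and off-diagonal parts and bounding each via MGF arguments.
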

\begin{proof}
We derive a small ball probability estimate.  Fix $I \subset [d], \ |I|=k$, and let $\u \in S^{d-1}$ be a vector with $\supp(\u) \subset I$. Let $l = c \epsilon^2 \sr(X)$ and let $Q$ be a $p \times k$ matrix $Q$ with i.i.d.\ $\pm 1$ entries (as in Lemma~\ref{lem:wisespec}). Then
\[ \E[\u^{\top} R^{\top} X^{\top} X R \u] = \E[\u^{\top} Q^{\top} X^{\top} X Q \u] =\norm{X}_{\HS}^2. \]
Also the $2 \sr(X)$-wise independence implies that 
\begin{align} \label{eqn:Ru}
 \E\left[ (\u^{\top} R^{\top} X^{\top} X R \u - \E[\u^{\top} R^{\top} X^{\top} X R \u])^{2l} \right. ] = \E \left[ \left ( \u^{\top} Q^{\top} X^{\top} X Q \u - \E[\u^{\top} Q^{\top} X^{\top} X Q \u] \right )^{2l} \right ].
\end{align}
Here $Q\u$ is a vector with independent centered subgaussian coordinates. Let $\z = Q \u$, and $A = X^\top X$. Let $A=(A_{ij})$. We can rewrite $\u^{\top} Q^{\top} X^{\top} X Q \u - \E[\u^{\top} Q^{\top} X^{\top} X Q \u]$ as $\z^\top A \z - \E[\z^\top A \z]$. By centering and independence of the entries in $\z$, we can represent, with $\z=(z_1,\dots,z_p)$.
\[ \z^\top A \z - \E[\z^\top A \z] = \sum_{ij} A_{ij} z_i z_j - \sum_{i} A_{ii} \E[z_i^2] = \sum_{i} A_{ii} (z_i^2 - \E[z_i^2]) + \sum_{i,j, i\neq j} A_{ij} z_i z_j.\]
We are interested in bounding $\E[(\z^\top A \z - \E[\z^\top A \z])^{2l}]$. Here,
\begin{align} \label{eqn:11}
\E \left [ \left (\sum_{i} A_{ii} (z_i^2 - \E[z_i^2]) + \sum_{i,j, i\neq j} A_{ij} z_i z_j \right )^{2l} \right ]  \leq 2^{2l} \left( \E \left [  \left ( \sum_{i} A_{ii} (z_i^2 - \E[z_i^2]) \right )^{2l}  \right ]  +\E \left [ \left ( \sum_{i,j, i\neq j} A_{ij} z_i z_j \right )^{2l} \right ] \right ) .
\end{align} 

\begin{CompactEnumerate}
\item Let us first focus on bounding, $\E [ \sum_{i} (A_{ii} (z_i^2 - \E[z_i^2])  )^{2l}  ]$. Since, $z_i^2 - \E[z_i^2]$ are all independent mean-zero subexponential random variables\footnote{A random variable $x$ is subgaussian if and only if $x^2$ is subexponential, therefore $\| x \|^2_{\Psi_2} \leq \| x^2 \|_{\Psi_1} \leq 2 \| x\|^2_{\Psi_2}$.}, and
\[ \| z_i^2 - \E[z_i^2] \|_{\Psi_1} \leq 2 \| z_i^2 \|_{\Psi_1} \leq 4 \| z_i \|^2_{\Psi_2} = 4,\]
as for $\pm 1$-random variables the $\Psi_2$-norm is $1$.
Let $S_0 =  \sum_{i} A_{ii} (z_i^2 - \E[z_i^2])$. The moment generating function of $S_0$ can be bound using standard techniques (see~\citep[Proposition 5.16]{V11}). We get that $| \lambda | \leq c_1/\max_i |A_{ii}|$,
\[\E[\exp(\lambda S_0)] \leq \exp(c_2 \lambda^2 \sum_{i=1}^p A_{ii}^2) .\]
Now using subexponential tail estimates yields,
\begin{align} \label{eqn:12}
\E \left [ \left ( \sum_{i} A_{ii} (z_i^2 - \E[z_i^2])  \right)^{2l}  \right ] \leq (C_3' \|A\|  \cdot 2l)^{2l} = (C_3 \|X\|^2 l)^{2l} .
\end{align}

%We investigate the moment generating function of $S_0$. Let .
%\begin{align*}
%\E[\exp(\lambda S_0)] & = \E \left [ \exp(\lambda \sum_{i} A_{ii} z_i^2 -\lambda \sum_{i} A_{ii} \E[z_i^2]) \right ] = \prod_{i=1}^p \frac{\E[\exp(\lambda A_{ii} z_i^2)]}{\exp(\lambda  A_{ii} \E[z_i^2])}\\
%& \leq \prod_{i=1}^p \frac{ 1+ c_2(\lambda A_{ii} \E[z_i^2] + \lambda^2 A_{ii}^2)}{\exp(\lambda A_{ii} \E[z_i^2])} \\
%& \leq \prod_{i=1}^p  \frac{\exp(c_2(\lambda A_{ii} \E[z_i^2] + \lambda^2 A_{ii}^2))}{\exp(\lambda A_{ii} \E[z_i^2])} = \exp(c_3 \lambda^2 \sum_{i=1}^p A_{ii}^2) \\ 
%\end{align*}
%Therefore,\footnote{We use the fact for a subexponential random variable $x$, $\E[|x|^p] \leq (\|x\|_{\Psi_1} p)^p$.}
%\[ \E[(z_i^2 - \E[z_i^2])^{2l}] = (C_1 2l)^{2l} .\]
%Therefore,\footnote{We use the fact for a subexponential random variable $x$, $\E[|x|^p] \leq (\|x\|_{\Psi_1} p)^p$ for all $p \geq 1$. }
%\[ \E [ \sum_{i} (A_{ii} (z_i^2 - \E[z_i^2])  )^{2l}  ] \leq (\| A\|_F C_1 2l)^{2l} = (C_1' \|X\| \|X\|_F l)^{2l},\]
%where for the last inequality we used the fact that $\|A\|_F = \|X^\top X \| \leq \|X^\top \| \|X\|_F = \|X \| \|X\|_F$.

\item We now bound $\E \left [ ( \sum_{i,j, i\neq j} A_{ij} z_i z_j  )^{2l} \right ]$. Let $S = \sum_{i,j, i\neq j} A_{ij} z_i z_j$. For bounding  $\E[S^{2l}]$, we use the following result implicit in~\citep{RVHanson-Wright}. The proof is based on decoupling and reduction to normal random variables arguments and is omitted here. 
\begin{claim} [From Theorem 1.1,~\citep{RVHanson-Wright}] \label{claim:rv}
Let $A=(A_{ij})$ be a $p \times p$ matrix. Let $x_1,\dots,x_p$ be a sequence of independent random variables such that $\E[x_i] = 0$, $\E[x_i^2] = 1$, and $\|x_i\|_{\psi_2}$ is bounded. Let $S = \sum_{i,j, i\neq j} A_{ij} x_i x_j$. Then the moment generating function $\E[\exp(\lambda S)] \leq \exp(C \lambda^2 \| A \|_F^2)$ for all $\lambda \leq c/\|A\|$.
\end{claim}
Claim~\ref{claim:rv} along with subgaussian tail estimates yields,
\begin{align} \label{eqn:13}
\E \left [  \left ( \sum_{i,j, i\neq j} A_{ij} z_i z_j  \right)^{2l} \right ] = \E[S^{2l}] = (C_4' \|A\|_F \cdot \sqrt{2l})^{2l} \leq (C_4  \|X\| \|X\|_F \sqrt{l})^{2l}.
\end{align}
Here for the last inequality we used that $\|A\|_F = \|X^\top X \|_F \leq \|X\| \|X\|_F$.
\end{CompactEnumerate}
Plugging in the bounds from~\eqref{eqn:12} and~\eqref{eqn:13} in~\eqref{eqn:11} yields,
\[ \E \left [ \left (\sum_{i} A_{ii} (z_i^2 - \E[z_i^2]) + \sum_{i,j, i\neq j} A_{ij} z_i z_j \right )^{2l} \right ] \leq (C_5 \sqrt{l} \|X\| \|X\|_F)^{2l},\]
as $l < \sr(X)$. In other words (from~\eqref{eqn:Ru}),
\[ \E\left[ (\u^{\top} R^{\top} X^{\top} X R \u - \E[\u^{\top} R^{\top} X^{\top} X R \u])^{2l} \right ]  = \E \left[ \left(\u^{\top} R^{\top} X^{\top} X R \u - \norm{X}_{\HS}^2 \right )^{2l} \right ] \leq (C_5 \sqrt{l} \|X\| \|X\|_F)^{2l}.\]
Combining this with Markov's inequality,
\begin{align*}
 \Pr \left[ \left | \norm{XR\u} - \norm{X}_{\HS} \right | \ge \epsilon' \norm{X}_{\HS} \right] & \le  \Pr \left[ \left | \norm{XR\u}^2 - \norm{X}_{\HS}^2 \right | \ge \epsilon'^2 \norm{X}_{\HS}^2 \right] \\
 & = \Pr \left[ \left |  \u^{\top} R^{\top} X^{\top} X R \u - \norm{X}_{\HS}^2 \right | \ge  \epsilon \norm{X}_{\HS}^2 \right] \\
 &\le \frac{(C_5 \sqrt{l} \norm{X}_{\HS} \norm{X})^{2l}}{(\epsilon \norm{X}_{\HS}^2)^{2l}}
 \le \exp(-2l),
\end{align*}
if the constant $c$ in the definition of $l$ is chosen sufficiently small, and setting $\epsilon'^2 = \epsilon$. For the first inequality, we used the fact that for $a_1,a_2,a_3 \geq 0$, $\Pr[|a_1^2 - a_2^2| \geq a_3^2] \geq \Pr[ |a_1 - a_2|^2 \geq a_3^2] = \Pr[|a_1-a_2| \geq a_3]$.

We now finish the proof using a net argument as in Theorem~\ref{thm:rand}. Let $\NN$ be an $(1/2C_2)$-net $\NN$ over $\Sigma_k$. As mentioned in Theorem~\ref{thm:rand},
\[ | \NN | \leq  \exp \left( k \log \frac{C_0 d}{k} \right ) .\]
Taking the union bound over this net, we obtain
\begin{align*}
\Pr \left[ \forall \u \in \NN, \ \ | \norm{XR\u} - \norm{X}_{\HS} | \le \epsilon' \norm{X}_{\HS} \right]  \ge 1- \exp \left(-2l + k \log \frac{C_0d}{k} \right) \ge 1- \exp(-l),
\end{align*}
where we used our choice of $l$ in the last inequality.  

Reinitializing $\eps$ and using the spectral norm bound from Lemma~\ref{lem:wisespec} in conjunction with the approximation idea used in Theorem~\ref{thm:rand} yields that with probability at least $1-\exp(-c\epsilon^2 \sr(X))$, $| \norm{XR\u} - \norm{X}_{\HS} | \le \epsilon \norm{X}_{\HS}$ for all $\u \in \Sigma_k$. The theorem is proved.
\end{proof}

\noindent\textbf{Comparing Theorems~\ref{thm:lownoise} and~\ref{thm:rand}.}  While the assumption on the stable rank does not change (by more than a constant) between these two theorems, we have a drastic reduction in the number of random bits from $O(pd)$ (in Theorem~\ref{thm:rand}) to $O(\sr(X) \log (pd))$ (in Theorem~\ref{thm:lownoise}). Storing the $XR$ matrix when $R$ is an i.i.d.\ random matrix takes $O(np+pd)$ words of memory, while if $R$ is $2\sr(X)$-wise independent storing $XR$ only requires $O(np+\sr(X) \log (pd))$ words of memory.

%
%%We will use the Markov's higher moment inequality: for a random variable $x$ and $a \geq 1$, $\Pr[|x| > t] \leq \E[|x|^a]/t^a$. 
%%The proof require bounding the moment generating function for various random variables, for which we utilize known results in this area.
%
%\begin{theorem} \label{thm:lownoise}
%Let $X$ be an $n \times p$ matrix. Let $R$ be a $p \times d$ matrix whose entries are $2 \sr(X)$-wise independent $\pm 1$-random variables. Let $\epsilon \in (0,1)$, and let $k \in \N$ be a number satisfying $\sr(X)\ge C \frac{k}{\epsilon^2} \log \left ( \frac{d}{k} \right )$.
% Then with probability at least $1-\exp(-c \epsilon^2 \sr(X))$, the matrix $XR$ satisfies $(k, \epsilon)$-RIP, i.e.,
% \[ \forall \u \in \Sigma_k, \; (1-\epsilon) \norm{X}_{\HS} \le \|XR\u\| \le (1+\epsilon) \norm{X}_{\HS}. \]
%\end{theorem} 

\subsection{Restricted Isometry of $XR$ with Sparse Random $R$} \label{sec:sparserandom}
In this section, we investigate the restricted isometry property when $R$ is a sparse random matrix. We use the following popular probabilistic model for our sparse random matrices~\citep{spielman2012exact,luh2016dictionary,wang2016blind}.

\begin{definition} [Bernoulli-Subgaussian Sparse Random Matrix] \label{def:sparse}
We say that $R \in \R^{p \times d}$ satisfies the Bernoulli-Subgaussian model with parameter $\beta \in (0,1)$ if $R = \Omega \odot \Gamma$, where $\Omega \in \R^{p \times d}$ is an i.i.d.\ Bernoulli matrix where each entry is $1$ independently with probability $\beta$, and $\Gamma=(\Gamma_{ij})$ is an random matrix with independent entries that satisfy: $\E[\Gamma_{ij}] = 0$, $\E[\Gamma_{ij}^2] = 1$, and $\|\Gamma_{ij}\|_{\psi_2} \leq K$, and $\odot$ denotes the Hadamard product.
\end{definition}
We note that the sparsity of $XR$ is manipulated by the Bernoulli distribution, and the non-zero entries of $XR$ obey the subgaussian distribution, thereby facilitating a very general model of the sparse matrix.

Our results in this section will rely on the sparse Hanson-Wright inequality from~\citet{zhou2015sparse} (Theorem~\ref{thm:sparseHS}). Similar to Hanson-Wright inequality (Theorem~\ref{thm: HW}), sparse Hanson-Wright inequality provides a large deviation bound for a quadratic form. However, in this case, the quadratic form is sparse, and is of the form $(\x \odot \zeta)^\top M (\x \odot \zeta)$ where $\x$ is an random vector with
independent centered subgaussian components and $\zeta$ contains independent Bernoulli random variables.

%As in Theorem~\ref{thm:rand}, we start with converting the sparse Hanson-wright bound into a concentration bound on $\| XR\u \|$ for a fixed vector $\u \in S^{d-1}$ (Lemma~\ref{lem:sparsefixed})
%\[ \Pr \left [ \left | \|XR\u\| - \sqrt{\beta} \|X\|_F  \right | > \eps \sqrt{\beta} \| X \|_F \right ] \leq 2\exp\left (\frac{-c \beta \eps^2 \sr(X)}{K^4}\right ),\]
%and a bound on spectral norm of $n \times k$ submatrices of $XR$ (Lemma~\ref{lem:sparsenrom})
%\[ \Prob{ \exists I \in [d], \ |I| = k, \ \| X R_I\| > 2 \eps C_1 \norm{X}_\HS} \leq \exp\left (\frac{-c \eps^2 \beta \sr(X)}{K^4}\right).\]
%These results in conjunction with a net argument provides the recipe for the following result.
%\begin{theorem} \label{thm:sparse}
%Let $X$ be an $n \times p$ matrix. Let $\hat{R} = \Omega \odot \Gamma$ be a $p \times d$ matrix, where $\Omega \in \R^{p \times d}$ is an i.i.d.\ Bernoulli matrix where each entry is $1$ independently with probability $\beta$, and $\Gamma=(\Gamma_{ij})$ is an random matrix with independent entries that satisfy: $\E[\Gamma_{ij}] = 0$, $\E[\Gamma_{ij}^2] = 1$, and $\|\Gamma_{ij}\|_{\psi_2} \leq K$. Let $R = \hat{R}/(c_0\sqrt{\beta})$. Let $\epsilon \in (0,1)$, and let $k \in \N$ be a number satisfying $\sr(X)\ge CK^4 \frac{k}{\beta \epsilon^2} \log \left ( \frac{d}{k} \right )$.
%Then with probability at least $1-\exp(-c \beta \epsilon^2 \sr(X)/K^4)$, the matrix $XR$ satisfies $(k, \epsilon)$-RIP, i.e.,
%\[ \forall \u \in \Sigma_k, \; (1-\epsilon) \norm{X}_{\HS} \le \norm{XR\u} \le (1+\epsilon)  \norm{X}_{\HS}. \]
%\end{theorem}

\begin{theorem} [Restated from Theorem 1.1~\citep{zhou2015sparse}] \label{thm:sparseHS}
Let $\x = (x_1,\dots,x_d) \in \R^d$ be a random vector with independent components $x_i$ which satisfy $\E[x_i] = 0$ and $\| x_i \|_{\Psi_2} \leq K$ is bounded. Let $\zeta = (\zeta_1,\dots,\zeta_d) \in \{0,1\}^d$ be a random vector independent of $\x$, with independent Bernoulli random variables $\zeta_i$ such that $\E[\zeta_i] = \beta$. Let $M = (M_{ij})$ be an $d \times d$ matrix. Then  for every $t \ge 0$,\footnote{The expectation $\E[]$ is now over both the randomness in $\x$ and $\zeta$.}
\begin{multline*}
\Pr \left [ \left | (\x \odot \zeta)^\top M (\x \odot \zeta) - \E[(\x \odot \zeta)^\top M (\x \odot \zeta)] \right | > t \right ]  \leq \\ 2\exp \left (-c \min \left \{ \frac{t^2}{K^4 (\beta \| \diag(M) \|_F^2+\beta^2 \| \offdiag(M) \|_F^2)}, \frac{t}{K^2 \|M\|} \right \} \right ) \leq 2\exp \left (-c \min \left\{ \frac{t^2}{\beta K^4 \| M \|_F^2}, \frac{t}{K^2 \|M\|} \right \} \right )
\end{multline*}
\end{theorem}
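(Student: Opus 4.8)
The plan is to establish the mixed sub-Gaussian/sub-exponential tail by splitting the sparsified quadratic form into its diagonal and off-diagonal parts and tracking, separately for each, how the Bernoulli mask $\zeta$ feeds into the variance. Set $w_i:=x_i\zeta_i$, so the $w_i$ are independent, centered, and satisfy $\norm{w_i}_{\psi_2}\le\norm{x_i}_{\psi_2}\le K$ (because $|w_i|\le|x_i|$). Feeding the $w_i$ straight into the ordinary Hanson--Wright inequality (Theorem~\ref{thm: HW}) would only give the variance proxy $K^4\norm{M}_\HS^2$: the sub-Gaussian norm of $x_i\zeta_i$ is blind to the sparsity, so no power of $\beta$ appears. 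The whole point of the argument is to recover the factors $\beta$ and $\beta^2$ from the \emph{variances} $\E[w_i^2]=\beta\,\E[x_i^2]$ rather than from the $\psi_2$-norms.

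First I would write $(\x\odot\zeta)^\top M(\x\odot\zeta)-\E[\cdots]=(D-\E[D])+O$, where $D:=\sum_i M_{ii}x_i^2\zeta_i$ and $O:=\sum_{i\ne j}M_{ij}x_ix_j\zeta_i\zeta_j$ (so $\E[O]=0$). The diagonal part is a sum of independent mean-zero sub-exponential variables $M_{ii}(x_i^2\zeta_i-\beta\,\E[x_i^2])$; since $\Var(x_i^2\zeta_i)\le\beta\,\E[x_i^4]\le C\beta K^4$ and $\norm{x_i^2\zeta_i}_{\psi_1}\le CK^2$, Bernstein's inequality yields a tail $\exp\!\big(-c\min\{t^2/(\beta K^4\norm{\diag(M)}_\HS^2),\,t/(K^2\max_i|M_{ii}|)\}\big)$, and $\max_i|M_{ii}|\le\norm{M}$ supplies the required linear scale $K^2\norm{M}$.

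The heart of the proof is the off-diagonal chaos $O$, whose target variance is $\beta^2\norm{\offdiag(M)}_\HS^2$. I would decouple it, replacing $O$ (in the moment-generating-function sense) by the bilinear form $\sum_{i,j}M_{ij}w_iw_j'$ with $(\x',\zeta')$ an independent copy, and then estimate the MGF by iterated conditioning. Conditioning on everything but $\x'$ turns the bilinear form into a linear combination $\sum_j \zeta_j'\big(\sum_i M_{ij}w_i\big)x_j'$ of independent sub-Gaussians $x_j'$; for each centered $x_j'$ the sub-Gaussian MGF is governed by its variance in the small-$\lambda$ regime $|\lambda|\le c/K$, so one obtains, for $|\lambda|\le c/(K^2\norm{M})$,
\[
\E\big[\exp(\lambda O)\big]\ \le\ \E_{\x,\zeta,\zeta'}\exp\!\Big(C\lambda^2 K^2\sum_j \zeta_j'\big(\textstyle\sum_i M_{ij}w_i\big)^2\Big).
\]
The primed mask contributes one factor $\beta$ on taking $\E_{\zeta'}$, and the remaining Bernoulli/sub-Gaussian chaos $\sum_j(\sum_i M_{ij}x_i\zeta_i)^2$ contributes a second factor $\beta$ together with the weight $\norm{\offdiag(M)}_\HS^2$ once its own MGF is controlled; the spectral scale stays at $K^2\norm{M}$ since a principal submatrix of $M$ has spectral norm at most $\norm{M}$. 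This yields a sub-Gaussian/sub-exponential MGF for $O$ with variance proxy $\beta^2 K^4\norm{\offdiag(M)}_\HS^2$.

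Finally I would combine the diagonal and off-diagonal tails by splitting the deviation budget $t$ in half and adding the exponents, which gives the sharp first bound; the coarser second bound then follows from $\beta\norm{\diag(M)}_\HS^2+\beta^2\norm{\offdiag(M)}_\HS^2\le\beta\norm{M}_\HS^2$, using $\beta^2\le\beta$ and $\norm{M}_\HS^2=\norm{\diag(M)}_\HS^2+\norm{\offdiag(M)}_\HS^2$. The main obstacle is precisely the off-diagonal variance: one has to avoid the $\beta$-blind $\psi_2$-norm bound and instead let each of the two independent masks contribute a factor $\beta$ through the variance-governed part of the sub-Gaussian MGF, and the technical crux is controlling the MGF of the resulting random variance proxy $\sum_j(\sum_i M_{ij}x_i\zeta_i)^2$, which is itself a sparsified chaos.
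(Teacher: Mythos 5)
First, a point of comparison: the paper does not prove Theorem~\ref{thm:sparseHS} at all --- it is imported verbatim from Theorem 1.1 of \citep{zhou2015sparse} and used as a black box, so there is no in-paper argument to measure your proposal against. Judged on its own terms, your outline reproduces the standard architecture of the sparse Hanson--Wright proof: the diagonal/off-diagonal split, Bernstein for the diagonal (your computations $\Var(x_i^2\zeta_i)\le\beta\,\E[x_i^4]\le C\beta K^4$ and $\max_i|M_{ii}|\le\norm{M}$ are correct and give exactly the claimed diagonal contribution), decoupling for the off-diagonal, and the key insight that each of the two independent Bernoulli masks must contribute its factor of $\beta$ through a variance rather than through the $\beta$-blind $\psi_2$-norm of $x_i\zeta_i$. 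The passage from the first displayed bound to the second via $\beta^2\le\beta$ and $\norm{M}_\HS^2=\norm{\diag(M)}_\HS^2+\norm{\offdiag(M)}_\HS^2$ is also fine.

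The genuine gap is the step you yourself flag as the ``technical crux'' and then do not carry out. After integrating out $\x'$ you need $\E_{\zeta'}\exp\bigl(C\lambda^2K^2\sum_j\zeta_j'a_j\bigr)\le\exp\bigl(C'\beta\lambda^2K^2\sum_j a_j\bigr)$ with $a_j=(\sum_iM_{ij}w_i)^2$; the elementary estimate $\E[e^{s\zeta_j'}]=1+\beta(e^s-1)\le\exp(C\beta s)$ is valid only for $s=O(1)$, i.e.\ it requires $\lambda^2K^2\max_j a_j\le c$ pointwise, which fails for unbounded sub-Gaussians because $a_j$ is an unbounded random variable. Even granting that step, you are left with $\E\exp\bigl(C\beta\lambda^2K^2\norm{M^\top(\x\odot\zeta)}^2\bigr)$, the exponential moment of a sparsified positive chaos of exactly the type you set out to control, so the argument as written is circular. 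Closing it requires the Gaussian-replacement and diagonalization machinery of \citep{RVHanson-Wright} adapted to the masked vector: conditionally on $\zeta$, compare the MGF of the chaos in $\x$ with that of a Gaussian, evaluate the Gaussian MGF exactly as a product over singular values of the column-masked matrix, and only then take the expectation over $\zeta$ to extract the second factor of $\beta$. That is precisely the content of Zhou's proof, so your proposal is a correct roadmap whose hardest step is left as a pointer rather than an argument.
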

We now show that using the above sparse Hanson-Wright inequality (Theorem~\ref{thm:sparseHS}), one can obtain a concentration inequality for sparse random vectors. The following simple lemma follows a proof strategy as that used in Corollary~\ref{cor:subgauss}.
\begin{lemma} \label{lem:sparseconc}
Let $M$ be a fixed $n \times d$ matrix. Let $\x$ and $\zeta$ be random vectors as in Theorem~\ref{thm:sparseHS}. Then for any $\epsilon \geq 0$,
\[  \Pr \left [ \left | \| M (\x \odot \zeta) \| - \sqrt{\beta} \| M \|_F \right | > \epsilon \sqrt{\beta} \| M \|_F \right ] \leq 2 \exp \left (\frac{-c \beta \epsilon^2 \sr(X)}{K^4} \right).\]
\end{lemma}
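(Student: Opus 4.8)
The plan is to imitate the derivation of Corollary~\ref{cor:subgauss}, substituting the sparse Hanson--Wright inequality (Theorem~\ref{thm:sparseHS}) for the ordinary one. First I would write $\norm{M(\x\odot\zeta)}^2 = (\x\odot\zeta)^\top A (\x\odot\zeta)$ with $A = M^\top M$, a $d\times d$ positive semidefinite matrix, and compute its mean. Expanding $\sum_{i,j} A_{ij}\, x_i x_j\, \zeta_i \zeta_j$ and using that $\x$ and $\zeta$ are independent with independent coordinates, every off-diagonal term ($i\neq j$) vanishes because $\E[x_i]=0$, while each diagonal term contributes $A_{ii}\,\E[x_i^2]\,\E[\zeta_i^2] = \beta A_{ii}$, where I use $\E[x_i^2]=1$ and, crucially, $\E[\zeta_i^2]=\E[\zeta_i]=\beta$ for a Bernoulli variable. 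Hence $\E[\norm{M(\x\odot\zeta)}^2] = \beta\,\tr(A) = \beta\norm{M}_\HS^2$, which is exactly why the concentration point is $\sqrt{\beta}\norm{M}_\HS$.

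Next I would apply Theorem~\ref{thm:sparseHS} to this quadratic form in $A=M^\top M$, feeding in the two elementary operator-norm estimates $\norm{A}_\HS = \norm{M^\top M}_\HS \le \norm{M}\,\norm{M}_\HS$ and $\norm{A} = \norm{M}^2$. To pass from a tail bound on the square $\norm{M(\x\odot\zeta)}^2$ to one on the norm itself, I use the elementary implication that for $z,a\ge 0$ the event $\{|z-a|>\epsilon a\}$ forces $|z^2-a^2|=|z-a|(z+a)>\epsilon a^2$ (in the upper tail one even gains a factor; in the lower tail $a+z\ge a$ suffices for $\epsilon\le 1$), applied with $z=\norm{M(\x\odot\zeta)}$ and $a=\sqrt{\beta}\norm{M}_\HS$. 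It therefore suffices to bound $\Pr[|(\x\odot\zeta)^\top A(\x\odot\zeta)-\beta\norm{M}_\HS^2|>\epsilon\beta\norm{M}_\HS^2]$, i.e.\ to set $t=\epsilon\beta\norm{M}_\HS^2$ in the sparse Hanson--Wright tail.

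The remaining, and only mildly delicate, step is to simplify the minimum of the two exponents. With $t=\epsilon\beta\norm{M}_\HS^2$, the quadratic branch becomes $t^2/(\beta K^4\norm{A}_\HS^2)\ge \epsilon^2\beta\norm{M}_\HS^2/(K^4\norm{M}^2)=\epsilon^2\beta\,\sr(M)/K^4$, while the linear branch becomes $t/(K^2\norm{A})=\epsilon\beta\,\sr(M)/K^2$. Since $\epsilon\in(0,1)$ and we may assume $K\ge 1$ (replacing $K$ by $\max(K,1)$ only weakens the hypothesis), we have $\epsilon\le K^2$, so the quadratic branch is the smaller of the two and the minimum equals $\epsilon^2\beta\,\sr(M)/K^4$ up to the absolute constant, giving the claimed $2\exp(-c\beta\epsilon^2\sr(M)/K^4)$. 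I expect no genuine obstacle here beyond this bookkeeping; the two points that require care are the mean computation, where one must use $\E[\zeta_i^2]=\beta$ rather than $\beta^2$, and the verification that the quadratic branch of the sparse Hanson--Wright minimum dominates in the regime $\epsilon<1$, exactly as in the reduction underlying Corollary~\ref{cor:subgauss}.
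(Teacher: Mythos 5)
Your proposal is correct and follows essentially the same route as the paper: express $\norm{M(\x\odot\zeta)}^2$ as a quadratic form in $A=M^\top M$, compute the mean $\beta\norm{M}_\HS^2$, apply the sparse Hanson--Wright bound with $\norm{A}_\HS\le\norm{M}\norm{M}_\HS$ and $\norm{A}=\norm{M}^2$, and deduce the norm tail from the squared-norm tail. The only (immaterial) difference is that you take $t=\epsilon\beta\norm{M}_\HS^2$ and restrict to $\epsilon\le K^2$, whereas the paper takes $t=\rho\beta\norm{M}_\HS^2$ with $\rho=\max\{\epsilon,\epsilon^2\}$ so that $\min\{\rho^2,\rho\}=\epsilon^2$ covers all $\epsilon\ge 0$; both give the stated exponent in the regime $\epsilon\in(0,1)$ that is actually used.
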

\begin{proof}
Let $\y = \x \odot \zeta$. Let $Q = M^\top M$. We apply Theorem~\ref{thm:sparseHS} for random vector $\y$ and matrix $Q$. We have, $\y^\top Q \y = \| M \y \|^2$ and $\E[\y^\top Q \y] = \beta \| M \|_F^2$. Thus, we obtain for any $t \ge 0$, 
\[\Pr \left [ \left | \| M \y \|^2 - \beta \| M \|_F^2 \right | > t \right ] \leq 2\exp \left (-c \min \left\{ \frac{t^2}{\beta K^4 \| M^\top M \|_F^2}, \frac{t}{K^2 \|M\|^2} \right \} \right ).\]
We now can use the fact that, $\|M^\top M \|_F \leq  \|M^\top \| \|M\|_F = \|M \| \|M\|_F$. Setting $t = \rho \beta \| M \|_F^2$. It follows that
\[ \Pr \left [ \left | \| M \y \|^2 - \beta \| M \|_F^2 \right | > \rho \beta \| M \|_F^2 \right ] \leq 2\exp \left (-c \beta \min\{\rho^2,\rho\} \frac{\|M\|_F^2}{K^4 \| M \|^2}  \right ).\]
Consider the event $| \| M \y \|^2 - \beta \| M \|_F^2 | \leq \rho \beta \| M \|_F^2$. Dividing both sides by $\beta \| M \|_F^2$ gives, 
\begin{align} \label{eqn:event} \left | \frac{\| M \y \|^2}{\beta \| M \|_F^2} - 1 \right | \leq \rho.\end{align}
Now let $\rho=\max\{\epsilon,\epsilon^2\}$. Note that for any $r \geq 0$, $\max\{ |r-1|, |r-1|^2\} \leq |r^2 -1|$. Let $r^2 = \frac{\| M \y \|^2}{\beta \| M \|_F^2}$.
If $\rho = \epsilon$, then~\eqref{eqn:event} implies 
\[\left | \frac{\| M \y \|}{\sqrt{\beta} \| M \|_F} - 1 \right | \leq \left | \frac{\| M \y \|^2}{\beta \| M \|_F^2} - 1 \right | \leq \epsilon. \]
If $\rho = \epsilon^2$, then~\eqref{eqn:event} implies 
\[\left | \frac{\| M \y \|}{\sqrt{\beta} \| M \|_F} - 1 \right |^2 \leq \left | \frac{\| M \y \|^2}{\beta \| M \|_F^2} - 1 \right | \leq \epsilon^2 \Longrightarrow \left | \frac{\| M \y \|}{\sqrt{\beta} \| M \|_F} - 1 \right | \leq \eps. \]
Putting together the above two inequalities shows that under the event $| \| M \y \|^2 - \beta \| M \|_F^2 | \leq \rho \beta \| M \|_F^2$, implies $ | \| M \y \| - \sqrt{\beta} \| M \|_F | \leq \epsilon \sqrt{\beta} \| M \|_F$. Using this along with the observation that $\min\{\rho^2,\rho\} = \epsilon^2$, we get the claimed result,
\[  \Pr  \left [ \left | \| M \y \| - \sqrt{\beta} \| M \|_F \right | > \epsilon \sqrt{\beta} \| M \|_F \right ] \leq 2 \exp \left (-c \beta \epsilon^2 \frac{\|M\|_F^2}{K^4 \| M \|^2} \right).\]
\end{proof}

\begin{lemma}\label{lem:sparsefixed}
Let $X$ be a fixed $n \times p$ matrix, and let $R =  \Omega \odot \Gamma $ be a sparse random matrix as in Definition~\ref{def:sparse}. Let $\epsilon > 0$. Then for any fixed $\u \in S^{d-1}$,
\[ \Pr \left [ \left | \| XR \u \| - \sqrt{\beta} \| X \|_F \right | > \epsilon \sqrt{\beta} \| X \|_F\right ]  \leq 2 \exp \left (\frac{-c \beta \epsilon^2 \sr(X)}{K^4} \right).\]
\end{lemma}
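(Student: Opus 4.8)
The plan is to reduce the claim to the single-vector sparse Hadamard-product estimate of Lemma~\ref{lem:sparseconc}, by rewriting $XR\u$ as the image of a sparse subgaussian vector under a fixed matrix whose Frobenius norm, spectral norm, and hence stable rank all coincide with those of $X$.

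First I would unfold the product coordinatewise. Writing $R = \Omega\odot\Gamma$, we have $(XR\u)_i = \sum_{l\in[p]}\sum_{j\in[d]} X_{il}\,\Omega_{lj}\Gamma_{lj}\,u_j$, which is linear in the $pd$ variables $\Omega_{lj}\Gamma_{lj}$. Let $\x=\mathrm{vec}(\Gamma)\in\R^{pd}$ and $\zeta=\mathrm{vec}(\Omega)\in\{0,1\}^{pd}$ under a common ordering of the index pairs $(l,j)$, and define the fixed $n\times pd$ matrix $B$ with columns indexed by $(l,j)$ and entries $B_{i,(l,j)}=X_{il}u_j$ (equivalently $B=X\otimes\u^\top$). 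Then $(\x\odot\zeta)_{(l,j)}=\Gamma_{lj}\Omega_{lj}=R_{lj}$, and a direct check gives $B(\x\odot\zeta)=XR\u$. Moreover $\x$ has independent centered components with $\E[\x_{(l,j)}^2]=1$ and $\|\x_{(l,j)}\|_{\psi_2}\le K$, while $\zeta$ has independent $\mathrm{Bernoulli}(\beta)$ components independent of $\x$, so $(\x,\zeta)$ satisfies exactly the hypotheses of Theorem~\ref{thm:sparseHS} and Lemma~\ref{lem:sparseconc}.

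Next I would record the two norm identities that make the reduction exact. Computing the Gram matrix on the left, for any $i,i'$,
\[ (BB^\top)_{ii'} = \sum_{l,j} X_{il}u_j\,X_{i'l}u_j = \Big(\sum_j u_j^2\Big)\sum_l X_{il}X_{i'l} = (XX^\top)_{ii'}, \]
since $\|\u\|=1$. Hence $BB^\top=XX^\top$, which immediately yields $\|B\|=\|X\|$ and $\|B\|_F^2=\tr(BB^\top)=\|X\|_F^2$, and therefore $\sr(B)=\|B\|_F^2/\|B\|^2=\sr(X)$.

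Finally I would apply Lemma~\ref{lem:sparseconc} with $M=B$, obtaining
\[ \Prob{ \left|\,\|B(\x\odot\zeta)\| - \sqrt\beta\,\|B\|_F\,\right| > \epsilon\sqrt\beta\,\|B\|_F } \le 2\exp\!\left(\frac{-c\beta\epsilon^2\,\sr(B)}{K^4}\right). \]
Substituting $B(\x\odot\zeta)=XR\u$, $\|B\|_F=\|X\|_F$, and $\sr(B)=\sr(X)$ gives precisely the stated inequality. The only real content is the first step: spotting the lift $B$ with $B_{i,(l,j)}=X_{il}u_j$ that turns the two-index matrix Hadamard product $\Omega\odot\Gamma$ into the one-index product $\x\odot\zeta$ demanded by the sparse Hanson--Wright framework. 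Once this is in place the rest is the bookkeeping identity $BB^\top=XX^\top$, and I anticipate no further obstacle.
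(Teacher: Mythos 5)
Your proposal is correct and follows essentially the same route as the paper: the paper also views $R$ as a vector in $\R^{pd}$, treats $R\mapsto XR\u$ as a fixed linear operator with Frobenius norm $\|X\|_F$ and spectral norm at most $\|X\|$, and applies Lemma~\ref{lem:sparseconc}. Your explicit identification of that operator as $B=X\otimes\u^\top$ with $BB^\top=XX^\top$ is just a sharper, more transparent rendering of the same reduction.
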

\begin{proof}
Let us first fix $\u \in S^{d-1}$. By concatenating the rows of $R$, we can view $R$ as a long vector in $\R^{pd}$. Consider the linear operator $T \,:\, L_2^{pd} \rightarrow L_2^n$ defined as $T(R) = XR\u$. Note that here $\u$ is a fixed unit vector. Let us apply Lemma~\ref{lem:sparseconc} to the linear operator $T(R)$. The Frobenius norm of $T$ equals $\|X\|$, and the operator norm of $T$ can be bound as $\| X R \u\| \leq \| X \| \| R\|_F \| \u \| \leq \| X \|$   (as $\|R\|_F$ is the Euclidean norm of $R$ as a vector in $L_2^{pd}$). From Lemma~\ref{lem:sparseconc}, for any $\epsilon > 0$, we get the claimed bound,
\[ \Pr \left [ \left | \| XR \u \| - \sqrt{\beta} \| X \|_F \right | > \epsilon \sqrt{\beta} \| X \|_F\right ]  \leq 2 \exp \left (\frac{-c \beta \epsilon^2 \sr(X)}{K^4} \right).\]
\end{proof}

\begin{lemma}\label{lem:sparsenrom}
Let $X$ be a fixed $n \times p$ matrix, and let $R =  \Omega \odot \Gamma $ be a sparse random matrix as in Definition~\ref{def:sparse}. Let $\epsilon > 0$. Let $k \in \N$ be a number satisfying $\sr(X)\ge C \frac{K^4}{\eps^2} \frac{k}{\beta} \log \left ( \frac{d}{k} \right )$. Then  
\[ \Prob{ \exists I \in [d], \ |I| = k, \ \| X R_I\| > C_1 (1+\eps) \sqrt{\beta}  \norm{X}_\HS} \leq \exp\left (\frac{-c \eps^2 \beta \sr(X)}{K^4} \right ).\]
\end{lemma}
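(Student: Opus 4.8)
The plan is to reduce the operator-norm bound on each submatrix $XR_I$ to the fixed-vector concentration already established in Lemma~\ref{lem:sparsefixed}, via a covering-number argument on the sphere followed by a union bound over subsets $I$. This mirrors the structure of the spectral-norm estimates used in Theorem~\ref{thm:rand} and Lemma~\ref{lem:wisespec}; the difference is that, lacking a ``spectral norm of the product'' statement tailored to sparse $R$, I build one from the scalar concentration.

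First I would fix a subset $I \subset [d]$ with $|I| = k$ and view $\|XR_I\|$ as a supremum over the sphere $S^{k-1}$ in the coordinates indexed by $I$. For each fixed unit vector $\u$ supported on $I$, the vector $R_I \u$ equals $R\tilde\u$ for the corresponding $k$-sparse $\tilde\u \in S^{d-1}$, so Lemma~\ref{lem:sparsefixed} applies verbatim and gives
\[ \Prob{ \left| \norm{XR_I\u} - \sqrt\beta\norm{X}_\HS \right| > \eps\sqrt\beta\norm{X}_\HS } \le 2\exp\left(\frac{-c\beta\eps^2\sr(X)}{K^4}\right). \]
I would then fix a $(1/2)$-net $\NN_I$ of $S^{k-1}$, which by the standard volumetric estimate has $|\NN_I| \le 5^k$, and union bound the above deviation over $\NN_I$. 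On the complementary event every net point satisfies $\norm{XR_I\u} \le (1+\eps)\sqrt\beta\norm{X}_\HS$, and the usual net-to-sphere inequality $\norm{XR_I} \le 2\max_{\u\in\NN_I}\norm{XR_I\u}$ (valid for a $(1/2)$-net) upgrades this to $\norm{XR_I} \le 2(1+\eps)\sqrt\beta\norm{X}_\HS$, with the factor $2$ absorbed into $C_1$.

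Next I would take a union bound over all $\binom{d}{k} \le (ed/k)^k$ choices of $I$, yielding a failure probability of at most
\[ \binom{d}{k}\cdot 5^k \cdot 2\exp\left(\frac{-c\beta\eps^2\sr(X)}{K^4}\right) \le 2\exp\left(k\log\frac{ed}{k} + k\log 5 - \frac{c\beta\eps^2\sr(X)}{K^4}\right). \]
Finally, the stable rank hypothesis $\sr(X) \ge C\frac{K^4}{\eps^2}\frac{k}{\beta}\log(d/k)$ makes the term $c\beta\eps^2\sr(X)/K^4$ at least of order $k\log(d/k)$ with a large leading constant, so for $C$ chosen sufficiently large the two entropy contributions $k\log(ed/k)$ and $k\log 5$ are dominated and can be subtracted off while leaving a constant fraction of the exponent. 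This gives the claimed bound $\exp(-c\eps^2\beta\sr(X)/K^4)$.

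Because Lemma~\ref{lem:sparsefixed} already packages the delicate sparse Hanson-Wright concentration, no genuinely new difficulty arises; the proof is essentially a template instantiation. The only point requiring care is the bookkeeping of the exponent: one must verify that the sphere-net entropy $k\log 5$ is lower order than the subset entropy $k\log(d/k)$, which is precisely why the $\log(d/k)$ factor in the stable rank condition suffices to swallow both and why no additional logarithmic factor is needed.
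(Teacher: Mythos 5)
Your proposal is correct and follows essentially the same route as the paper's proof: fixed-vector concentration from Lemma~\ref{lem:sparsefixed}, a $(1/2)$-net of cardinality $5^k$ on $S^{k-1}$ with the standard net-to-sphere upgrade of the operator norm, and a union bound over the $\binom{d}{k}$ subsets absorbed by the stable rank hypothesis. The only cosmetic difference is that the paper phrases the argument via an auxiliary matrix $Q$ with the same distribution as $R_I$, while you work with $R_I$ directly.
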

\begin{proof}
Let $Q$ be a $p \times k$ matrix drawn from the same distribution as $R$. From Lemma~\ref{lem:sparsefixed} for a fixed $\v \in S^{k-1}$
\[ \Pr \left [ \left | \|XQ\v\| - \sqrt{\beta} \|X\|_F  \right | > (1+\eps) \sqrt{\beta} \| X \|_F \right ] \leq 2\exp \left (\frac{-c_1 \beta \eps^2 \sr(X)}{K^4} \right ).\]
Let $\MM$ be a $1/2$-net on $S^{k-1}$ in the Euclidean metric. Standard volumetric argument gives that $|\MM| \leq 5^k$~\citep{V11}. By a union bound, with probability at least
\[ 5^k \cdot 2\exp \left (\frac{-c_1 \eps^2 \beta \sr(X)}{K^4} \right ), \]
every $\v \in \MM$ satisfies $\| X Q \v\| \leq  (1+\eps) \sqrt{\beta}  \| X\|_F$. By our choice of $\sr(X)$, $5^k \cdot 2\exp(-c_1 \eps^2 \beta \sr(X)/K^4) \leq \exp(-c_2 \eps^2 \beta \sr(X)/K^4)$. Let us condition on this probability event happening. Since every $\v \in S^{k-1}$ can be written as $\v = \a+\b$, where $\a \in \MM$ and $\|\b\| \leq 1/2$, we get
\[\|XQ\| = \max_{\v \in S^{k-1}} \| XQ \v\| \leq \max_{\a \in \MM} \|XQ\a\| + \max_{\b:\|\b\| \leq 1/2} \|XQ\b\| \leq (1+\eps) \sqrt{\beta}  \| X\|_F + \frac1 2 \| X Q\| \Longrightarrow \|X Q\| \leq 2(1+\eps) \sqrt{\beta}  \| X\|_F.\]
%First let us fix $I \subset [d],  |I|=k $ and look at the matrix $R_I$ (i.e., matrix $R$ restricted to the columns indexed by $I$).
Using this along with a union bound over all $p \times k$ submatrices of $R$, we obtain
\begin{align*}  
 &\Pr \left[\exists I \subset [d], |I|=k \ \norm{XR_I} > C_1 (1+\eps) \sqrt{\beta} \norm{X}_{\HS} \right] \leq  \binom{d}{k} \cdot \exp \left (\frac{-c_2 \beta \eps^2 \sr(X)}{K^4} \right ) \leq \exp\left (\frac{-c_3 \beta \eps^2 \sr(X)}{K^4}\right ),
\end{align*}
by our assumption on $\sr(X)$. 
\end{proof}

\begin{theorem}~\label{thm:sparse}
Let $X$ be an $n \times p$ matrix. Let $\hat{R} = \Omega \odot \Gamma$ be a $p \times d$ matrix, where $\Omega \in \R^{p \times d}$ is an i.i.d.\ Bernoulli matrix where each entry is $1$ independently with probability $\beta$, and $\Gamma=(\Gamma_{ij})$ is an random matrix with independent entries that satisfy: $\E[\Gamma_{ij}] = 0$, $\E[\Gamma_{ij}^2] = 1$, and $\|\Gamma_{ij}\|_{\psi_2} \leq K$. Let $R = \hat{R}/(c_0\sqrt{\beta})$. Let $\epsilon \in (0,1)$, and let $k \in \N$ be a number satisfying $\sr(X)\ge CK^4 \frac{k}{\beta \epsilon^2} \log \left ( \frac{d}{k} \right )$.
Then with probability at least $1-\exp(-c \beta \epsilon^2 \sr(X)/K^4)$, the matrix $XR$ satisfies $(k, \epsilon)$-RIP, i.e.,
\[ \forall \u \in \Sigma_k, \; (1-\epsilon) \norm{X}_{\HS} \le \norm{XR\u} \le (1+\epsilon)  \norm{X}_{\HS}. \]
\end{theorem}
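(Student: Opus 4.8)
The plan is to reduce this to exactly the proof framework of Theorem~\ref{thm:rand}, now feeding in the two sparse-specific concentration estimates already established in Lemmas~\ref{lem:sparsefixed} and~\ref{lem:sparsenrom}. The only genuinely new bookkeeping is the normalization: the entries of $\hat{R}=\Omega\odot\Gamma$ have variance $\beta$ rather than $1$, so for a fixed unit $\u$ one has $\E[\norm{X\hat{R}\u}^2]=\beta\norm{X}_\HS^2$, and accordingly $\norm{X\hat R\u\}}$ concentrates around $\sqrt{\beta}\,\norm{X}_\HS$. Setting $R=\hat{R}/(c_0\sqrt{\beta})$ rescales this so that $\E[\norm{XR\u}^2]=\norm{X}_\HS^2$ and the RIP target of Definition~\ref{defn:RIPmod} (centered at $\norm{X}_\HS$, not $\sqrt{\beta}\norm{X}_\HS$) becomes the right one. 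Every probabilistic statement below is therefore obtained by applying the lemmas to $\hat{R}$ and then dividing the relevant norms through by $c_0\sqrt{\beta}$.

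Concretely, first I would fix $\u\in S^{d-1}$ and invoke Lemma~\ref{lem:sparsefixed}, which after the rescaling reads $\Prob{\left|\norm{XR\u}-\norm{X}_\HS\right|>\eps\norm{X}_\HS}\le 2\exp(-c\beta\eps^2\sr(X)/K^4)$. Then, exactly as in Theorem~\ref{thm:rand}, I take a $(1/2C_2)$-net $\NN$ over $\Sigma_k$ of cardinality $|\NN|\le\exp(k\log(C_0 d/k))$ and union bound the pointwise estimate over $\NN$; the stable-rank hypothesis $\sr(X)\ge CK^4\frac{k}{\beta\eps^2}\log(d/k)$ is precisely what makes the entropy term $k\log(C_0 d/k)$ negligible against $c\beta\eps^2\sr(X)/K^4$, so that with probability at least $1-\exp(-c'\beta\eps^2\sr(X)/K^4)$ every net point $\a\in\NN$ satisfies $\left|\norm{XR\a}-\norm{X}_\HS\right|\le\eps\norm{X}_\HS$. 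In parallel, Lemma~\ref{lem:sparsenrom} (again rescaled by $c_0\sqrt{\beta}$) supplies the uniform spectral-norm bound $\Prob{\exists I,\,|I|=k,\ \norm{XR_I}>C_1'\norm{X}_\HS}\le\exp(-c\eps^2\beta\sr(X)/K^4)$ for all $n\times k$ column submatrices.

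Finally I would run the standard approximation step to pass from $\NN$ to all of $\Sigma_k$. Condition on both good events above. For arbitrary $\u\in\Sigma_k$ write $\u=\a+\b$ with $\a\in\NN$ and $\b$ a $k$-sparse tail of norm $\le 1/(2C_2)$; letting $I_\b=\supp(\b)$ and $\tilde{\b}$ the restriction of $\b$ to $I_\b$, the triangle inequality gives $\norm{XR\u}\le\norm{XR\a}+\norm{XR_{I_\b}}\norm{\tilde{\b}}\le(1+\eps)\norm{X}_\HS+\tfrac{1}{2C_2}C_1'\norm{X}_\HS$, and symmetrically for the lower bound, so choosing $C_2$ large relative to $C_1'$ and readjusting constants yields $(1-\eps)\norm{X}_\HS\le\norm{XR\u}\le(1+\eps)\norm{X}_\HS$ for all $\u\in\Sigma_k$, with the stated failure probability $\exp(-c\beta\eps^2\sr(X)/K^4)$. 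I do not expect a serious obstacle at the level of this theorem: all the real difficulty has already been absorbed into the sparse Hanson--Wright inequality (Theorem~\ref{thm:sparseHS}) and its corollaries in Lemmas~\ref{lem:sparseconc}--\ref{lem:sparsenrom}, whose $\beta$-dependence is what drives the $1/\beta$ factor in the stable-rank condition; the only point requiring care is keeping the $\sqrt{\beta}$ scaling consistent between the two lemmas and the normalization $R=\hat{R}/(c_0\sqrt{\beta})$, so that both the centering constant and the spectral-norm constant come out as the intended multiples of $\norm{X}_\HS$.
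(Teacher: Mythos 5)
Your proposal matches the paper's proof essentially step for step: both rest on Lemma~\ref{lem:sparsefixed} for the pointwise concentration, a union bound over the same $(1/2C_2)$-net of $\Sigma_k$, the uniform submatrix spectral-norm bound of Lemma~\ref{lem:sparsenrom}, and the approximation argument from Theorem~\ref{thm:rand}, with the $c_0\sqrt{\beta}$ rescaling handled correctly (the paper defers it to the last line, you do it up front — an immaterial difference). No gaps.
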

\begin{proof}
From Lemma~\ref{lem:sparsefixed},
\[ \Pr \left [ \left | \|X\hat{R}\u\| - \sqrt{\beta} \|X\|_F \right | > \epsilon \sqrt{\beta} \| X \|_F \right ] \leq 2\exp \left (\frac{-c_1 \beta \epsilon^2 \sr(X)}{K^4} \right ).\] 
We now use a standard net argument. Let $\NN$ be an $(1/2C_2)$-net $\NN$ over $\Sigma_k$ as  in Theorem~\ref{thm:rand}. Taking the union bound over this net, we obtain
\begin{align*}
\Pr \left[ \forall \u \in \NN, \ \ | \norm{X\hat{R}\u} - \sqrt{\beta} \norm{X}_{\HS} | \le \epsilon \sqrt{\beta} \norm{X}_{\HS} \right] & \ge 1- \exp \left( \frac{-c_1 \beta \epsilon^2 \sr(X)}{K^4} + k \log \frac{C_0d}{k} \right) \\
& \geq 1-\exp\left(\frac{-c_2 \beta \epsilon^2 \sr(X)}{K^4} \right ),
\end{align*}
where we used the assumption on $\sr(X)$. Now using the spectral norm bound from Lemma~\ref{lem:sparsenrom} in conjunction with the approximation idea used in Theorem~\ref{thm:rand} yields that with probability at least $1-\exp(-c \beta \epsilon^2 \sr(X)/K^4)$, $c_0 (1-\eps)\sqrt{\beta}  \norm{X}_{\HS} \leq  \|X\hat{R}\u\| \le c_0 (1+\eps)\sqrt{\beta}  \norm{X}_{\HS}$ for all $\u \in \Sigma_k$. We finish by noting that $R=\hat{R}/(c_0 \sqrt{\beta})$.
\end{proof}

\subsection{Restricted Isometry of $XR$ under Convex Concentration Property on $R$} \label{sec:ccp}
We investigate the case where $R \in \R^{p \times d}$ is composed of independent columns satisfying the convex concentration property. In this case, we utilize the recent result of~\citep{adamczak2015note}, who proved the Hanson-Wright inequality for isotropic random vectors having convex concentrations property. 

\begin{theorem} [Theorem 2.3~\citep{adamczak2015note}] \label{thm:ccpHS}
Let $\x = (x_1,\dots,x_d) \in \R^d$ be a random vector with independent components $x_i$ which satisfy $\E[x_i] = 0$. Let $\x$ satisfy the convex concentration property with constant $K$. Let $M$ be a $d\times d$ matrix. Then for every $t \ge 0$,
$$\Pr[|\x^\top M \x - \E[\x^\top M \x]| > t ] \leq 2 \exp \left ( -c \min \left \{\frac{t^2}{2K^4 \| M \|_F^2}, \frac{t}{K^2 \| M \|}  \right \} \right ).$$
\end{theorem}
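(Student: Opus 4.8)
The plan is to reproduce the mixed sub-gaussian/sub-exponential tail by splitting the quadratic form into diagonal and off-diagonal contributions and handling each with a different device, with the convex concentration property (Definition~\ref{defn:ccp}) doing the decisive work in the off-diagonal estimate. First I would reduce to symmetric $M$: since $\x^\top M \x = \x^\top M_s \x$ for $M_s = (M + M^\top)/2$, with $\norm{M_s} \le \norm{M}$ and $\norm{M_s}_\HS \le \norm{M}_\HS$, it suffices to treat symmetric $M$. Then I write $\x^\top M \x - \E[\x^\top M \x] = \sum_i M_{ii}(x_i^2 - \E[x_i^2]) + \sum_{i \ne j} M_{ij} x_i x_j$ and bound each sum at deviation level $t/2$, combining by a union bound.

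For the diagonal term I would use the independence of the coordinates. The convex concentration property applied to the ($1$-Lipschitz, linear, hence convex) coordinate functionals shows each $x_i$ is sub-gaussian with $\norm{x_i}_{\psi_2} = O(K)$, so $x_i^2 - \E[x_i^2]$ is a centered sub-exponential with $\psi_1$-norm $O(K^2)$. Bernstein's inequality for independent sub-exponentials then yields a tail of the form $2\exp(-c\min\{t^2/(K^4\norm{\diag(M)}_\HS^2),\, t/(K^2\max_i|M_{ii}|)\})$, which is subsumed by the claim because $\norm{\diag(M)}_\HS \le \norm{M}_\HS$ and $\max_i|M_{ii}| \le \norm{M}$.

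The off-diagonal term is the crux, and the interaction there is what I expect to be the main obstacle. I would first decouple: by a standard decoupling inequality it is enough, up to an absolute constant, to control $\sum_{i,j} M_{ij} x_i \tilde x_j = \langle \x, M\tilde{\x}\rangle$ for an independent copy $\tilde{\x}$. Conditioning on $\tilde{\x}$ and setting $\a = M\tilde{\x}$, the quantity $\langle \x, \a\rangle$ is a centered linear functional of $\x$; since $\x \mapsto \langle \x, \a\rangle/\norm{\a}$ is $1$-Lipschitz and convex, the convex concentration property makes it sub-gaussian with proxy $O(K\norm{\a})$. The key observation is that $\tilde{\x} \mapsto \norm{M\tilde{\x}}$ is convex and $\norm{M}$-Lipschitz, so it concentrates (again by the convex concentration property) around its mean, and $\E\norm{M\tilde{\x}} \le (\E\,\tilde{\x}^\top M^\top M\tilde{\x})^{1/2} = (\sum_i \E[\tilde x_i^2]\,\norm{M_{\cdot i}}^2)^{1/2} \lesssim K\norm{M}_\HS$. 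Hence $\norm{\a}$ is of order $K\norm{M}_\HS$ with sub-gaussian fluctuations at scale $K\norm{M}$; integrating the conditional sub-gaussian tail against this distribution of $\norm{\a}$ produces the two regimes, the sub-gaussian one governed by $K^4\norm{M}_\HS^2$ for small $t$ and the sub-exponential one governed by $K^2\norm{M}$ for large $t$. The delicate bookkeeping is exactly this crossover — tracking how the fluctuations of $\norm{M\tilde{\x}}$ feed into the conditional tail — and once it is carried out and combined with the diagonal estimate, absorbing constants gives the stated inequality, which is the convex-concentration analogue of Hanson-Wright (Theorem~\ref{thm: HW}).
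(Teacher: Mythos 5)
The paper never proves this statement: it is quoted (with attribution) from \citep{adamczak2015note}, so there is no internal proof to compare against; I will therefore assess your argument on its own terms and against the source. As a proof of the statement \emph{as literally written} (with independent coordinates), your argument is essentially correct. The diagonal bound is fine: applying Definition~\ref{defn:ccp} to the coordinate functionals $\x \mapsto x_i$ (linear, hence convex and $1$-Lipschitz) gives $\norm{x_i}_{\psi_2} = O(K)$, so Bernstein applies. The off-diagonal treatment — decouple, condition on $\tilde{\x}$, use convex concentration for the linear functional $\x \mapsto \langle \x, M\tilde{\x}\rangle$ and for the convex $\norm{M}$-Lipschitz map $\tilde{\x} \mapsto \norm{M\tilde{\x}}$ — does close: choosing the truncation level for $\norm{M\tilde{\x}}$ as $K\norm{M}_{\HS}$ when $t \le K^2\norm{M}_{\HS}^2/\norm{M}$ and as $\sqrt{t\norm{M}}$ otherwise reproduces exactly the two regimes. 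However, note that under the independence hypothesis your construction is a detour: once each $x_i$ is known to be centered with $\norm{x_i}_{\psi_2}=O(K)$, Theorem~\ref{thm: HW} applies verbatim and yields the stated tail in one line. What you have written is, in effect, a re-derivation of the Rudelson--Vershynin proof of Theorem~\ref{thm: HW} with the two concentration inputs supplied by the convex concentration property instead of by coordinatewise subgaussianity.

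The more substantive point is that the ``independent components'' clause appears to be an artifact of the paper's restatement. Adamczak's Theorem 2.3 assumes only that $\x$ is centered and has the convex concentration property — no independence — and that is the generality the paper actually needs: in the proof of Theorem~\ref{thm:ccp} the result is invoked for $\x = R\u$, whose coordinates are dependent (only the \emph{columns} of $R$ are independent). Your route cannot reach that version: decoupling of the off-diagonal chaos, Bernstein on the diagonal, and the identity $\E\norm{M\tilde{\x}}^2 = \sum_i \E[\tilde{x}_i^2]\norm{M_{\cdot i}}^2$ all use independence of the coordinates in an essential way. The source's proof is organized quite differently — it never separates the diagonal and never decouples, instead working directly with the quadratic form and exploiting that the relevant auxiliary functions (norms and suprema of linear functionals such as $\x \mapsto \norm{M\x}$) are convex and Lipschitz, so that the convex concentration property can be applied to them without any independence structure. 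If you intend your proof to support the paper's downstream use, this is a genuine gap; if you only intend to prove the displayed statement, it is correct but proves something strictly weaker than what is cited and used.
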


Our first result here is to show that a linear combination of independent vectors having a convex concentration property would have this property as well (Lemma~\ref{lem:ccp}). To this end, we start with a fixed $\u=(u_1,\dots,u_d) \in S^{d-1}$ and construct a martingale with variables $\E [ \phi ( \sum_{i=1}^d \r_i u_i  ) \,|\, \r_1,\dots,\r_j  ]$ where $\phi \, : \, \R^p \rightarrow \R$ is a 1-Lipschitz  convex function and $\r_i$ is the $i$th column in $R$, and then apply Azuma's inequality to it. Once we have this established, verifying RIP consists of applying the Hanson-Wright inequality of~\citep{adamczak2015note} in a proof framework similar to Theorem~\ref{thm:rand}. 

\begin{lemma} \label{lem:ccp}
Let $R$ be a $p \times d$ random matrix with independent columns satisfying the convex concentration property with constant $K$. Let $\u \in S^{d-1}$. Then $R\u$ is a random vector satisfying the convex concentration property with constant $O(K)$.
\end{lemma}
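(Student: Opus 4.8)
The plan is to fix an arbitrary $1$-Lipschitz convex function $\phi : \R^p \to \R$ and prove that $\phi(R\u) - \E[\phi(R\u)]$ is subgaussian with parameter $O(K)$, which is precisely the convex concentration property with constant $O(K)$. Writing $R\u = \sum_{i=1}^d u_i \r_i$ as a sum over the independent columns $\r_1, \dots, \r_d$, I would build the Doob martingale $M_j = \E[\phi(R\u) \mid \r_1, \dots, \r_j]$ for $j = 0, 1, \dots, d$ with respect to the filtration generated by the columns. Then $M_0 = \E[\phi(R\u)]$ and $M_d = \phi(R\u)$, so that $\phi(R\u) - \E[\phi(R\u)] = \sum_{j=1}^d D_j$ with increments $D_j = M_j - M_{j-1}$, and the task reduces to controlling these increments. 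Finiteness of $\E[\phi(R\u)]$ follows by applying the c.c.p.\ of each column to the norm, so that $\E\norm{\r_i} < \infty$ and hence $\E\norm{R\u} < \infty$.

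The central step is to identify each increment with a fluctuation that the column's own c.c.p.\ controls. Conditioning on $\r_1, \dots, \r_{j-1}$, I would define
$$ g_j(x) = \E_{\r_{j+1}, \dots, \r_d}\Big[ \phi\Big( \textstyle\sum_{i<j} u_i \r_i + u_j x + \sum_{i>j} u_i \r_i \Big)\Big], $$
a function of $x \in \R^p$. By the tower property and independence, $D_j = g_j(\r_j) - \E_{\r_j}[g_j(\r_j)]$. The key observation is that $g_j$ is convex and $|u_j|$-Lipschitz: for each fixed realization of the remaining columns, $x \mapsto \phi(\,\cdot\, + u_j x + \,\cdot\,)$ is a composition of the convex $\phi$ with an affine map, hence convex and $|u_j|$-Lipschitz, and both properties survive taking the expectation over $\r_{j+1}, \dots, \r_d$. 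Applying the convex concentration property of $\r_j$ to the $1$-Lipschitz convex function $g_j/|u_j|$ then shows that, conditionally on $\r_1, \dots, \r_{j-1}$, the increment $D_j$ is centered and subgaussian with $\norm{D_j}_{\psi_2} = O(|u_j| K)$ (the case $u_j = 0$ giving $D_j = 0$).

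Finally I would combine the conditional subgaussian bounds across $j$ by a subgaussian Azuma--Hoeffding argument at the level of moment generating functions. Using the standard estimate $\E[\exp(\lambda D_j) \mid \r_1, \dots, \r_{j-1}] \le \exp(C\lambda^2 u_j^2 K^2)$ for a conditionally centered subgaussian variable, and peeling off the increments one at a time from $M_d$ down to $M_0$, gives
$$ \E\big[\exp\big(\lambda (\phi(R\u) - \E[\phi(R\u)])\big)\big] \le \exp\Big( C\lambda^2 K^2 \sum_{j=1}^d u_j^2 \Big) = \exp(C \lambda^2 K^2), $$
since $\sum_j u_j^2 = \norm{\u}^2 = 1$. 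A Chernoff bound then yields $\Pr[|\phi(R\u) - \E[\phi(R\u)]| > t] \le 2\exp(-c t^2 / K^2)$, establishing the c.c.p.\ with constant $O(K)$. The hard part is conceptual rather than computational: the c.c.p.\ only supplies concentration for \emph{convex} functions, so the entire argument hinges on verifying that the conditional-expectation functions $g_j$ remain convex and $|u_j|$-Lipschitz, which is exactly what makes the increment variances telescope into the clean factor $\norm{\u}^2$. A secondary technical point is that, unlike the classical bounded-difference setting, the $D_j$ are only subgaussian, so one must run the MGF/Azuma argument above rather than the usual bounded version of Azuma's inequality.
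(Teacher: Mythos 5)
Your proposal is correct and follows essentially the same route as the paper: a Doob martingale over the columns, with each increment identified as the fluctuation of a convex $|u_j|$-Lipschitz function of $\r_j$ so that the column's c.c.p.\ gives a conditional $\psi_2$-bound of $O(|u_j|K)$, followed by a subgaussian Azuma argument summing $u_j^2$ to $\norm{\u}^2=1$. The only cosmetic difference is the order of operations: you average over the future columns first to form $g_j$ and then apply c.c.p., whereas the paper applies c.c.p.\ conditionally on all other columns and then integrates out the future columns via Jensen's inequality for the $\psi_2$-norm; both are valid.
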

\begin{proof}
Let $\u=(u_1,\dots,u_d)$. Let $\phi\,:\, \R^p \rightarrow \R$ be a  1-Lipschitz convex function. Let $\r_1,\dots,\r_d \in \R^p$ be the columns of $R$. For any $j \in [d]$, the function $V_j\,:\, \R^p \rightarrow \R$ defined by
$$V_j(\a) = \phi \left ( \sum_{i \neq j} \r_i u_i + \a u_j \right ) - \E \left [ \phi \left ( \sum_{i \neq j} \r_i u_i + \a u_j \right ) \right ],$$
is also convex and Lipschitz with the Lipschitz constant $|u_j|$. The convex concentration property implies that the random variable
$$\Delta_j  = \E \left [\phi \left (\sum_{i=1}^d \r_i u_i \right)  \,|\, \r_1,\dots,\r_j \right ] - \E \left [\phi \left (\sum_{i=1}^d \r_i u_i \right)  \,|\, \r_1,\dots,\r_{j-1} \right ]$$
is subgaussian with $\psi_2$-norm $\| \Delta \|_{\psi_2} \leq K |u_j|$. To verify it, we can condition on $\r_i, i \neq j$, and then integrate over $\r_1,\dots,\r_{j-1}$ using Jensen's inequality for the $\psi_2$-norm. The random variables
$$m_j = \E \left [ \phi \left ( \sum_{i=1}^d \r_i u_i \right ) \,|\, \r_1,\dots,\r_j  \right ], \;\;\; j \in \{0,\dots,d\},$$
form a martingale with the martingale differences $\Delta_j$. Hence, Azuma's inequality implies that
$$m_d - m_0 = \phi \left ( \sum_{i=1}^d \r_i u_i \right ) - \E \left [ \phi \left ( \sum_{i=1}^d \r_i u_i \right ) \right]$$
is subgaussian random variable with  $\| m_d - m_0 \|_{\psi_2} \leq C K \| \u \| = O(K)$. This completes the proof.
\end{proof}

\begin{theorem}~\label{thm:ccp}
Let $X$ be an $n \times p$ matrix. Let $R$ be a $p \times d$ matrix with mean zero independent columns satisfying the convex concentration property (Definition~\ref{defn:ccp}) with constant $K$. Let $\epsilon \in (0,1)$, and let $k \in \N$ be a number satisfying $\sr(X)\ge CK^4 \frac{k}{\epsilon^2} \log \left ( \frac{d}{k} \right )$.
Then with probability at least $1-\exp(-c \epsilon^2 \sr(X)/K^4)$, the matrix $XR$ satisfies $(k, \epsilon)$-RIP, i.e.,
 \[ \forall \u \in \Sigma_k, \; (1-\epsilon) \norm{X}_{\HS} \le \|XR\u\| \le (1+\epsilon) \norm{X}_{\HS}. \]
\end{theorem}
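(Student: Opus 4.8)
The plan is to reproduce the proof architecture of Theorem~\ref{thm:rand} essentially verbatim, with two substitutions: the classical Hanson-Wright inequality is replaced by its convex-concentration version (Theorem~\ref{thm:ccpHS}), and the fact that a linear combination of the columns of $R$ again enjoys the convex concentration property is supplied by Lemma~\ref{lem:ccp}. The role of Lemma~\ref{lem:ccp} is essential: for a fixed unit vector $\u$ the coordinates of $\z := R\u = \sum_i \r_i u_i$ are in general \emph{dependent} (dependence may occur within each column $\r_i$), so the classical Hanson-Wright inequality does not apply; Lemma~\ref{lem:ccp} shows that $\z$ nonetheless has c.c.p.\ with constant $O(K)$, which is precisely what the Adamczak--Wolff inequality (Theorem~\ref{thm:ccpHS}) requires.

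First I would fix $\u \in S^{d-1}$ and set $\z = R\u$. By Lemma~\ref{lem:ccp}, $\z$ is mean-zero with c.c.p.\ constant $O(K)$, and since the columns of $R$ are independent and isotropic (as required for $\E[\norm{XR\u}^2] = \norm{X}_\HS^2$), applying Theorem~\ref{thm:ccpHS} to $M = X^\top X$ and the quadratic form $\z^\top X^\top X \z = \norm{XR\u}^2$, together with $\norm{X^\top X}_\HS \le \norm{X}\,\norm{X}_\HS$ and $\norm{X^\top X} = \norm{X}^2$, gives concentration of $\norm{XR\u}^2$ around $\norm{X}_\HS^2$. Converting the squared-norm deviation into a deviation of $\norm{XR\u}$ itself, exactly as in the derivation of Corollary~\ref{cor:subgauss} from the Hanson-Wright inequality, yields the single-vector estimate
$$\Pr\left[ \left| \norm{XR\u} - \norm{X}_\HS \right| > \epsilon \norm{X}_\HS \right] \le 2\exp\left(-\frac{c_1 \epsilon^2 \sr(X)}{K^4}\right).$$

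Next I would carry out the two net arguments. For the isometry bound on a net, take a $(1/2C_2)$-net $\NN$ over $\Sigma_k$ with $|\NN| \le \exp(k\log(C_0 d/k))$ and union-bound the single-vector estimate over $\NN$; the stable rank hypothesis $\sr(X) \ge CK^4(k/\epsilon^2)\log(d/k)$ makes the exponent dominate $k\log(C_0 d/k)$, so all net points satisfy the isometry inequality with probability $1 - \exp(-c\epsilon^2\sr(X)/K^4)$. Separately I need a spectral-norm bound $\norm{XR_I} \le C\epsilon\norm{X}_\HS$ for every $I \subset [d]$ with $|I| = k$. Since there is no off-the-shelf analogue of Corollary~\ref{cor: product norm} in the convex-concentration regime, I would build this by hand: place a $1/2$-net on $S^{k-1}$ (cardinality $\le 5^k$), apply the single-vector estimate to each $XR_I\v$ (note that $R_I\v$ again has c.c.p.\ by Lemma~\ref{lem:ccp}), pass from the net to the whole sphere via $\norm{XR_I} \le 2\max_\v \norm{XR_I\v}$, and finally union-bound over the $\binom{d}{k}$ index sets.

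Finally, conditioning on both high-probability events, I would apply the approximation step of Theorem~\ref{thm:rand} unchanged: decompose any $\u \in \Sigma_k$ as $\u = \a + \b$ with $\a \in \NN$ and $\norm{\b} \le 1/(2C_2)$ supported on at most $k$ coordinates, bound $\norm{XR\b} \le \norm{XR_{I_\b}}\,\norm{\b}$ by the spectral estimate and $\norm{XR\a}$ by the net estimate, and combine to sandwich $\norm{XR\u}$ between $(1-\epsilon')\norm{X}_\HS$ and $(1+\epsilon')\norm{X}_\HS$; rescaling $\epsilon$ completes the proof. I expect the hand-built spectral-norm bound for $XR_I$ to be the main obstacle, since the convex-concentration setting lacks a ready-made product-norm inequality and one must re-derive it through a net on $S^{k-1}$; everything else is a direct transcription of Theorem~\ref{thm:rand} once Lemma~\ref{lem:ccp} and Theorem~\ref{thm:ccpHS} are in hand.
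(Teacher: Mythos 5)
Your proposal matches the paper's proof of Theorem~\ref{thm:ccp} step for step: Lemma~\ref{lem:ccp} supplies the convex concentration of $R\u$, Theorem~\ref{thm:ccpHS} applied to the quadratic form $(R\u)^\top X^\top X (R\u)$ gives the single-vector concentration of $\norm{XR\u}$ around $\norm{X}_{\HS}$, and the net, spectral-norm, and approximation steps are imported from Theorem~\ref{thm:rand} exactly as you describe. The one step the paper leaves implicit --- the bound on $\norm{XR_I}$, which it simply asserts follows from Theorem~\ref{thm:ccpHS} --- you correctly fill in with the double-net argument (a $1/2$-net on $S^{k-1}$ followed by a union bound over index sets) that the paper itself uses in Lemma~\ref{lem:sparsenrom}; just be aware that this route delivers $\norm{XR_I} \le C\norm{X}_{\HS}$ rather than the $C\epsilon\norm{X}_{\HS}$ you state as the target, so in the final approximation step the net resolution must be taken of order $\epsilon$ rather than an absolute constant for the error term $\norm{XR_{I_{\b}}}\,\norm{\b}$ to be $O(\epsilon)\norm{X}_{\HS}$.
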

\begin{proof}
Fix any $\u \in S^{d-1}$. From Lemma~\ref{lem:ccp}, we get that $R\u$ satisfies the c.c.p with constant $O(K)$. Using $R\u$ as the random vector, a simple consequence of the Hanson-Wright inequality from Theorem~\ref{thm:ccpHS} is for all $t\ge 0$,
$$\Pr[ | \| XR \u \| - \| X \|_F| \geq t]\leq 2\exp \left (\frac{-c_1 t^2}{K^4 \| X \|^2} \right ).$$
Setting $t = \eps \| X \|_F$, we get
\begin{align} \label{eqn:cc}
\Pr[ | \| XR \u \| - \| X \|_F| \geq \eps \| X \|_F ]\leq 2\exp\left(\frac{-c_1 \eps^2 \sr(X)}{K^4} \right ).
\end{align}
The remainder of the proof follows the framework of Theorem~\ref{thm:rand}. From the Hanson-Wright inequality from Theorem~\ref{thm:ccpHS}, we obtain that
$$\Prob{ \exists I \in [d], \ |I| = k, \ \| X R_I\| > C_1  \eps \norm{X}_\HS}  \le 2 \exp(-c_0 \eps^2 \sr(X)/K^4).$$
Verifying RIP consists now of applying~\eqref{eqn:cc} to a net in the set of sparse vectors on the unit sphere as in Theorem~\ref{thm:rand} and the details are omitted here.
\end{proof}

\section{Missing Details from Section~\ref{sec:had}} \label{app:had}
\newenvironment{pr2}{{\noindent\it {\bf Proof of Lemma~\ref{lem: psi_2}}.
}}{\hfill{\qed}\vspace{1ex}}
\begin{pr2}
We start with the decoupling argument as in~\citep{RVHanson-Wright}. Let $\delta_1,\dots,\delta_d$ be i.i.d.\ Bernoulli$(1/2)$ random variables and denote by $I$ the random set $I=\{i \in [d]: \delta_i=1\}$. Let $a>0$, and assume that for any realization of $I$,  
\[
 \E\left [\exp \left( \frac{1}{2a} \sum_{i \in I \, j \notin I} r_i r_j u_{ij} \right) \right] \le 2.
\]
Then by Jensen's inequality,
\begin{align*}
\E \left[ \exp \left( \frac{1}{2a} \sum_{(i,j) \in \binom{[d]}{2} } r_i r_j u_{ij} \right) \right ]
 & \le  \E \left [\exp \left(\frac{1}{2a}  \sum_{(i,j) \in \binom{[d]}{2} } 2 \E_{\delta} \left[ \delta_i (1-\delta_j) r_i r_j u_{ij}  \right] \right ) \right ] \\
 &  \le \E \left[ \E_{\delta} \left[ \exp \left(\frac{1}{a}  \sum_{(i,j) \in \binom{[d]}{2} }  \delta_i (1-\delta_j) r_i r_j u_{ij} \right) \right] \right ] \\
 &\le  \max_{I \subset [d]}  \E \left [ \exp \left( \frac{1}{2a} \sum_{i \in I \, j \notin I} r_i r_j u_{ij} \right)  \right ]\le 2.
\end{align*}
This calculation means that 
\begin{equation} \label{eq: dec}
 \norm{\sum_{(i,j) \in \binom{[d]}{2} } r_i r_j u_{ij}}_{\psi_2} 
 \le \max_{I \subset [d]}  \norm{\sum_{i \in I \, j \notin I} r_i r_j u_{ij} }_{\psi_2}.
\end{equation}
From now on, we fix $I \subset [d]$ and focus on bounding $\norm{\sum_{i \in I \, j \notin I} r_i r_j u_{ij} }_{\psi_2}$.

We will decompose the random variable $\sum_{i \in I \, j \notin I} r_i r_j u_{ij}$ into the sum of at most $4\sqrt{k}$ subgaussian random variables corresponding to disjoint parts of $\supp(\u)$ and bound the $\psi_2$-norms of these variables separately.
To this end, we introduce the following induction procedure. Let $P_1, P_2$ be coordinate projections of $I \times I^{\co}$ on the first and the second coordinates respectively. 
Set $F_1 = \supp(\u)$ and let $I_1 = P_1( F_1)$ and $J_0 = P_2( F_1)$. Choose a set $U_1 \subset F_1$ such that $P_1 (U_1)=I_1$ and for any $i \in I_1$, $|P_1^{-1}(i) \cap U_1|=1$. This means that we lift the projection $I_1$ in the set $F_1$ choosing a single point in $F_1$ for any point in the projection. The set $U_1$ can and will be chosen to contain the set $F_1 \cap (I \times \{j\})$ for some $j \in J_0$.
If $F_1=U_1$, we set $W_1=\varnothing$ and stop. Otherwise,  set $E_1=F_1 \setminus U_1$, and $J_1=P_2(E_1)$. By construction, $J_1 \subset J_0$, and $|J_1| \le |J_0|-1$. Similarly to what we have done before, choose a set $W_1 \subset E_1$ such that $P_2 (W_1)=J_1$ and for any $j \in J_1$, $|P_2^{-1}(j) \cap W_1|=1$. We choose the set $W_1$ so that it contains the set $E_1 \cap (\{i\} \times I^{\co})$ for some $i \in I_1$. If $E_1=W_1$, we stop. 
Otherwise, set $F_2=E_1 \setminus W_1$, and $I_2=P_1(F_2)$. Again by construction, $F_2 \subset F_1$, $I_2 \subset I_1$, and $|I_2| \le |I_1|-1$. This finishes the first step of the induction. After that, we repeat this process using the set $F_2$ in place of $F_1$. 
At each step of the process, we create disjoint sets $U_1,U_2, \ldots$ and $W_1,W_2, \ldots$ contained in $\supp(\u)$.
Since these sets  are non-empty, this process eventually stops. After it stops, we obtain a decomposition
\[ \supp(\u) \cap ( I \times I^{\co})= \bigcup_{t=1}^s U_t \cup  \bigcup_{t=1}^s W_t,\]
where all terms are disjoint. Moreover, the construction yields $I_1 =P_1(U_1) \supset I_2=P_1(U_2) \supset \cdots \supset P_1(U_s)$ with $|U_1|>|U_2|> \cdots >|U_s|$. Similarly, $J_1 =P_2(W_1) \supset J_2=P_1(W_2) \supset \cdots \supset P_2(W_s)$ with $|W_1|>|W_2|> \cdots >|W_s|$.
In view of the decomposition above, one of the quantities $\sum_{j=1}^s |U_j|$ or $\sum_{j=1}^s |W_j|$ is at most $|\supp(\u) \cap ( I \times I^{\co})|/2$. Assume that this is the former one. Then
\[
\frac{k}{2}
\ge \frac{|\supp(\u) \cap ( I \times I^{\co})|}{2} 
\ge \sum_{t=1}^s |U_t|
\ge \sum_{t=1}^s \left( |U_s|+s-t \right) 
\ge \frac{s(s-1)}{2},
\]
which implies $s \le \sqrt{k}+1$. The estimate in the other case is exactly the same. 

Let us use the decomposition we constructed to estimate $ \norm{\sum_{i \in I \, j \notin I} r_i r_j u_{ij} }_{\psi_2}$. For $t \in [s]$, denote 
\[ v_t= \sum_{(i,j) \in U_t} r_i r_j u_{ij}  \quad \text{and} \quad w_t= \sum_{(i,j) \in W_t} r_i r_j u_{ij}. \]
The random variables $r_i$ and $r_j$ are independent for all  $i \in I$ and $j \in I^{\co}$. Hence, by Hoeffding's inequality,
\begin{align*}
\norm{v_t}_{\psi_2} 
&\le \sup_{|a_1|, \ldots, |a_s| \le K} \norm{\sum_{(i,j) \in U_t} r_i a_j u_{ij} }_{\psi_2} 
\le 4\tau \sup_{|a_1|, \ldots, |a_s| \le K} \left( \sum_{(i,j) \in U_t} a_j^2 u_{ij} ^2 \right)^{1/2} \le 4\tau^2 \left( \sum_{(i,j) \in U_t} u_{ij} ^2 \right)^{1/2},\
\end{align*}
and $\norm{w_t}_{\psi_2}$ satisfies the same estimate. 
Finally,
\begin{align*}
\norm{\sum_{i \in I \, j \notin I} r_i r_j u_{ij} }_{\psi_2}
&\le \sum_{t=1}^s \norm{v_t}_{\psi_2}  + \sum_{t=1}^s \norm{w_t}_{\psi_2} \le 4\tau^2 \sum_{t=1}^s \left( \sum_{(i,j) \in U_t} u_{ij} ^2 \right)^{1/2} + 4\tau^2  \sum_{t=1}^s  \left( \sum_{(i,j) \in W_t} u_{ij} ^2 \right)^{1/2} \\
&\le 8\tau^2 \sqrt{s} \left( \sum_{i \in I \, j \notin I} u_{ij} ^2 \right)^{1/2}
\le 8\tau^2 \sqrt{s} \norm{\u}.
\end{align*}
Combining this with \eqref{eq: dec} and taking into account that $s \le \sqrt{k}+1$, we complete the proof of the lemma.
\end{pr2}

\begin{theorem} [Theorem~\ref{thm: RIP2} Restated]
Let $X$ be an $n \times p$ matrix, and let $R$ be a $p \times d$ random matrix with independent entries $R_{ij}$ such that $\E[R_{ij}]=0, \E[R_{ij}^2]=1, \text{and } |R_{ij}| \le \tau \text{ almost surely}$.
Let $\ell \geq 3$ be a constant. Let $\eps \in (0,1)$, and let $k \in \NNN$ be a number satisfying $\sr(X) \ge \frac{C \tau^{4\ell}k^3}{\eps^2}  \log \left( \frac{d^\ell}{k} \right)$.  Then with probability at least $1- \exp(-c \eps^2 \sr(X)/(k^2\tau^{4\ell}))$, the matrix $XR^{(\ell)}$ satisfies the $(k,\eps)$-RIP property, i.e.,  for any $\u \in S^{\binom{d}{\ell}-1}$ with $|\supp(\u)| \le k$,
\[(1-\eps) \norm{X}_{\HS} \le \norm{XR^{(\ell)}\u} \le  (1+\eps) \norm{X}_{\HS}.\]
\end{theorem}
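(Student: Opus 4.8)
The plan is to follow the proof of Theorem~\ref{thm: RIP} almost verbatim, substituting the sharp $\psi_2$-estimate of Lemma~\ref{lem: psi_2} with the cruder bound coming from the triangle inequality. As there, I would first linearize the order-$2\ell$ chaos $\norm{XR^{(\ell)}\u}^2$ by fixing a $k$-sparse $\u \in S^{\binom{d}{\ell}-1}$ and setting, for each row index $l\in[p]$,
\[
 y_l = \sum_{\alpha \in \binom{[d]}{\ell}} u_\alpha \prod_{i\in\alpha} R_{li},
\]
so that $XR^{(\ell)}\u = X\y$ with $\y=(y_1,\dots,y_p)$. The coordinates $y_l$ are independent (each depends only on row $l$ of $R$), and since, under independent centered unit-variance entries, $\E[\prod_{i\in\alpha}R_{li}\prod_{j\in\beta}R_{lj}]$ vanishes unless $\alpha=\beta$, a direct computation gives $\E[y_l]=0$ and $\E[y_l^2]=\norm{\u}^2$. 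This reduces the problem to a subgaussian-concentration statement for $\norm{X\y}$, which is exactly the setting of Corollary~\ref{cor:subgauss}.

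The second step is the $\psi_2$-estimate. Here I would use the observation (as in~\eqref{eqn:triangle}) that each monomial $\prod_{i\in\alpha}R_{li}$ is bounded by $\tau^\ell$ almost surely, hence subgaussian with $\psi_2$-norm $O(\tau^\ell)$, so that the triangle inequality together with Cauchy--Schwarz yields
\[
 \norm{y_l}_{\psi_2} \le \sum_{\alpha\in\supp(\u)} |u_\alpha|\,\norm{\textstyle\prod_{i\in\alpha}R_{li}}_{\psi_2} = O\!\left(\tau^\ell\norm{\u}_1\right) = O\!\left(\tau^\ell\sqrt{k}\,\norm{\u}\right),
\]
using $|\supp(\u)|\le k$. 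In contrast to the $\ell=2$ analysis, where the inductive support-decomposition of Lemma~\ref{lem: psi_2} produced the sharper $k^{1/4}$ factor, here I settle for the $\sqrt{k}=k^{1/2}$ factor. This is precisely what propagates into the worse $k^3$ dependence below, and it is the step I expect to be the true obstacle: the clean alternating row/column induction used for $\ell=2$ resists extension to three or more factors, so no sharper estimate appears to be available, which is why the superlinear dependence on $k$ is plausibly unavoidable.

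Finally, I would assemble the pieces exactly as in Theorem~\ref{thm: RIP}. Since $\E[y_l^2]=\norm{\u}^2$ and $\max_l\norm{y_l}_{\psi_2}^4 = O(\tau^{4\ell}k^2)$ for $\norm{\u}=1$, Corollary~\ref{cor:subgauss} gives, for a fixed $k$-sparse unit $\u$,
\[
 \Pr\!\left[\,\bigl|\norm{X\y}-\norm{X}_\HS\bigr| > \eps\norm{X}_\HS\,\right] \le 2\exp\!\left(-\frac{C_1\eps^2}{\tau^{4\ell}k^2}\,\sr(X)\right).
\]
Choosing a $(1/2C_2)$-net $\NN$ of the $k$-sparse vectors in $S^{\binom{d}{\ell}-1}$ of size $|\NN|\le\exp\!\bigl(k\log(C_0 d^\ell/k)\bigr)$ (using $\binom{d}{\ell}\le d^\ell$) and taking a union bound, the stable-rank hypothesis $\sr(X)\ge C\tau^{4\ell}k^3\eps^{-2}\log(d^\ell/k)$ makes the net-level failure probability at most $\exp(-c\eps^2\sr(X)/(k^2\tau^{4\ell}))$, which is the analogue of~\eqref{eqn:1a}. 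In parallel, a uniform spectral-norm bound over all $k$-column submatrices $\norm{XR^{(\ell)}_I}$ (the analogue of~\eqref{eqn:1b}) is obtained by running the same $\psi_2$-estimate against a $1/2$-net of $S^{k-1}$ and unioning over the $\binom{\binom{d}{\ell}}{k}$ choices of $I$. Combining the net estimate with this spectral-norm bound through the approximation argument of Theorem~\ref{thm:rand} transfers the two-sided inequality from $\NN$ to every $k$-sparse $\u\in S^{\binom{d}{\ell}-1}$, completing the proof.
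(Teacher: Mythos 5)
Your proposal is correct and follows essentially the same route as the paper: linearize $\norm{XR^{(\ell)}\u}^2$ via the independent row sums $y_l$, bound $\norm{y_l}_{\psi_2}$ by $O(\tau^{\ell}\sqrt{k}\,\norm{\u})$ through the triangle inequality on the support (exactly the paper's~\eqref{eqn:newtriangle}), and then rerun the net, spectral-norm, and approximation machinery of Theorem~\ref{thm: RIP}. The paper's own proof is precisely this, stated more tersely ("the rest proceeds as in Theorem~\ref{thm: RIP}"), so no further comparison is needed.
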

\begin{proof}
Let $\u \in \R^{\binom{d}{\ell}}$ be a vector with $|\supp(\u)| \le k$. Let $u_{i_1\dots i_\ell}$ be the $(i_1,\dots,i_\ell)$th element in $\u$ (with $1\leq i_1 <i_2 \dots < i_\ell \le d$). Let $l \in [p]$, and define 
\[y_l=\sum_{(i_1,\dots,i_\ell) \in \binom{[d]}{\ell} } R_{li_1} R_{li_2} \dots R_{li_\ell} u_{i_1\dots i_\ell}.\]
The random variables $y_l, \ l \in [p]$ are independent. From triangle inequality,
\begin{align} \label{eqn:newtriangle}
\| y_l \|_{\psi_2} & = \norm{\sum_{(i_1,\dots,i_\ell) \in \binom{[d]}{\ell} }   R_{li_1} R_{li_2} \dots R_{li_\ell} u_{i_1\dots i_\ell} }_{\psi_2} \le \sum_{(i_1,\dots,i_\ell) \in \supp(\u)}  \norm{R_{li_1} R_{li_2} \dots R_{li_\ell} u_{i_1\dots i_\ell} }_{\psi_2} \nonumber  = O(\tau^{\ell} \norm{\u}_1) \\ & = O(\tau^{\ell} \sqrt{k} \norm{\u}).
\end{align}
The rest of the proof proceeds as in Theorem~\ref{thm: RIP}, by using the  $\| y_l \|_{\psi_2}$ bound from~\eqref{eqn:newtriangle}. In this case, we use a net $\NNN$ on the set of $k$-sparse vectors in $S^{\binom{d}{\ell}-1}$.
\end{proof}

\subsection{Compressed Learning Background} \label{app:comp}
We discuss some background on compressed learning.
Consider the standard binary classification task with labeled data from $\mathcal{X} = \{(\x, \upsilon) : \x \in \R^d , \x \mbox{ is $k$-sparse }, \|\x\| \leq \alpha, \upsilon \in \{-1, 1\}\}$, with $\upsilon$ indicating the label on $\x$. We focus on linear loss functions having the form, $f(\langle \x,\theta \rangle;\upsilon)$ for $\theta \in \R^d$, where $f \,: \,  \R \times \{-1,1\} \rightarrow \R$ is assumed to be convex and Lipschitz in the first parameter.  This type of program captures a variety of important learning problems, e.g., the linear regression is captured by setting $f(\langle \x, \theta \rangle; \upsilon) = (\upsilon - \langle \x,\theta \rangle)^2$, logistic regression is captured by setting $f(\langle \x, \theta \rangle; \upsilon) = \ln(1+\exp(-\upsilon\langle \x,\theta \rangle))$, support vector machine is captured by setting $f(\langle \x, \theta \rangle; \upsilon) = \hinge(\upsilon \langle \x,\theta \rangle)$, where $\hinge(a) = 1-a$ if $a \leq 1$ and $0$ otherwise.

For a distribution $\mathcal{D}$ over $\mathcal{X}$, define, $f_\mathcal{D}(\theta) = \E_{(\x,\upsilon) \sim \mathcal{D}}[f(\langle\x,\theta\rangle;\upsilon)]$ as the generalization loss on the original domain for parameter $\theta$. Let $\theta^\star \in \mbox{argmin}_{\theta \in \\R^d}\, f_\mathcal{D}(\theta)$. The following theorem from~\citet{arora2018compressed}, building upon a result by~\citep{calderbank2009compressed}, relates the generalization loss obtained by solving the classification problem on the compressed domain to the loss on the original domain.
\begin{theorem} [Restated from Theorem 4.2~\citep{arora2018compressed}] \label{thm:arloss}
Let $M \in \R^{n \times d}$ be a $(k,\eps)$-RIP matrix. Let  $(\x_1,\upsilon_1),\dots,(\x_b,\upsilon_b)$ be a set of labeled data drawn i.i.d.\ from a distribution $\mathcal{D}$ over $\mathcal{X}$. Let $f$ be a $\lambda$-Lipschitz convex linear loss function and $\theta^\star \in  \mbox{argmin}_{\theta \in \R^d}\, f_\mathcal{D}(\theta)$ be its minimizer over $\mathcal{D}$. Let $\hat{\vartheta} \in \R^n$ be the minimizer of a classifier trained on $(M\x_1,\upsilon_1),\dots,(M\x_b,\upsilon_b)$ minimizing the $L_2$-regularized empirical loss function $\frac{1}{b} \sum_i f(\langle M\x_i,\vartheta \rangle;\upsilon_i) + \frac{1}{2C} \| \vartheta\|^2$, i.e., $\hat{\vartheta} \in \mbox{argmin}_{\vartheta \in \R^n}\, \frac{1}{b} \sum_i f(\langle M\x_i,\vartheta \rangle;\upsilon_i) + \frac{1}{2C} \| \vartheta\|^2$ (for appropriate constant $C$), then with probability at least $1-\delta$, we have
$$f_\mathcal{D}(\hat{\vartheta}) \leq f_\mathcal{D}(\theta^\star) + O \left ( \lambda \alpha \| \theta^\star\| \sqrt{\epsilon + \frac{1}{b}\log\frac 1 \delta} \right ),$$ 
where $f_\mathcal{D}(\vartheta) =  \E_{(\x,\upsilon) \sim \mathcal{D}}[f(\langle M\x,\vartheta\rangle;\upsilon)]$ is the generalization loss on the compressed domain for parameter~$\vartheta$.
\end{theorem}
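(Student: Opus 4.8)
The plan is to prove this by the standard compressed-learning regret decomposition, using $\vartheta_0 := M\theta^\star$ as a reference classifier in the measurement domain and comparing the learned $\hat\vartheta$ against it. The deterministic heart of the argument is the fact that RIP nearly preserves inner products between the sparse objects in play. By polarization, for $k$-sparse vectors $\x,\z$ one has $\langle M\x,M\z\rangle = \tfrac14(\norm{M(\x+\z)}^2 - \norm{M(\x-\z)}^2)$, and applying the RIP bounds to $\x\pm\z$ (at the usual doubled sparsity level $2k$) gives $|\langle M\x,M\z\rangle - \langle\x,\z\rangle| \le O(\eps)\norm{\x}\norm{\z}$. In particular $\norm{M\x}^2=(1\pm O(\eps))\norm{\x}^2$ on the data, so every compressed point obeys $\norm{M\x}\le(1+O(\eps))\alpha$, which will bound the inputs of the compressed learning problem.

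The second step controls the reference classifier, and this is where the main obstacle lies. What I actually need is $|\langle M\x,\vartheta_0\rangle - \langle\x,\theta^\star\rangle| = |\x^\top(M^\top M-\mathbb{I})\theta^\star|$ for each $k$-sparse data point $\x$; since $\theta^\star$ need not be sparse, the pairwise preservation does not apply directly, and the naive estimate carries an uncontrolled $\sqrt d$ factor from the off-support part of $\theta^\star$. The device that removes it is the representer theorem: the $L_2$-regularized objective forces its minimizer to lie in $\Span\{M\x_1,\dots,M\x_b\}$, so $\hat\vartheta = M\hat\w$ with $\hat\w\in\Span\{\x_1,\dots,\x_b\}$, and the reference may likewise be taken as the compression of a weight in the data span. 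Then every prediction expands as a linear combination of data–data inner products $\langle M\x_i,M\x_j\rangle$, each between genuinely $k$-sparse vectors, so the preservation from step one applies term by term (with the representer coefficients controlled through the regularizer). The $\lambda$-Lipschitzness of $f$ then converts this to a loss bound, yielding $f_\mathcal{D}^{\mathrm{comp}}(\vartheta_0)\le f_\mathcal{D}(\theta^\star) + O(\lambda\alpha\norm{\theta^\star}\eps)$ after taking expectations.

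Third, I would assemble the chain
$$ f_\mathcal{D}^{\mathrm{comp}}(\hat\vartheta)\le \widehat{f}^{\mathrm{comp}}(\hat\vartheta)+\Delta_{\mathrm{gen}} \le \widehat{f}^{\mathrm{comp}}(\vartheta_0)+\Delta_{\mathrm{gen}} \le f_\mathcal{D}^{\mathrm{comp}}(\vartheta_0)+2\Delta_{\mathrm{gen}}, $$
where the middle inequality is optimality of $\hat\vartheta$ for the regularized empirical risk (so the term $\tfrac1{2C}\norm{\vartheta_0}^2$ must be absorbed, using $\norm{\vartheta_0}=\norm{M\theta^\star}\le(1+O(\eps))\norm{\theta^\star}$), and $\Delta_{\mathrm{gen}}$ is a uniform-convergence/stability bound for regularized ERM over the bounded linear class with inputs bounded by $(1+O(\eps))\alpha$, of the standard form $O(\lambda\alpha\norm{\theta^\star}/\sqrt b + \sqrt{\log(1/\delta)/b})$ with probability $1-\delta$. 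Combining with the step-two estimate gives an excess risk controlled by a compression bias of order $\eps$ plus an estimation error of order $1/\sqrt b$. The square root in the stated bound arises because the regularized objective is $1/C$-strongly convex: an $O(\eps)$-level distortion of its value transfers to the minimizer, and hence to the loss, only with a square root, and balancing this bias against the estimation error while choosing $C$ appropriately collects both contributions under the single root, producing $O\big(\lambda\alpha\norm{\theta^\star}\sqrt{\eps+\tfrac1b\log\tfrac1\delta}\big)$. The representer-theorem reduction to sparse–sparse inner products is the step I expect to require the most care.
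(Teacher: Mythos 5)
You should first note a point of fact: the paper does not prove this statement at all. Theorem~\ref{thm:arloss} is imported from Theorem 4.2 of \citep{arora2018compressed}, which in turn builds on \citep{calderbank2009compressed}; Appendix~\ref{app:comp} merely restates it. So the comparison must be against the proof in those references, and there your reconstruction captures the genuine strategy: polarization of RIP at sparsity level $2k$ to preserve sparse--sparse inner products, the representer theorem to force the regularized minimizers into the span of the (sparse) training data so that every prediction decomposes into preserved Gram entries $\langle M\x_i, M\x_j\rangle$ with coefficients controlled by the regularizer, a stability bound for regularized ERM supplying the $1/\sqrt{b}$ term, and a choice of $C$ balancing $\norm{\theta^\star}^2/2C$ against $C$ times the distortion, which is what actually produces the $\sqrt{\eps}$.

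That said, your assembled chain in step three contradicts the fix you correctly identified in step two, and this is a genuine gap as written. You define $\vartheta_0 := M\theta^\star$ and absorb the regularization term ``using $\norm{\vartheta_0} = \norm{M\theta^\star} \le (1+O(\eps))\norm{\theta^\star}$.'' That inequality is not available: $\theta^\star$ need not be $k$-sparse (as you yourself observe), RIP says nothing about $\norm{M\theta^\star}$, and in general one only has $\norm{M\theta^\star} \le \norm{M}\,\norm{\theta^\star}$ with $\norm{M}$ potentially of order $\sqrt{d/k}$; the same problem infects the comparison of $f_\mathcal{D}^{\mathrm{comp}}(M\theta^\star)$ with $f_\mathcal{D}(\theta^\star)$, which needs $|\x^\top(M^\top M - \mathbb{I}_d)\theta^\star|$ small. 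The correct routing, as in \citep{calderbank2009compressed}, goes through the \emph{data-domain} regularized ERM solution $\hat{\w}$: by the representer theorem $\hat{\w}$ lies in the span of the sparse data, $M\hat{\w}$ serves as the compressed reference, its norm and losses are controlled term by term through the Gram entries with dual coefficients bounded by $C\lambda/b$, and the final comparison of $f_\mathcal{D}(\hat{\w})$ to $f_\mathcal{D}(\theta^\star)$ is a standard oracle inequality carried out entirely in the data domain, requiring no sparsity of $\theta^\star$. Replacing your reference $\vartheta_0 = M\theta^\star$ with $M\hat{\w}$ repairs the argument without changing its architecture; relatedly, your heuristic that strong convexity ``converts a value distortion into a square root'' is not the operative mechanism --- the square root falls out of optimizing the free parameter $C$.
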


%The following corollary follows by using the RIP result from Theorem~\ref{thm:sparse} along with Theorem 4.2 from~\citet{arora2018compressed}. %\begin{corollary} \label{cor:emb}
%Let $X \in \R^{n \times p}$ be a pretrained word embedding matrix satisfying $\sr(X) = \Omega(k/\epsilon^2 \log(d/k\epsilon))$. Let $R = \Omega \odot \Gamma \in \R^{p \times d}$ be a random matrix as constructed in Theorem~\ref{thm:sparse}. Let $\mathcal{S} = (\x_1,y_1),\dots,(\x_b,y_b)$ be the bag vectors drawn i.i.d.\ from a distribution $\mathcal{D}$ over $\mathcal{X}$ (representing $b$ documents and their labels).
%Let $f$ be a $\lambda$-Lipschitz convex loss function and $\theta_0 \in \R^d$ is its minimizer over $\mathcal{D}$. Let $\hat{\theta}$ be a classifier trained on $(XR\x_1,y_1),\dots,(XR\x_b,y_b)$ minimizing the empirical loss function $\frac{1}{b} \sum_i f((\x_i,y_i);\theta) + \frac{1}{2C} \| \theta\|^2$, then with probability at least $1-\delta$, we have
%$$f_\mathcal{D}(\hat{\theta}) \leq f_\mathcal{D}(\theta_0) + O \left ( \lambda \alpha \| \theta_0\| \sqrt{\epsilon + \frac{1}{b}\log1/\delta} \right ),$$
%for appropriate constant $C$.
%\end{corollary}
%Therefore, for a fixed $\lambda$ and $\alpha$, we get that the deviation in the distributional loss by using the embeddings in the classification task vs.\ using a simple bag representation is at $\approx O(\sqrt{\epsilon})$.
\end{document}